\newif\ifSUPP
\theoremstyle{thmstyleone}%
\theoremstyle{thmstyletwo}%
\newtheorem{example}{Example}%
\theoremstyle{thmstylethree}%
\title{Time-Constrained Learning}
\author[1]{Sergio Filho  }
\author[1]{Eduardo Laber} 
\author[1]{Pedro Lazera} 
\author[1]{Marco Molinaro} 
\affil[1]{Computer Science Department, PUC-RIO}
\affil[ ] {\{sfilhofreitas,eduardo.laber1,molinaro.marco,pedrolazera\}@gmail.com}
\begin{document}

%\title[Article Title]{Article Title}

%%=============================================================%%
%% Prefix	-> \pfx{Dr}
%% GivenName	-> \fnm{Joergen W.}
%% Particle	-> \spfx{van der} -> surname prefix
%% FamilyName	-> \sur{Ploeg}
%% Suffix	-> \sfx{IV}
%% NatureName	-> \tanm{Poet Laureate} -> Title after name
%% Degrees	-> \dgr{MSc, PhD}
%% \author*[1,2]{\pfx{Dr} \fnm{Joergen W.} \spfx{van der} \sur{Ploeg} \sfx{IV} \tanm{Poet Laureate} 
%%                 \dgr{MSc, PhD}}\email{iauthor@gmail.com}
%%=============================================================%%

%%==================================%%
%% sample for unstructured abstract %%
%%==================================%%

\newtheorem{thm}{Theorem}
\newtheorem{cor}{Corollary}
\newtheorem{prop}{Proposition}
\newtheorem{fact}{Fact}
\newtheorem{claim}{Claim}
\newtheorem{obs}{Observation}
\newtheorem{question}{Question}
\newcommand{\remove}[1]{}
\newcommand{\ex}[1]{\bigskip \noindent {\bfseries #1.}}
\newcommand{\exxB}[2]{\bigskip \noindent {\bfseries #1.} {(\bfseries #2)}}
\newcommand{\exx}[1]{\bigskip \noindent {(\bfseries #1)}}
\newcommand{\E}[0]{\ensuremath \mathbb{E}}
\newcommand{\R}[0]{\ensuremath \mathbb{R}}
\newcommand{\e}{\varepsilon}
\newcommand{\ip}[2]{\langle #1, #2 \rangle}
\newcommand{\cx}{\mathrm{cx}}
\newcommand{\F}{\mathcal{F}}
\newcommand{\G}{\mathcal{G}}
\newcommand{\Var}{\mathrm{Var}}
\newcommand{\ones}{\bm{1}}
\newcommand{\blue}[1]{\textcolor{blue}{#1}}
\newcommand{\red}[1]{{\color{red} #1}}
\newcommand{\magenta}[1]{{\color{magenta} #1}}
\newcommand{\cL}{\mathcal{L}}
\newcommand{\cX}{\mathcal{X}}
\newcommand{\cY}{\mathcal{Y}}
\newcommand{\OPT}{\textrm{OPT}\xspace}
\newcommand{\mom}[1]{{\left\vert\kern-0.25ex\left\vert\kern-0.25ex\left\vert #1 \right\vert\kern-0.25ex\right\vert\kern-0.25ex\right\vert}}
\newcommand{\overbar}[1]{\mkern 1.5mu\overline{\mkern-1.5mu#1\mkern-1.5mu}\mkern 1.5mu}
\newcommand{\overtilde}[1]{\mkern 1.5mu\widetilde{\mkern-1.5mu#1\mkern-1.5mu}\mkern 1.5mu}

\newcommand{\err}{\textup{err}}
\newcommand{\Lw}{\textup{Lw}}
\newcommand{\Rw}{\textup{Rw}}
\newcommand{\ERM}{\textup{PAC}}
\newcommand{\cD}{\mathcal{D}}

\newcommand{\OSCT}{{\tt OSCT}\xspace}

\newcommand{\TCT}{{\tt TCT}\xspace}
\newcommand{\TCTbase}{{\tt TCTbase}\xspace}
\newcommand{\Train}{\textrm{Train}}
\newcommand{\PACt}{{\tt TBatch}\xspace}

\renewcommand{\S}{\mathcal{S}}
\newcommand{\cE}{\mathcal{E}}
\newcommand{\bad}{\textrm{Bad}}
\newcommand{\cA}{\mathcal{A}}
\newcommand{\cH}{\mathcal{H}}
\newcommand{\wrong}{\textrm{wrong}}
\newcommand{\D}{\mathcal{D}}

\newcommand{\TS}{\mathcal{TS}}
\newcommand{\iweight}{\omega}
\newcommand{\basic}{\mathcal{A}_{\textrm{base}}}
\newcommand{\agno}{\mathcal{A}_{\textrm{agno}}}
\newcommand{\rand}{\mathcal{A}_{\textrm{rand}}}

\newcommand{\improved}{\mathcal{A}_{\textrm{err}}}
\newcommand{\closer}{\mathcal{A}_{\textrm{close}}^{\alpha}}
\newcommand{\agnocloser}{\mathcal{A}_{\textrm{ag-close}}^{\alpha}}

\newcommand{\learner}{Learner\xspace}
\newcommand{\teacher}{Teacher\xspace}
\newcommand{\full}{{\tt Full}}

\newcommand{\mnote}[1]{{\color{magenta} MARCO: #1}}
\newcommand{\replace}[2]{{\color{magenta} #1} {\color{gray} \sout{#2}}}

\maketitle

\begin{abstract}
Consider a scenario in which we have a huge
labeled dataset ${\cal D}$ and  a limited time to train some given learner   using ${\cal D}$.
 Since we may not be able to use the whole dataset, how should we proceed?
Questions of this nature motivate the definition of the  Time Constrained Learning Task (TCL): 
 Given a dataset ${\cal D}$ sampled from an unknown distribution $\mu$, a learner ${\cal L}$  and a time limit $T$,
 the goal is to obtain in at most $T$ units of time the classification model with highest
 possible accuracy w.r.t. to $\mu$, among those that can be built by ${\cal L}$  using the dataset ${\cal D}$.

We propose \TCT, an algorithm for the TCL task designed based that on principles from Machine Teaching. 
We present  an experimental study involving 5 different Learners and 20 datasets where we show that 
\TCT consistently outperforms two other algorithms: the first is a
Teacher for black-box learners proposed in [Dasgupta et al., ICML 19]
and the second is a natural adaptation of random sampling for the TCL setting. 
We also compare \TCT with 
{\tt Stochastic Gradient Descent} training -- our method is again consistently better.

While our work is primarily practical, we also show that a stripped-down version of \TCT has  provable guarantees.
Under reasonable assumptions, the time our algorithm takes to achieve a certain accuracy is never much bigger than the time it takes the batch teacher (which sends a single batch of examples) to achieve similar accuracy, and in some case it is almost exponentially better.
\end{abstract}

\section{Introduction} 
A common problem that arises in many supervised machine learning applications is 
the difficulty of acquiring labeled data. To overcome this problem techniques as active learning and 
semi-supervised learning have been successfully employed. 

However, in other situations one faces the complementary scenario where  a large number of 
labeled examples are available but the computational resources to train a model over them are limited.
That may arise when one has a limited financial budget to train a model on a  cloud; in this case the limited budget naturally
translates into a  time limit.
In another setting, very current, one may set a limit on the training time to reduce the  environmental impact. There are also applications such as ad advertisement and learning from search logs where the number of labeled examples is massive and repeated training is necessary to track new user behaviour distributions, so that  training
with all labeled examples is not feasible.

For a more concrete situation, assume that  our financial budget only allows  4 hours of cloud usage but we want to train a random forest on
a huge set (e.g. billions) of examples.  Given that we may not be able to use the whole dataset, how should we proceed?

%In situations where model training is particularly resource-intensive, it has become increasingly popular to employ cloud services paid on an %hourly basis. Thus, a budget constraint translates directly into a time limit available for model training.

Questions of this nature are faced by machine learning practitioners.
This is the  motivation behind the Time Constrained Learning task (TCL for short), which is the focus of our work.
We consider the  following formulation for TCL:

\medskip

{\em Input.} A dataset ${\cal D}$ of labeled examples sampled from an unknown probability distribution $\mu$, a learner ${\cal L}$  and a time limit $T$

\medskip

{\em Output.} The most accurate classification model w.r.t. $\mu$
that can be built by Learner ${\cal L}$, spending at most $T$ time units.

\medskip

TCL admits two natural variations: one in which we have a considerable amount of information about the Learner (e.g. its hypothesis class and training/classification time complexity) and the other in which the information is limited (black-box Learners). We are interested in the latter due to its wider applicability. Given the lack of information about the Learner, the following question arises:
\begin{center}
{\em Is it possible to outperform random sampling (or some natural variation) for TCL?} 
\end{center}

To give a positive answer to this question,  we approach the TCL task 
 via 
a Machine Teaching framework \cite{journals/ngc/Shinohara91,DBLP:journals/corr/abs-1801-05927}
where 
a \teacher and a \learner interact over multiple  rounds and in each of them the former sends selected examples to the latter, which returns a trained model. 
Our goal is then to design a teacher that guides the Learner towards a model with high accuracy w.r.t. $\mu$ in a way that is as time efficient as possible.

%{In general, the Teacher's goal is to guide the Learner towards  a target hypothesis.
%For TCL the Teacher should guide the Learner towards a hypothesis with high accuracy with respect to
%the unknown distribution probability $\mu$.}\mnote{Mudei um pouco o enfoque: ao inves de englobar Machine Teaching %generico, foquei mais na forma que Mach Teach toma no nosso problema.}

% natural target is  the  hypothesis
%that would be  learned by \learner had it had enough time  to train over the full training set
%${\cal D}$.

% There has been renewed interest in Machine Teaching, especially in the iterative setting~\citep{conf/icml/%LiuDHTYSRS17,NIPS2018_7421,conf/icml/LiuDLLRS18}. 
 
  While most of the initial  works on Machine Teaching  \citep{journals/ngc/Shinohara91,journals/jcss/GoldmanK95} assume that the \teacher has significant knowledge about the \learner, there have been several recent advances on the case of interest where there is limited knowledge about the \learner \citep{DBLP:conf/ijcai/MeloGL18,conf/icml/LiuDLLRS18,conf/icml/Dasgupta0PZ19,DBLP:conf/icml/CicaleseFLM20,DBLP:conf/ijcai/DevidzeMH0S20}.

%In particular, \cite{conf/icml/Dasgupta0PZ19,DBLP:conf/icml/CicaleseFLM20} 
% addressed machine teaching with a {\em black box} learner, where the only knowledge of the \teacher about the \learner
% is that it always returns the  hypothesis that makes the minimum number of errors on the  examples received so far. The %authors provide bounds on the number of examples required
%to teach the target hypothesis to {\em worst-case} learners. 

However, these methods do not exactly address the TCL task and instead focus on minimizing the number of samples 
sent to the Learner, that is, in the {\em teaching set.} 
 While the size of the teaching set and time complexity are related, there are factors, 
such as training time and the amount of interaction
with the Learner,
that should be considered when the latter is taken into account. This aspect is illustrated in the beginning of Section \ref{sec:EfficientTraining}.
Indeed, methods that disregard these factors are not suitable for TCL, 
as indicated by our experiments.
Thus, new methods shall be developed to properly handle time constrained learning.

\subsection{Our contributions.}
Our main contribution is the algorithm Time Constrained Teacher (\TCT) for the Time Constrained Learning task that is designed based on well-established  principles/ideas from Computational Learning Theory
and Machine Teaching. 
We compare {\tt TCT} with two other Teachers,  using 5 different Learners over 20 datasets. 
The first Teacher  can be viewed as an adaptation of random sampling for
the TCL task and, thus, is a natural baseline.
The second, denoted here by \OSCT, is based on  an algorithm for  the Online Set Covering problem \citep{journals/siamcomp/AlonAABN09}. Its use for teaching black-box learners (aiming at minimizing the size of the teaching set) was proposed in  \citep{conf/icml/Dasgupta0PZ19}
and refined/extended in \citep{DBLP:conf/icml/CicaleseFLM20}. 

For two of the Learners,  namely SVM and Logistic Regression, we also compare the error achieved by {\tt TCT}
with that achieved by training them  via Stochastic Gradient Descent (SGD).

Our algorithm \TCT consistently outperforms its {competitors and, for some Learners, perhaps surprisingly, \TCT is competitive even against the ``oracle'' teacher that  sends (in one single batch) $m^*$ random samples to the Learner, where $m^*$ (which is guessed
by the oracle)   is the largest number of examples that the Learner can handle within the time limit.
A nice feature of our algorithm is its simplicity:  it just employs one parameter that is easy to set and it does not require any information about the learner. Hence, we believe that it
could be easily implemented as a wrapper in machine learning libraries to address
Time Constrained Learning -- the user provides the time limit and the
Learner that she is comfortable with, and then \TCT completes the job.

While our work is primarily practical, we also show 
that a stripped-down version of \TCT has  provable guarantees.  Under reasonable assumptions, the time our algorithm takes to achieve a certain accuracy is never much bigger than the time it takes the batch teacher (which sends a single batch of examples) to achieve similar accuracy.  Moreover, for learning a threshold function, a canonical problem in Active Learning and Machine Teaching, our algorithm is almost exponentially faster, despite not being tailored to this hypothesis class.

%\subsection{Related work}

%As a side note, we understand that our work  reinforce the relevance of the emerging field of Machine teaching.  While %there is a considerable amount of work in the field, from the best of our knowledge,
% just a few of them have a more concrete practical impact.

\subsection{Related work.}

Although some Learners admit online versions that can be employed to   TCL task,
we are not aware of works that directly address this task for black-box Learners.
In \citep{conf/icml/Dasgupta0PZ19}, this task is mentioned as a potential application 
for  their proposed Teacher.
 \citep{DBLP:conf/sdm/DuL11}    mention  the possibility of minimizing the training time
rather than the size of the teaching set, although it handles the latter.

%\citet{conf/icml/LiuDLLRS18} present a teacher for the particular class of learners that update
%their models via SGD. Their experiments compare the number of  iterations (model updates)
%rather than computational time.\lnote{verify}

On the other hand, there are quite a few papers aiming at minimizing the size of the teaching set. 
Among these works, we can find some that  consider the batch setting \cite{conf/icml/SinglaBBKK14,DBLP:conf/aistats/MaNRZ018} and others that consider the sequential one \citep{DBLP:conf/sdm/DuL11,conf/icml/LiuDLLRS18,NIPS2018_7421}. 
We can also find Teachers that assume a considerable amount of information about the Learner \citep{conf/icml/SinglaBBKK14} as well as some that require very limited information \citep{DBLP:conf/sdm/DuL11,conf/icml/Dasgupta0PZ19,DBLP:conf/icml/CicaleseFLM20}

\remove{These works can be split according to whether they consider the batch setting \citep{conf/icml/SinglaBBKK14,DBLP:conf/aistats/MaNRZ018} or the iterative one \citep{DBLP:conf/sdm/DuL11,conf/icml/LiuDLLRS18,NIPS2018_7421}. They can also be split by taking into account  
whether they assume a considerable amount of information about the Learner \citep{conf/icml/SinglaBBKK14} 
 or very limited information \citep{DBLP:conf/sdm/DuL11,conf/icml/Dasgupta0PZ19,DBLP:conf/icml/CicaleseFLM20}.\mnote{Alguns desses artigos aparecem em uma forma de fazer o split mas nao na outra. Fica esquisito?}
}
  
Most of these Teachers do not admit a simple adaptation for the TCL task 
because they require a lot of information about the Learner's hypothesis class or training algorithm (e.g., \citep{conf/icml/LiuDLLRS18} considers learners that use SGD for training).
 One exception is the method   
\OSCT, from \cite{conf/icml/Dasgupta0PZ19} and \cite{DBLP:conf/icml/CicaleseFLM20}, that assumes very limited information about the Learner. Experiments from the latter,   using 
{\tt LGBM} and {\tt Random Forest} as Learners,  
show that \OSCT requires significantly fewer examples to reach a given 
accuracy (on the training set) than a Teacher that sends random examples.
Despite these gains in \emph{number of examples}, on \emph{time constrained} learning \OSCT performs much worse than our algorithm \TCT, and is even worse than random sampling,  as we show in our experiments.

 Our Time Constrained Learning scenario can be related
to the traditional Active Learning scenario \cite{settles2009active}.
On one hand, the scenarios are quite different: in Active Learning, labeled examples are the constrained resource, while in our setting  they are abundant. 
On the other hand, we cannot use
all the available labeled examples and, thus,
we need to choose (carefully) the ones to be used for training; this is related to the
key aspect of  Active Learning, that is,
the selection of informative examples.
Thus,  Active Learning strategies could be used 
to select examples for Time-Constrained Learning. One issue here is that classical strategies, 
e.g. uncertainty sampling,
may require assumptions about the Learner (e.g. 
the ability to produce class probabilities), which is not aligned with our goal of training {\bf black-box} learners. One strategy that does not need these assumptions 
is Query by Committee, but it is too time-consuming for our setting. In Section \ref{sec:experiments}, we present  experiments
where we evaluate a variation of 
 \TCT that uses ideas from
 a classical 
active learning strategy to select examples.

\remove{In any event,
mixing ideas of active learning with those
proposed here seems to be a promising direction 
as indicated by the preliminary experiments that can be found in Section 
\ref{sec:active-learning}.
}
\remove{
In Section \ref{sec:experiments}, we present  experiments
where the an active learning strategy is combined with the \TCT framework. For some Learners we observed some gains while for other some loss, so that we do not have conclusive results.} 

\remove{

This extra information allows strategies
%(e.g. selection of the best model via statistical tests) \red{[M: nao entendi]} 
that are not \blue{pertinent} in the active learning setting. Moreover, 
%the use of
some
active learning strategies, as uncertainty sampling,
may require assumptions about the Learner (e.g. 
producing class probabilities)
that are not required by TCT -- this requirement is not aligned with our goal of training {\bf black-box} learners.
Query by Committee, on the other hand,
does not need these assumptions but it seems to be too time-consuming for our setting.

Another issue is that coming up with an active learning baseline for the time-constraint setting does involve several design choices, as frequency of retraining, strategy for preventing selection bias, number of samples
from which we select the most informative ones. The point here is that we might end up with a variation of TCT in which examples are selected based on some active learning strategy. During the rebuttal we run preliminary experiments on our datasets with a variation of TCT in which we replace
 ``wrong examples'' \blue{(lines 15-16)} by examples selected via  
uncertainty sampling (smallest margin)  --
the use of wrong examples yielded  much better results
for LGBM and Random Forest and worse results for
Logist. Regression. For decision trees
\blue{they were comparable}
%there was a balance 
while for SVM class. probabilities are not directly available.

That said, we believe that combining ideas from active learning as uncertainty sampling with those  employed in the paper (doubling, model selection via statistical tests and mixing wrong and random examples) is an interesting and promising direction for future research.
This will be discussed in a Conclusions section that will be added to the paper.

}

 \remove{
Some of these works report experiments where   the proposed  Teachers require  fewer examples than  random sampling  
 \citep{DBLP:conf/sdm/DuL11,conf/icml/SinglaBBKK14,DBLP:conf/icml/CicaleseFLM20}. 
In particular, the
experiments from \citep{DBLP:conf/icml/CicaleseFLM20},   using 
{\tt LGBM} and {\tt Random Forest} as Learners,  
show that \OSCT requires significantly fewer examples to reach a given 
accuracy (on the training set) than a Teacher that sends random examples.
Despite these gains, \OSCT spends a significant amount of time training and classifying and, hence, it performs
much worse than random sampling  for Time Constrained Learning,  as we show in our experiments.   
}

%We finally mention that there are related lines of research whose
%goal is to 
% some related work that 

%In \citep{DBLP:conf/aistats/MaNRZ018} shows that it is possible to improve 
%accuracy by training on a subset of a dataset, that is, discarding examples can help.

\section{A Teacher for the Time Constrained Learning task}
\label{sec:EfficientTraining}

In this section we describe our algorithm \TCT. Its 
 design takes into account the following  principles:

\begin{enumerate}
\item[P1] The larger the number of examples, the better the learning;
\item[P2] Examples where a Learner fails are more helpful to improve its accuracy   than those in which
it succeeds;
\item[P3] Examples selected to train a Learner should follow approximately the probability distribution $\mu$,
employed in TCL's definition.	
\end{enumerate}

Principle 1 is very natural and justified by both empirical studies and standard  statistical guarantees.
However, to obtain a model trained on a large number of examples by a time limit $T$, it is important not to send too few examples in each round, as  illustrated below.

\begin{example} Consider a learner with quadratic running time, that is, $\Theta(m^2)$ time units are required to train a model when $m$ examples are available.
If the teacher always sends one single example per round, then the largest model will have
$O(\sqrt[3]{T})$ examples, where $T$ is the time limit. On the other hand, if the teacher always doubles the size of the  set
of examples sent to the learner, then the largest model will have $\Omega(\sqrt{T})$ examples.
\end{example}

We note that in contrast to this idea, previous works on Machine Teaching that focus on minimizing the size of the teaching set
suggest the addition of few examples per round \citep{DBLP:conf/sdm/DuL11,conf/icml/Dasgupta0PZ19,DBLP:conf/icml/CicaleseFLM20}.

Principle 2 is motivated
by human learning and also by works on Machine Teaching  \citep{conf/icml/Dasgupta0PZ19,DBLP:conf/icml/CicaleseFLM20} and  boosting \citep{DBLP:conf/ijcai/Schapire99}, where  wrong examples get larger priority than those in which the Learner succeeds.
While wrong examples are helpful, their acquisition may be expensive in terms of computational time  since finding them may require
the classification of  a large number of examples. Thus, we need to balance between the usefulness of having wrong examples and the computational
cost to get them. Moreover, we have to add wrong examples with parsimony, otherwise we may build models using a set of examples that is not representative of the real population and hence with potential poor performance on unseen data. This is captured in Principle 3. Indeed, issues with using biased samples for learning is well-documented in active learning~\citep{dasguptaTwoFaces}. %\url{https://www.researchgate.net/profile/Emre-Velipasaoglu-2/publication/221613989_Performance_thresholding_in_practical_text_classification/links/5411485a0cf2df04e75d754a/Performance-thresholding-in-practical-text-classification.pdf}. 

%
%In addition to capture the aforementioned  principles, we want to design a Teacher that is
%simple to use in the sense that it does not require  a lot of tuning or a deep understanding of how
%it works.

Based on these observations, we designed \TCT, presented in 
Algorithm \ref{alg:Teacher}. It receives an integer $m_0$ (number of examples for the first model), a learner $L$,
a parameter $\alpha \in [0,1]$ that defines the ratio between wrong and random examples
provided to the learner at each round; a time limit $T$ and a pool $P$
of labeled examples. We note that $\alpha$ allows a trade-off between following Principles 2 and 3.
For the ease of presentation we assume that the pool $P$ is large enough
so that it is always possible to obtain unselected examples from it.

At each round, {\tt TCT} receives from the learner  a model $M$,  trained on a set
$S$, and then builds and provides to the learner a new set of examples  that contains (approximately) $\alpha|S|$ examples in which $M$ fails and also $(1-\alpha)|S|$ random unseen examples from $P$.
 More precisely, {\tt TCT}
 first employs $M$ 
to classify  a random set $A_1$ containing $|S|$ examples (Lines \ref{Class11} -\ref{Class12}). As a result, it also obtains an unbiased estimation $acc_1$ for the accuracy of $M$.
Next, based on $acc_1$ {\tt TCT}  builds a second set of samples $A_2$, which is done to guarantee that in expectation we have at least $\alpha |S|$ examples in $A_1 \cup A_2$ where $M$ fails. It then classifies this set $A_2$ using $M$ (Lines \ref{Class14}-\ref{Class15}), obtaining a second unbiased estimation for the accuracy of $M$,
which is combined with $acc_1$ to evaluate whether the current model is better than the best so far
(Lines \ref{BestModelStart}-\ref{BestModelEnd}). We note that the {\tt CurrentEstimator} at Line \ref{confid95}
corresponds to the lower limit of the $95\%$ confidence interval of $acc$.  
To update $S$, {\tt TCT} first adds a set $U$ containing   $(1-\alpha)|S|$ random examples from $A_1$.
Then, it adds   $\alpha |S|$ examples from $A_2 \cup (A_1 \setminus U)$, 
prioritizing the wrong ones.

\remove{
At the beginning of each round, Teacher calls  the Learner to train a new model $M$
using a set of examples $S$ (line \ref{Train}). 
Then,  Teacher uses $M$ to classify a batch of random examples and,
next, selects some of them to build the new set of examples that will
be employed by Learner in the next round (lines \ref{Class11} - \ref{NewSet}). More precisely,
 Teacher first employs $M$ 
to classify  a random set of $|S|$ examples (lines \ref{Class11}-\ref{Class12}). As a result, it obtains
an unbiased estimation $acc_1$ for the accuracy of $M$ as well as the set of examples for which $M$ fails.
By using $acc_1$, it defines  the number of extra examples that will be classified
in order to obtain the desired balance (according to $\alpha$) between wrong and random examples (lines \ref{Class21}-\ref{Class22}).
This additional classification yields a second unbiased estimation for the accuracy of $M$,
which is combined with $acc_1$ to evaluate whether the current model is better than the best so far
(lines \ref{BestModelStart}-\ref{BestModelEnd}). 

}

% By simple we mean that both its implementation and usage should be simple. We want to avoid parameter's tuning and %specific knwoledge about the Learner.

\begin{algorithm}
  \caption{{\tt TCT} ($m_0$: integer; $L$: Learner; $\alpha$: real parameter; $P$: pool of examples; $T$: time limit )}
  \begin{algorithmic}[1]
  	\small

    \State $S \leftarrow$ set of $m_0$ random examples from $P$ \label{line1}
  
     \Repeat
        \State $M \leftarrow $ model trained by learner $L$  on set $S$ \label{Train}
        
        \State $A_1 \leftarrow  $ set of $|S|$ random examples from $P$ that have not been selected so far  \label{Class11} 

       \State $acc_1 \leftarrow $ Classify$(M,A_1)$ \label{Class12}

%S(\alpha)|S|(1-acc_1)  (1/(1-acc_1) -1)= (acc_1
   
       \State $A_2 \leftarrow  $ set of  $ \alpha |S|acc_1 /(1-acc_1)$ random examples  from $P$ that have not been selected so far  \label{Class14} 

        \State $acc_2 \leftarrow $ Classify$(M,A_2)$  \label{Class15}

           \State $acc \leftarrow (acc_1 |A_1| + acc_2|A_2|) / ( |A_1|+|A_2|)$
                      
        \State  {\tt CurrentEstimator}  $\leftarrow acc - 1.96 \sqrt{\frac{acc(1-acc)}{|A_1|+|A_2|}} $ \label{confid95} \label{conf-bound}
        
%         \State  {\tt CurrentEstimator} $\leftarrow$ lower limit of $95\%$ 
%          confidence interval for $acc$.

         \If{{\tt ElapsedTime} $\le  T$ AND  (first round OR  {\tt CurrentEstimator} $>$ {\tt BestEstimator})} \label{BestModelStart}
              \State {\tt BestModel} $\leftarrow M$
              \State {\tt BestEstimator} $\leftarrow$ {\tt CurrentEstimator} 
         \EndIf  \label{BestModelEnd}

       \State $U \leftarrow $ set of $(1-\alpha)|S|$ random  examples from $A_1$  \label{Class13}

        \State $V \leftarrow $ list of examples in $A_2 \cup (A_1 \setminus U)$ with the 
        wrong ones (w.r.t. $M$) appearing before the correct ones. 
        
        \State $W \leftarrow \alpha|S|$ first examples from $V$
      \label{Class21} 
         
        \State $ S \leftarrow S  \cup  U \cup W$ \label{NewSet}
                       
     \Until{{\tt ElapsedTime} $\ge  T$}

     \State Return {\tt BestModel}. 

  \end{algorithmic}
  \label{alg:Teacher}
\end{algorithm}
%%%

Some observations are in order:
\begin{itemize}

\item If the parameter $\alpha$ is very small, then {\tt TCT} 
becomes similar to pure random sampling. In contrast, if  $\alpha$ is large,
the Learner is guided to learn a model that relies on a distribution that may be significantly different  from the real 
one, which is not in line with Principle 3. Indeed, we present an example in Appendix 
\ifSUPP \ref{app:badExample},
\else
B.1
\fi
where using a large $\alpha$ is problematic.
  In addition, a large value of $\alpha$ may have a
negative impact on the running time due to the Lines~\ref{Class14} and \ref{Class15}.
 The results from our experiments suggest that $\alpha=0.2$  works well in practice  and also that the method
is robust to moderate variations of this parameter.

\item \TCT is more suitable for the typical scenario where the  classification time per example is (much) smaller than the training 
time per example. For scenarios where this property does not hold, \TCT may spend a large amount of time classifying examples and, hence, end up with a model trained on a relatively small set.      

%We have not found a theory supporting how this parameter shall be set.

\item 
Since  {\tt TCT}  classifies 
several new examples in each round,   it obtains, at no
additional cost, an unbiased estimation of the real accuracy of $M$. 
Thus, it is reasonable to
use this estimation for selecting the model  (Lines \ref{BestModelStart}-\ref{BestModelEnd}). 
%Another motivation is preventing bad models.
%{\tt TCT} builds models over examples that are sampled from  distributions that change along the iterations.
%One one hand, this is good in the sense that it contributes to  models'
% diversification  but, on the other hand, this  increases the risk
%of building bad models. Thus, having a mechanism to prevent them
%is helpful.
The reason why  {\tt TCT} picks the lower bound
of the 95$\%$ confidence interval (line \ref{conf-bound})  rather than the estimated accuracy is because in
the first rounds it uses few examples and, as a consequence, the variance
of the estimation is high.

\item We have assumed that the pool $P$ is large enough so that we can always sample examples that have not been considered so far (Lines \ref{Class11} and \ref{Class14}). In practice, this does not necessarily occur. In that case, when {\tt TCT}
reaches the point in which all the examples have already been classified, it  starts
to use examples that have not been added to set $S$ yet.

%The main consequence
%is that $acc_1$ and $acc_2$ tend to become  optimistic estimations
%of the true accuracy, which is not a major issue since, in this case, \TCT    favours  models
% that are based on a larger number of examples 
% The reason is because $A_1 \cup A_2$   contains relatively more  examples that are correct for the current model or for %the previous ones. 
% \mnote{Nao entendi isso\ldots Na vardade acho que a variancia eh fica menor, nao o bias}.

\end{itemize}

%that is worth discussing is why we pick the method
%with the larger lower bound rather than selecting the last model, which should be the better
%according to Principle 1.
%We have  two main motivations.  First,  Teacher  classifies 
%a bunch of new examples and, hence,  it obtains with no
%additional cost an unbiased estimation of the real accuracy of $M$. Thus, it is reasonable to
%use the estimation for selecting the model. 
%The second and, main motivation, is to reduce the variance of the Learner.
%When we modify the distribution towards the wrong examples we are somehow augmenting
%the 
%diversification  the models and, thus, having a mechanism to prevent bad models
%is helpful.

%##########################################################
%##########################################################
%########################################################## 
%########################################################## 

\section{Experimental Study}
\label{sec:experiments}
%\le

%\subsection{Settings}
In out first set of experiments, we compare the algorithm \TCT with two others teachers, namely  {\tt Double} and \OSCT
\citep{conf/icml/Dasgupta0PZ19,DBLP:conf/icml/CicaleseFLM20}
that are briefly described below. 

\begin{itemize}

\item  {\tt Double} is a  Teacher that at each round $i$
sends  new $m_0 2^i$ randomly  selected examples to the Learner,
where $m_0$ is  the number of examples used to train the first model.
Next, the Learner returns  a model trained on all  examples received so far
and, then,  round $i+1$ starts. The model returned by {\tt Double} is the last one built within the given time limit;

\item  \OSCT keeps a weight for each  example in the training set.
At each round, it receives a new model (hypothesis) from the Learner and uses it to classify all the examples
in the training set. Next, it repeatedly doubles the weights of the wrong examples until their sum exceeds 1. At this point, it samples $O(\log n)$ examples  following a distribution induced by these weights, where $n$
is an estimate of  the number of effective hypotheses in the Learner's class.
The  Learner receives these sampled examples and use them to update its current model.

%\item \red{Stochastic Gradient Descent is a  widely used  strategy for online training. It can be applied to Learners that minimize a loss %function.}

%\mnote{Acho que nao esta claro o que quer dizer effective hypothesis. Esta ok, mas tambem podiamos colocar entre aspas, %ou dizer ``see [artigo]''}
%\lnote{Verify description}

\end{itemize}

 {\tt Double} can be viewed as an adaptation of random sampling for the TCL task and, thus, we
  understand that it is a very natural baseline.
  \OSCT is a recent  method for teaching black-box learners that focuses
  on minimizing the size of the teaching set. In our experiments we employed the implementation discussed in
  \citep{DBLP:conf/icml/CicaleseFLM20}.
A pseudo-code of the implementation can be found
on Appendix \ref{sec:TCT-OSCT-Additional}

 We also compare $\TCT$ with  
 Stochastic Gradient Descent ({\tt SGD}) based training,  
 a  widely used  strategy for online training. 
For this purpose, we use the class {\tt SGDClassifier} from {\tt Scikit-Learn},
with its default parameters.

To compare \TCT with  {\tt Double} and  \OSCT, we considered 5 Learners:   {\tt LGBM}, {\tt Random Forest}, {\tt Decision Tree}, {\tt SVM} and {\tt Logistic Regression}.  In our comparison with 
{\tt SGD}, we only considered the last two Learners since it is  not clear how to 
(directly) train the others  via  {\tt SGD}.

{\tt LGBM} belongs to the family of Gradient Boosting methods and it is very popular on contests like Kaggle.
{\tt Random Forest} is also widely used. 
We selected {\tt Decision Trees} with small depth as a representative of interpretable methods, while
{\tt SVM} and {\tt Logistic Regression} were selected as representatives of linear classifiers.

 Our learners were implemented using  Python, version 3.8.5, with the libraries numpy (1.20.1), pandas (1.2.2), lightgbm (3.1.1), scikit-learn (0.24.1), scipy (1.6.1).
We use the default parameters for all learners but
for {\tt Decision Trees}, where we
set $min\_samples\_split=30$ and $max\_depth=5$ to build interpretable trees 
and for {\tt Random Forest} where we set
 $min\_samples\_split=30$ to
prevent  very long running time and, hence, limiting our experimental study.
 To obtain a {\tt SVM} and a {\tt Logistic Regression}
classifiers via {\tt SGD} we set the parameter
{\tt loss} from class {\tt SGDClassifier} to {\tt hinge} and {\tt log},
respectively.

We considered  20 datasets in our experiments, whose main features
are shown in Table \ref{tab:Datasets}.
The  experiments from Section \ref{sec:comparison-osct-double}-\ref{sec:sensibility} 
were executed using 
processor Core i9-7900X 3.3GHz, with 128GB RAM DDR4 and  Windows 10, while those from Section \ref{sec:appendix-active-learning}
were executed with the following settings:
Core i7-4790 3.60GHz, 32GB RAM DDR3, Ubuntu 20.04.3 LTS}
For timing we used the method {\tt timeit.default\_timet} from {\tt timeit} library.
Our code can be found at \url{https://github.com/sfilhofreitas/TimeConstrainedLearning}.
More details about the datasets and learners
can be found in Appendix \ref{app:additional}.

%Full details are provided in the supplementary material.

%\lnote{Podemos dizer algo sobre paralelismo se foi usado ou não?}

\remove{
	
\subsection{Learners}
 \begin{itemize}
	\item SVMLinearLearner: modelo LinearSVC da biblioteca sklearn.svm (dual=False).
	\item RandomForestLearner: modelo RandomForestClassifier da biblioteca sklearn.ensemble (n\_jobs=1, n\_estimators=100, min\_samples\_split=30).
	\item LogisticRegressionLearner: modelo LogisticRegression da biblioteca sklearn.linear\_model (n\_jobs=1, solver=saga).
	\item LGBMLearner: modelo LGBM da biblioteca lightgbm (n\_jobs=1).
	\item DecisionTreeLearner: modelo DecisionTreeClassifier da biblioteca sklearn.tree (min\_samples\_split=30, max\_depth=5).
 \end{itemize}

}

\begin{table}[]
\begin{center}

\caption{Datasets. $m$: size of training set, $d$: number of attributes;  $k$: number of classes}
\label{tab:Datasets}

\begin{tabular}{c|ccc|}
{\bf Dataset}                                 & $\mathbf{m}$     & $\mathbf{d}$    & $\mathbf{k}$    \\ \hline
vehicle\_sensIT               & 68969  & 100  & 2     \\
MiniBooNE                     & 91044  & 50   & 2    \\
SantanderCustomer & 140000 & 200  & 2 \\
BNG\_spambase                 & 699993 & 171  & 2 \\
BNG\_spectf\_test             & 700000 & 44   & 2 \\
Diabetes130US                 & 71236  & 2518 & 3 \\
BNG\_wine                     & 700000 & 13   & 3 \\
jannis                        & 58613  & 54   & 4 \\
BNG\_eucalyptus               & 700000 & 95   & 5 \\
BNG\_satimage                 & 700000 & 36   & 6 \\
 covtype                      & 406708  & 54   & 7 \\
 volkert                      & 40817   & 180  & 10  \\
  cifar\_10                    & 42000   & 3072 & 10   \\
   mnist                        & 60000   & 784  & 10   \\
 BNG\_mfeat\_fourier          & 700000  & 76   & 10   \\   
  poker\_hand                  & 1000000 & 85   & 10   \\  
 Sensorless\_drive & 40956   & 48   & 11   \\
  BNG\_letter\_5000\_1         & 700000  & 16   & 26   \\
 GTSRB-HueHist                & 36287   & 256  & 43   \\  
 aloi                         & 75600   & 128  & 1000  
\end{tabular}
\end{center}
\end{table}

\remove{
\begin{table}[]
\begin{center}

\caption{Datasets. $m$: size of training set, $d$: number of attributes;  $k$: number of classes}
\label{tab:Datasets}

\begin{tabular}{c|ccc||c|ccc}
Dataset                                 & m     & d    & k    & Dataset                                       & m       & d   & k  \\ \hline
GTSRB-HueHist\                & 36287 & 256  & 43   & SantanderCustomerSatisfaction\      & 140000  & 200 & 2  \\
volkert\                      & 40817 & 180  & 10   & covtype\                            & 406708  & 54  & 7  \\
Sensorless\_drive & 40956 & 48   & 11   & BNG\_spambase\ & 699993  & 171 & 2  \\
cifar\_10\                    & 42000 & 3072 & 10   & BNG\_letter\_5000\_1\               & 700000  & 16  & 26 \\
jannis\                       & 58613 & 54   & 4    & BNG\_spectf\_test\                  & 700000  & 44  & 2  \\
mnist\                        & 60000 & 784  & 10   & BNG\_wine\                          & 700000  & 13  & 3  \\
vehicle\_sensIT\              & 68969 & 100  & 2    & BNG\_eucalyptus\                    & 700000  & 95  & 5  \\
Diabetes130US\                & 71236 & 2518 & 3    & BNG\_satimage\                      & 700000  & 36  & 6  \\
aloi\                         & 75600 & 128  & 1000 & BNG\_mfeat\_fourier\                & 700000  & 76  & 10 \\
MiniBooNE\                    & 91044 & 50   & 2    & poker\_hand\                        & 1000000 & 85  & 10
\end{tabular}
\end{center}
\end{table}
}

\remove{
\begin{table}[]
\caption{Datasets and its details}
\label{tab:Datasets}
\begin{center}
\small

\begin{tabular}{c|c|c|c|c}
{\bf Dataset}                      & {\bf \#Training Examples} &{\bf \#Testing Examples}  &{\bf Attributes} & {\bf Classes} \\ \hline \hline
Diabetes130US                      & 71,236     &   30,530           & 2,518                     & 3                     \\
BNG\_letter\_5000\_1               & 700,000    &   300,000          & 16                       & 26                    \\
poker\_hand                        & 1,000,000   &   25,010          & 85                       & 10                    \\
SantanderCustomerSatisfaction      & 140,000    &  60,000           & 200                      & 2                     \\
BNG\_spectf\_test                  & 700,000    &  300,000     & 44                       & 2                     \\
BNG\_wine                          & 700,000    &  300,000    & 13                       & 3                     \\
vehicle\_sensIT                    & 68,969     &  29,559      & 100                      & 2                     \\
MiniBooNE                          & 91,044     &  39,020           & 50                       & 2                     \\
BNG\_eucalyptus                    & 700,000    &  300,000    & 95                       & 5                     \\
mnist                              & 60,000     &  10,000     & 784                      & 10                    \\
BNG\_spambase & 699,993    &  299,997           & 171                      & 2                     \\
covtype                            & 406,708    &  174,304     & 54                       & 7                     \\
cifar\_10                          & 42,000     &  18,000           & 3,072                     & 10                    \\
jannis                             & 58,613     &  25,120           & 54                       & 4                     \\
volkert                            & 40,817     &  17,493           & 180                      & 10                    \\
%avila                              & 10,430    &              & 10                       & 12                    \\
BNG\_satimage                      & 700,000    &   300,000    & 36                       & 6                     \\
%bank\_marketing\_dataset           & 28,831    &              & 63                       & 2                     \\
%shuttle                            & 40,600    &              & 9                        & 7                     \\
Sensorless\_drive\_diagnosis       & 40,956      &   17,553         & 48                       & 11                    \\
GTSRB-HueHist                      & 36,287      &   15,552         & 256                      & 43                    \\
%nomao                              & 24,124     &             & 174                      & 2                     \\
BNG\_mfeat\_fourier                & 700,000     & 300,000           & 76                       & 10                    \\
aloi                               & 75,600      &  32,400          & 128                      & 1,000                 
\end{tabular}
\end{center}
\end{table}
\normalsize
}

For each combination of dataset $\mathcal{D}$ and Learner $\mathcal{L}$,
we calculated the time $t_{\mathcal{D},\mathcal{L}}$ required by ${\cal L}$ to build a classification model for the whole dataset ${\cal D}$ (averaged over 4 runs) 
and kept the pairs $(\cal D, \cal L)$ for which $t_{\mathcal{D},\mathcal{L}}$ is at least 10 seconds.
We define $Valid({\cal L}):=\{ {\cal D} \mid t_{\mathcal{D},\mathcal{L}} >10\}$.

\remove{{The pairs for which the corresponding time $t_{\mathcal{D},\mathcal{L}}$
 is smaller than 10 seconds were discarded, since
small times are greatly affected by factors that
are not related with the learning process, as operating system  tasks.}\mnote{Removi qualquer explicacao sobre porque remover os pares menores que 10 seg, pra passar batido e tentar nao levantar red flag. Se achar estrategia perigosa, revertemos a mudanca}
We ended up with 68 pairs.}

For each valid combination $({\cal D} , {\cal L})$,
we run each of the teachers to build a classification model within  time limit $t_{\mathcal{D},\mathcal{L}}$.
During these trainings, whenever a new model was obtained we evaluated it on the testing set (pausing the timing), which allowed us  to plot the evolution curves that are presented next.
For all  experiments we set $m_0$, the size of the first  set of examples sent to the learner,
as $0.5\%$ of the size of the full dataset.
In practice, $m_0$ should be set 
as the maximum value for which we are confident
that a training set with $m_0$ labeled examples can be trained
within the time limit.

\subsection{Comparisons with \OSCT and {\tt Double}}
\label{sec:comparison-osct-double}

\newcommand{\addDecTreeTCTvsOSCT}{\includegraphics[width=19em]{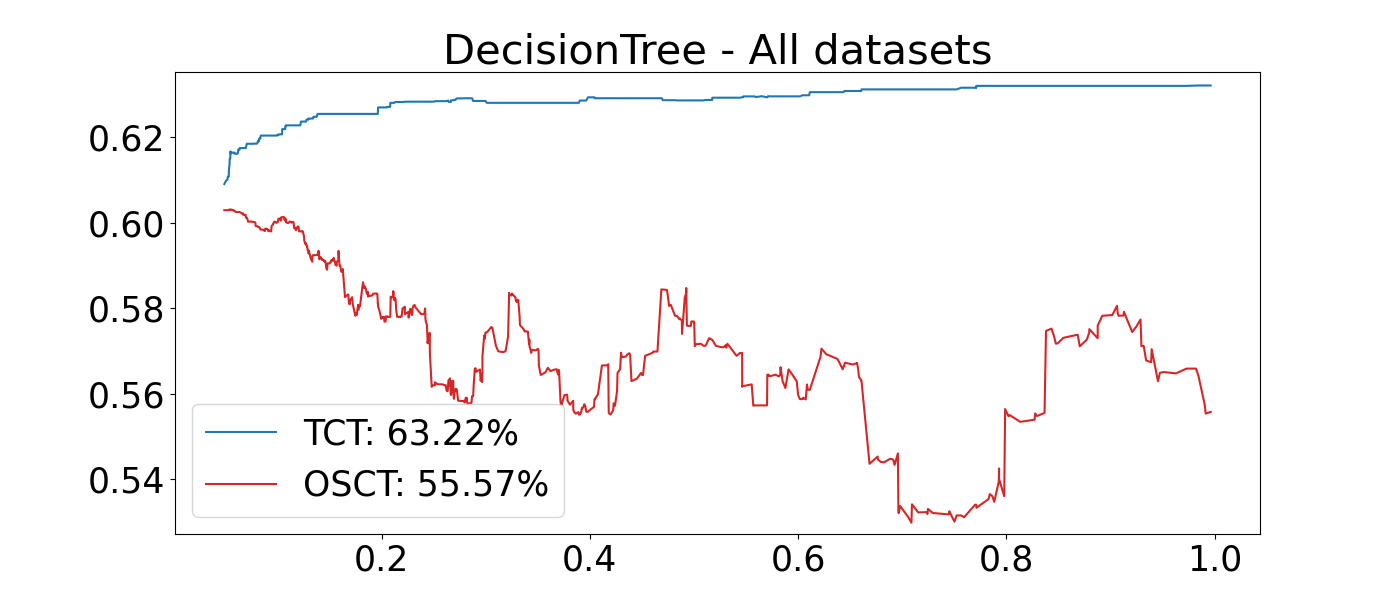}}

\newcommand{\addLGBMTCTvsOSCT}{\includegraphics[width=19em]{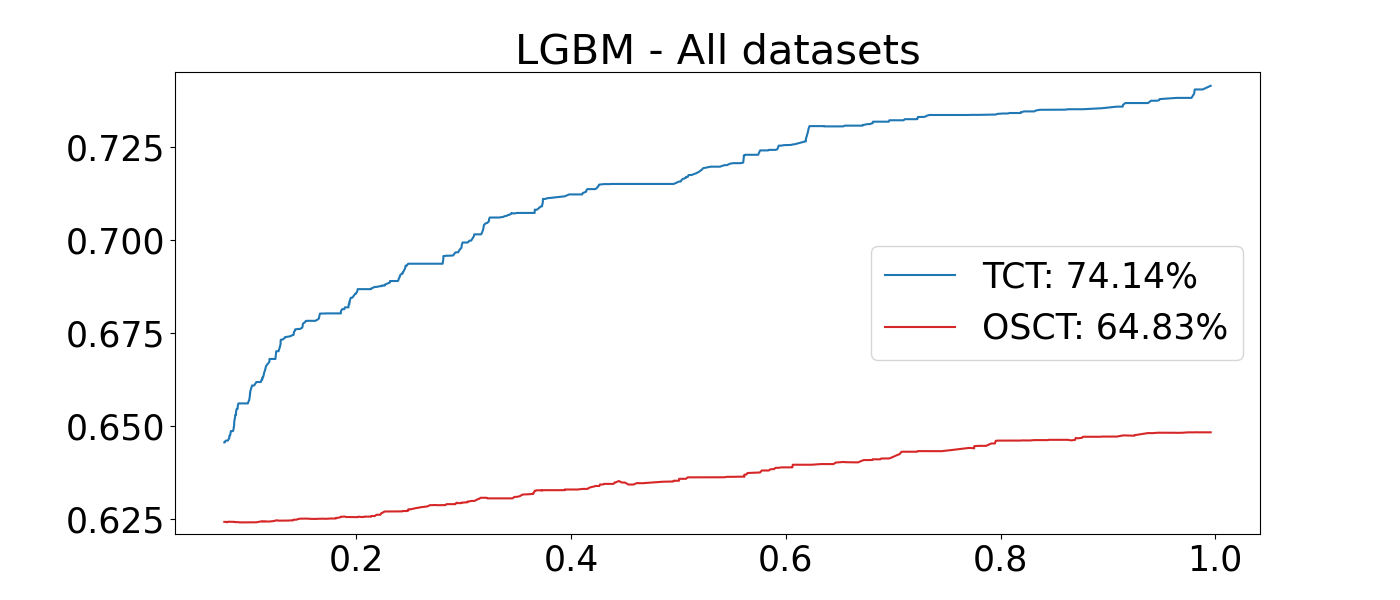}}

\newcommand{\addLogRegTCTvsOSCT}{\includegraphics[width=19em]{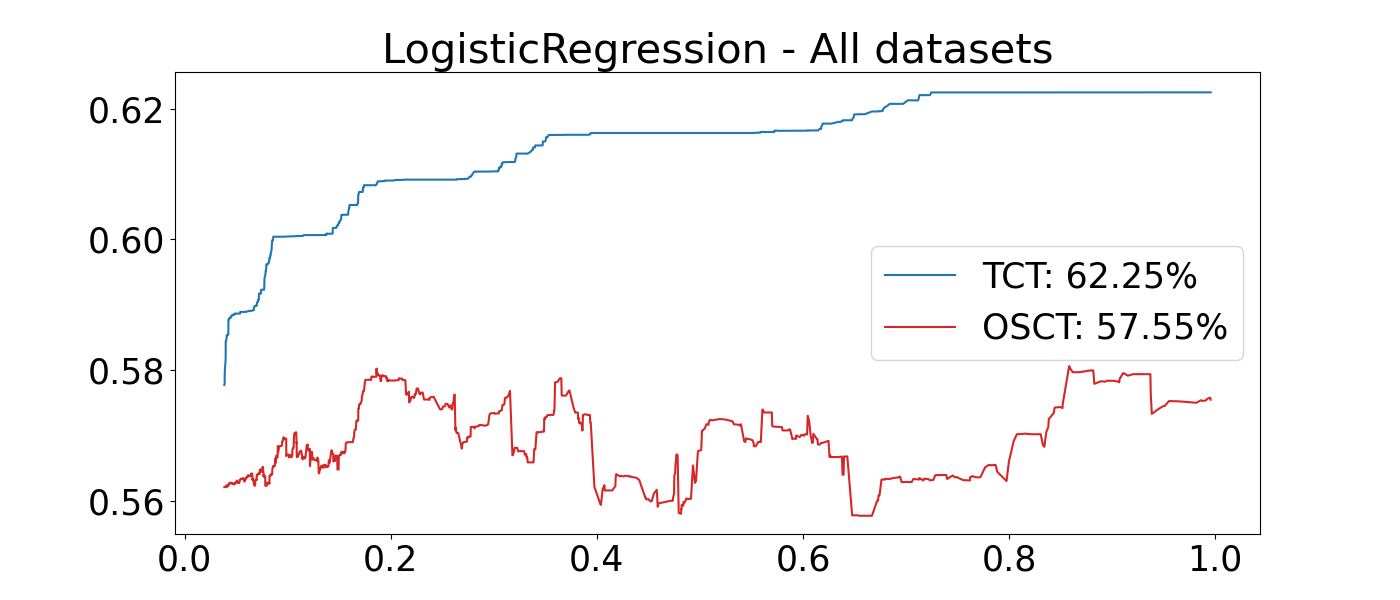}}

\newcommand{\addRandForestTCTvsOSCT}{\includegraphics[width=19em]{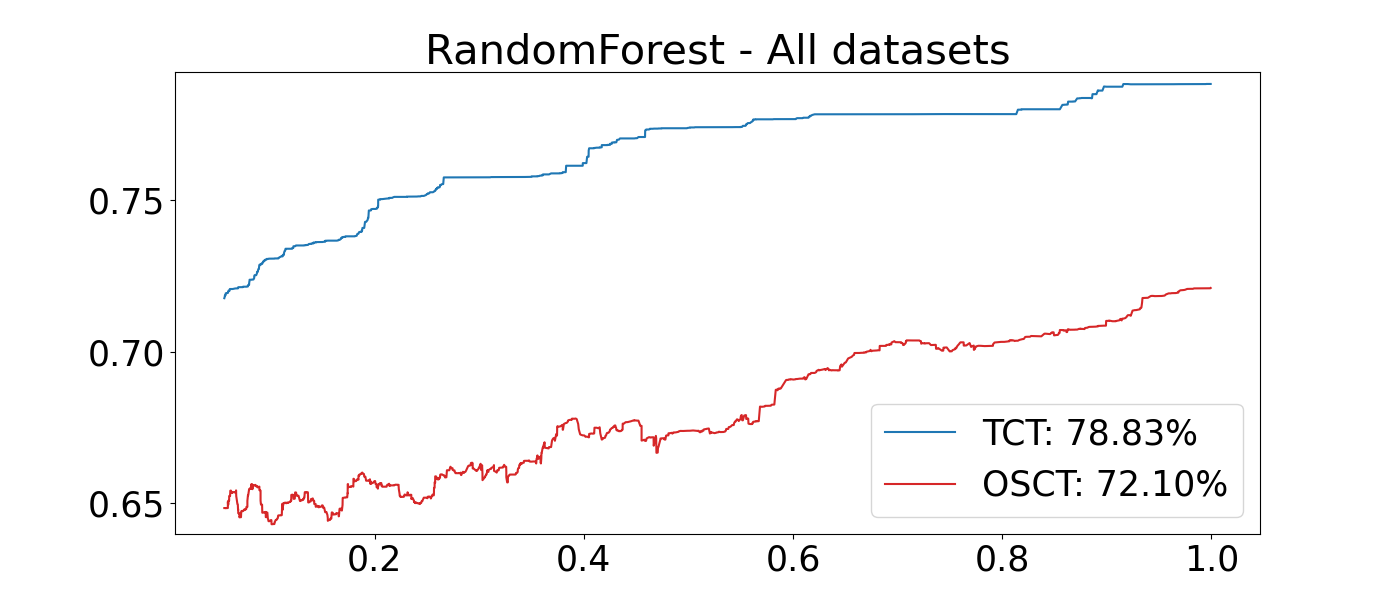}}

\newcommand{\addSVMLinTCTvsOSCT}{\includegraphics[width=19em]{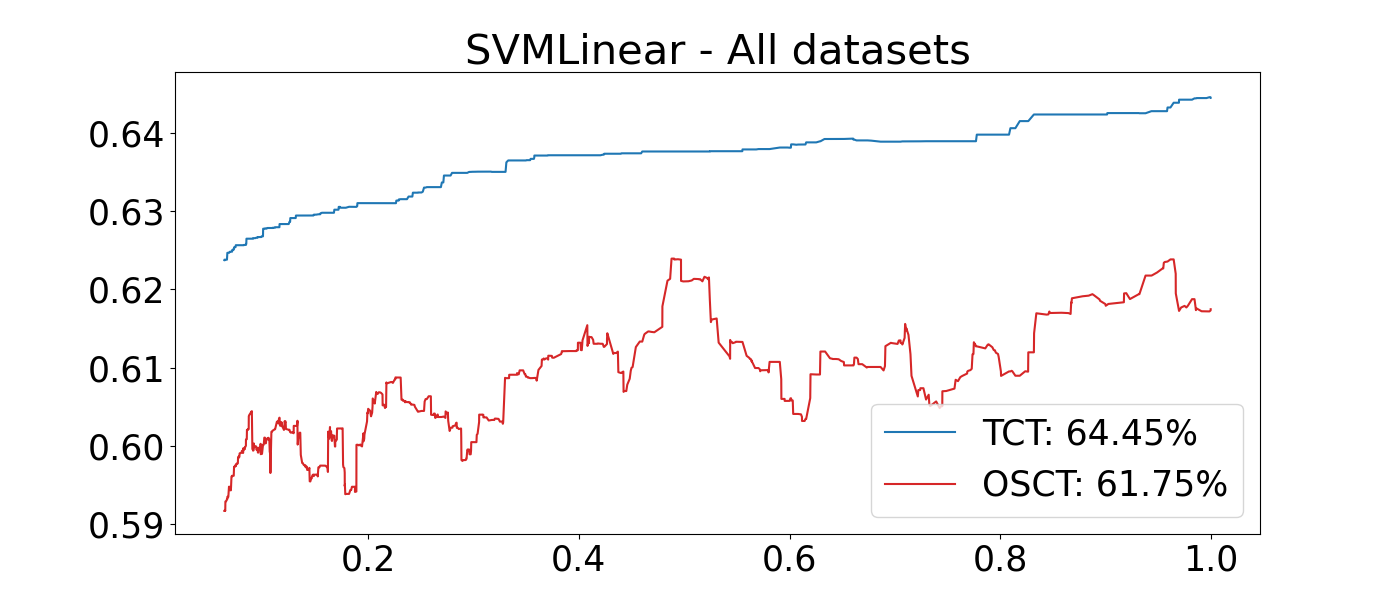}}

\begin{figure}
\begin{center}
%\begin{tabular}{l*2{C}@{}}
\begin{tabular}{lcc}
%\toprule
& \addDecTreeTCTvsOSCT  & \addLogRegTCTvsOSCT    \\ 
&  \addSVMLinTCTvsOSCT & \addLGBMTCTvsOSCT  \\
&  \addRandForestTCTvsOSCT &  \\ 
%\bottomrule 
\end{tabular}
\end{center}
\caption{Average accuracies on testing set along normalized time for \TCT and \OSCT. The numbers next to the labels are their average accuracies at the last normalized time limit $t=1$. The initial guess for $n$ is 2.}
\label{fig:tct-vs-osct}
\end{figure}

%Tables \ref{tab:histograms} show graphs comparing {\tt TCT} with {\tt Double}
%and {\tt SetCover}.

\newcommand{\addallSCT}{\includegraphics[width=19em]{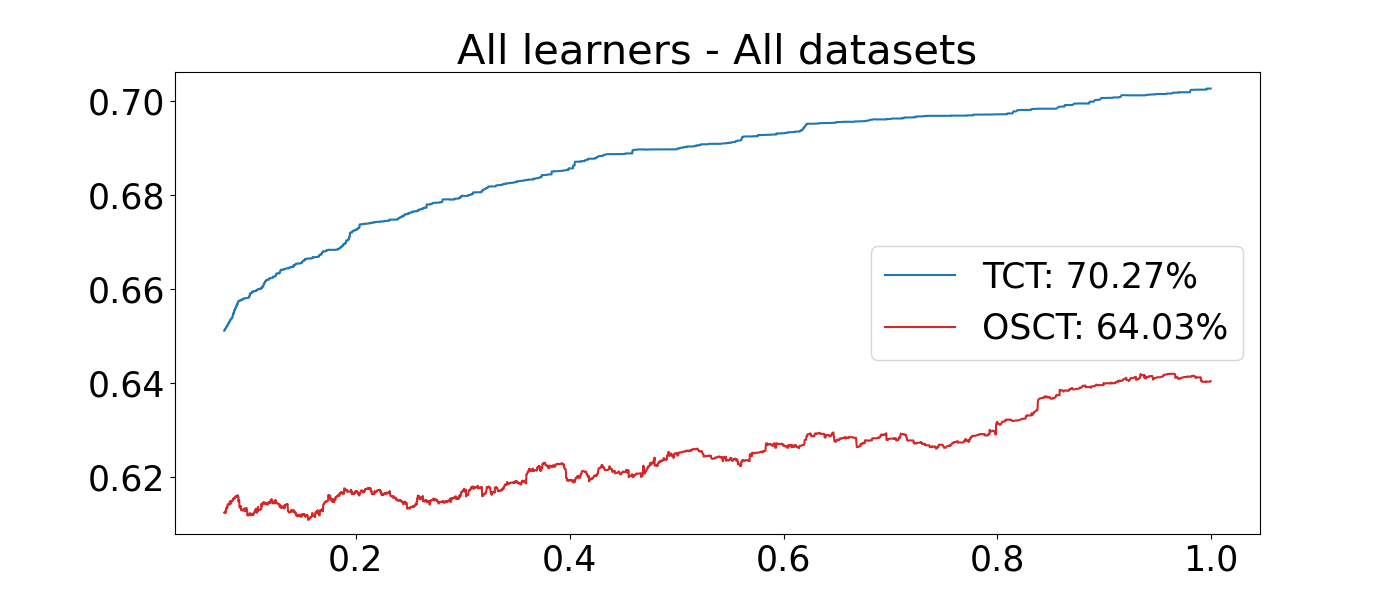}}

%\newcolumntype{C}{>{\centering\arraybackslash}m{18em}}

%C:\Dados\Pesquisas\Teaching\FastTraining\
\newcommand{\addpic}{\includegraphics[width=21em]{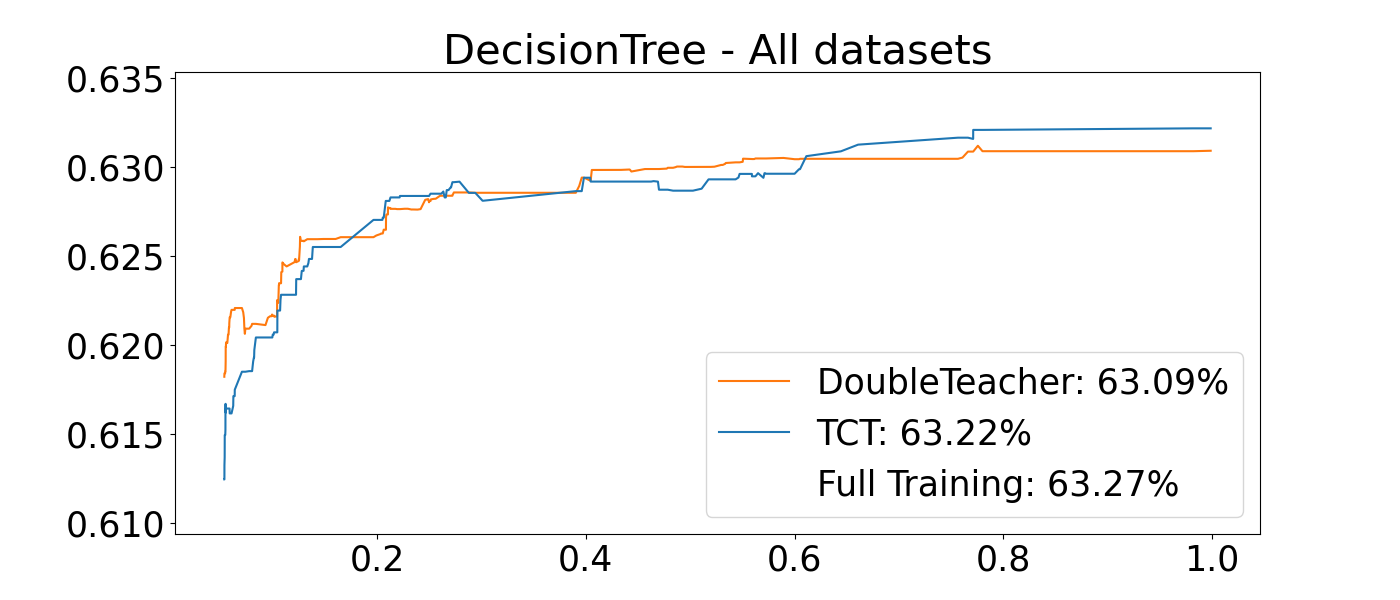}}

\newcommand{\addp}{\includegraphics[width=21em]{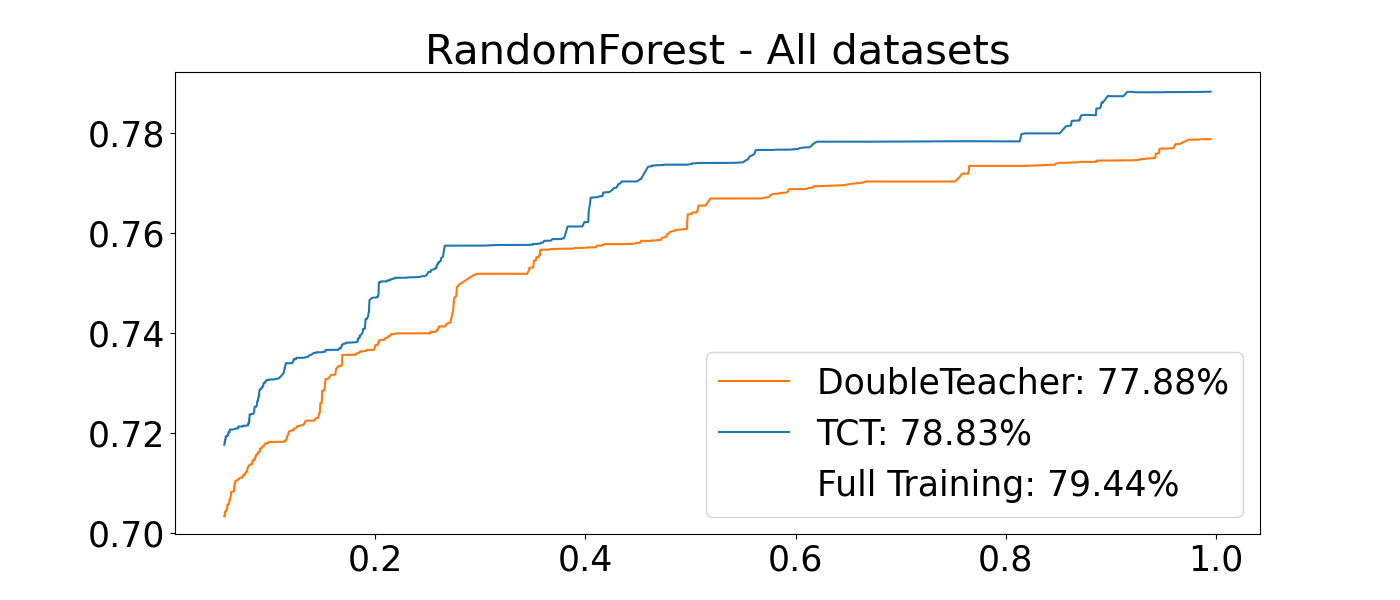}}
%\newcolumntype{C}{>{\centering\arraybackslash}m{16em}}

\newcommand{\addtwenty}{\includegraphics[width=21em]{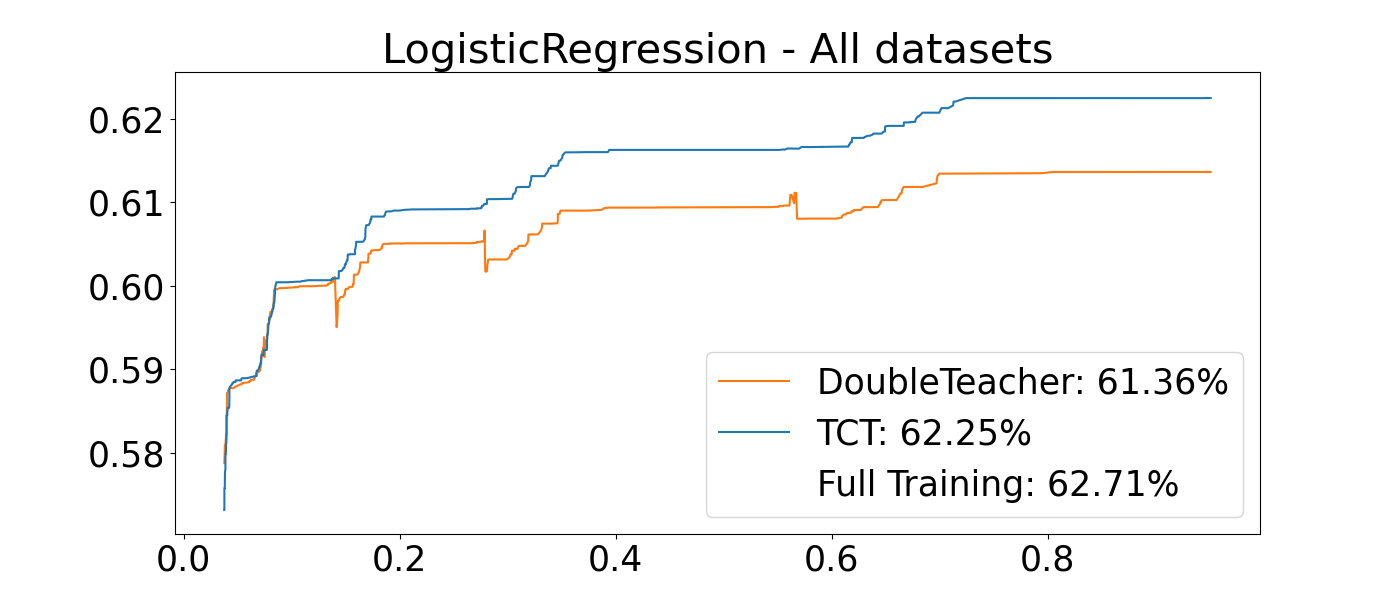}}
%\newcolumntype{C}{>{\centering\arraybackslash}m{16em}}

\newcommand{ \addthirty}{\includegraphics[width=21em]{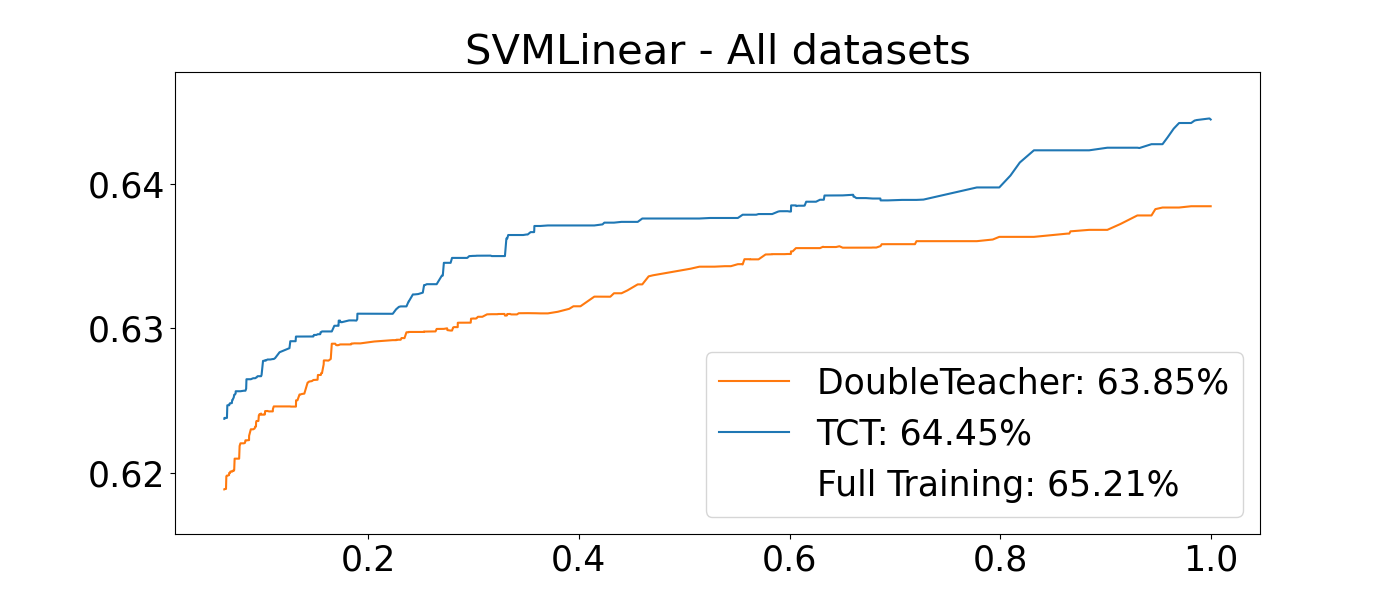}}
%\newcolumntype{C}{>{\centering\arraybackslash}m{16em}}

\newcommand{ \addLGBM}{\includegraphics[width=21em]{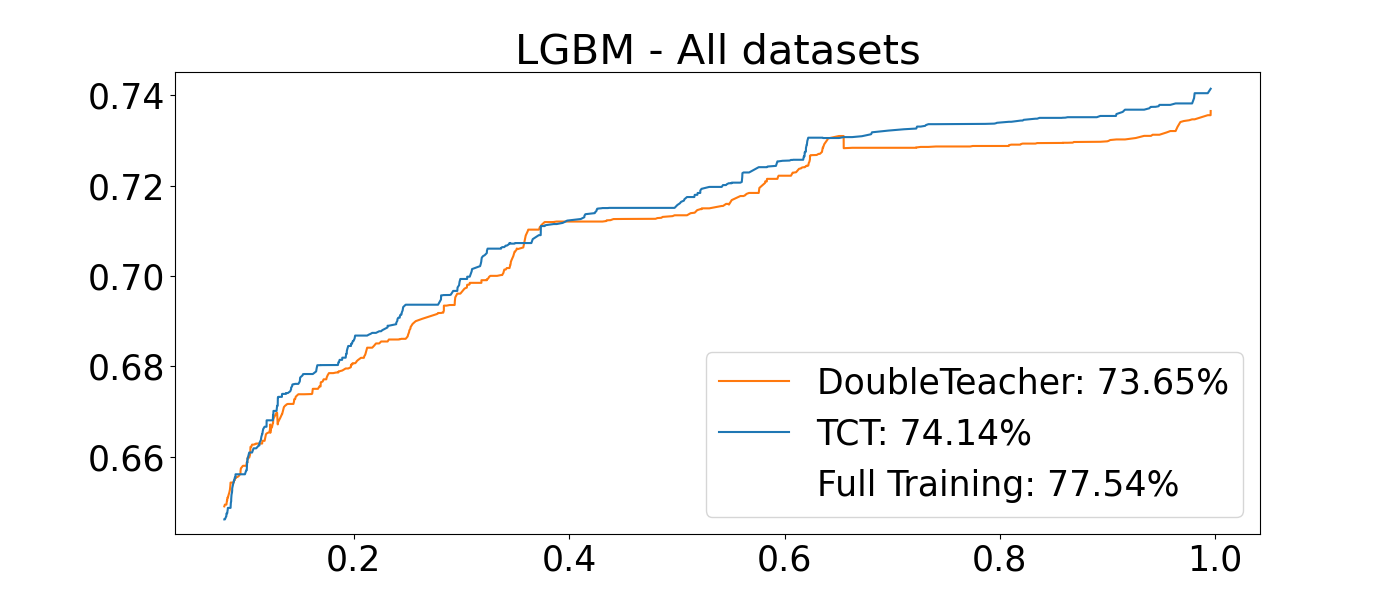}}
%\newcolumntype{C}{>{\centering\arraybackslash}m{16em}}

%\sffamily

\begin{figure*}

\begin{center}
%\begin{tabular}{l*2{C}@{}}
%\begin{tabular}{l*2{C}@{}}
\begin{tabular}{lcc}

%\toprule
&  \addpic    &  \addtwenty  \\
& \addthirty  &  \addLGBM  \\
&  \addp  &   \\ 
%\bottomrule 
\end{tabular}
\end{center}
\caption{Average accuracies on testing set along normalized time for \TCT, {\tt Double} and \OSCT. The numbers next to the labels are their average accuracies at the last normalized time limit $t=1$. }
\label{fig:histograms}
\end{figure*}

% The same pattern holds
%(approximately) for other values of $x$.

%The image corresponding to learner $L$ shows how the  average accuracy, calculated with respect to all valid datasets
%for $L$, evolves over (normalized) time.

Figure \ref{fig:tct-vs-osct} shows 5 images,
 each of them 
corresponding to a comparison between \TCT ($\alpha=0.2$) and {\tt OSCT} for each of the Learners.
The horizontal axis corresponds to  the normalized time limit $t \in [0,1]$ (as a fraction of $t_{\mathcal{D},\mathcal{L}}$), and the vertical axis corresponds to the (average) accuracy on testing sets.
More precisely, for a  Teacher  ${\cal T}$ and a learner
${\cal L}$,  the accuracy  associated with the normalized time  $ t \in [0,1]$ is given by
\begin{equation}
\label{eq:xxx}
 \sum_{{\cal D} \in Valid({\cal L})  } \frac{ acc({\cal T},{\cal L},{\cal D}, t \cdot t_{{\cal D},{\cal L}} )}{|Valid({\cal L})| }, 
 \end{equation}
where $acc({\cal T},{\cal L},{\cal D}, t \cdot t_{{\cal D},{\cal L}} )$ is
the accuracy of the model obtained by Teacher ${\cal T}$, with learner ${\cal L}$, over dataset
${\cal D}$ when the time limit for training is  $ t \cdot t_{{\cal D},{\cal L}}$. 

As an example, consider
a dataset where training an SVM with all datapoints took 1000 secs. Then, Figure 1 "SVMLinear" at $x$-axis equal to 0.6, for example, shows the accuracy of the algorithms \TCT and {\tt OSCT} when given 0.6 · 1000 = 600secs of
 execution time (actually this figure shows the average of such accuracy over all datasets).

We see that \TCT presents a huge advantage compared to \OSCT.
One of the reasons is that  \OSCT does not manage to build models on large training set since
it adds a few examples per round and spends a non-negligible time classifying examples.
We tried variations of \OSCT to improve its accuracy
(see Appendix  \ref{sec:TCT-OSCT-Additional}), but the results did not change significantly.

 %\lnote{desenvolver mais}

\remove{
The other  five images present a comparison between \TCT ($\alpha=0.2$) and {\tt Double} for each of the Learners.
The points of the curves are built as in (\ref{eq:xxx}), with the exception that 
$Valid$ is replaced with $Valid({\cal L})$.
}

Figure \ref{fig:histograms} shows a similar comparison
where  {\tt Double}, rather than {\tt OSCT}, is used. 
We observe that {\tt TCT} outperforms {\tt Double} 
for all Learners but {\tt Decision Trees}, where their performances are very similar.
We also note  that at the final time limit $t_{{\cal D}, {\cal L}}$, for most of the Learners, both {\tt TCT} and {\tt Double} are able to reach accuracy comparable to that of training using the whole dataset (label {\tt Full Training}).
The accuracy associated with training on the whole dataset can be thought as what can be achieved by  an ``oracle'' Teacher for {\tt TCL} that knows
beforehand the size of the largest batch of  (random) examples that can be trained within a given time limit 
($t_{{\cal D}, {\cal L}}$ in this case) and sends such batch  to the Learner.   

%While the differences between {\tt TCT} and {\tt Double}
%are not large,  we shall observe that the room for improvement seems to be  limited. For example,   the difference %between
%the average accuracy of the models that train in the full dataset (value of {\tt full} in the captions) and that of {\tt %Double} is  not particularly large. 

\remove{We note that this accuracy associated with training on the whole dataset is also the one achieved by a non-implementable Teacher, denoted  by {\tt Oracle},
that obtains from  an oracle the size $m^*$ of the largest dataset that can be trained within
the given time limit and, then, send to the Learner $m^*$ random examples (selected without replacement). 
In our experiments, by design,
$m^*$ is the size of the full training set.
}

Table \ref{tab:Win-Losses0} shows other relevant statistics at the final time limit $t_{{\cal D}, {\cal L}}$. The multi-column
\TCT vs {\tt Double} (resp.  \TCT  vs \full)  
gives, for each Learner, the number of datasets where the classifier built by  {\tt TCT} outperforms  that of {\tt Double} (resp. \full)  with $95\%$ confidence (Section 5.5 of \citep{mitchell}).
As an example, for {\tt LGBM}, {\tt TCT} outperformed ${\tt Double}$ 9 times
and it was outperformed just once.
We observe a clear advantage of \TCT over {\tt Double} for {\tt LGBM}, {\tt Random Forest} and {\tt SVM}.
For {\tt Logistic Regression} and {\tt Decision Trees} these algorithms have similar performance. Perhaps surprisingly, \TCT is even competitive against the training with the full dataset for 
{\tt Random Forest} and {\tt SVM} and, thus, also competitive against the aforementioned ``oracle'' Teacher.  The multi-column ``\% of the Training Set'' gives the average of
the   number of examples, relative to the size of the full training set,
employed by the models built by each pair (Teacher, Learner). We observe that \TCT builds
more accurate models than {\tt Double}, despite using $15 \%$ fewer examples (simple average over the different learners).

%\lnote{mancionar que vale quando t é menor}

% Another interesting
%aspect is that the advantage gets larger as the number of available data
%increases.

\begin{table*}[t]
\caption{Additional Relevant Statistics}
\label{tab:Win-Losses0}

\begin{center}
\begin{tabular}{lc||cc||cc||cc}
  \multicolumn{2}{c||}{}      & \multicolumn{2}{c||}{\TCT vs {\tt Double}} &  \multicolumn{2}{c||}{\TCT vs \full} &  \multicolumn{2}{c}{ \% of Training Set}    \\ \hline \hline
      Learner       & $\#$ Datasets       &  Win & Loss  & Win & Loss & {\tt Double} & \TCT   \\  \hline
{\tt LGBM}   & 15             & 9 & 1  &  6 & 7 &  24.1 & 17.1 \\  \hline 
{\tt Random Forest} & 20      & 13 & 1  & 4 & 4 &     45.0 & 45.5  \\  \hline 
{\tt SVM}     & 13  & 7 & 2  &  5 & 5 &    39.0 & 34.6   \\  \hline 
{\tt Log. Regression} & 13  & 2 & 2  & 1 & 8  & 29.2 & 23.3  \\  \hline 
{\tt Decision Tree}  & 7 & 2 & 3  & 2 &  4 &  33.4 & 25.0\\  \hline \hline 
{\tt Overall} & 68  & 33 & 9  & 18 & 28  &  &    
     
\end{tabular}
\end{center}

\end{table*}

\subsection{Comparisons with SGD}
We compare
 \TCT and {\tt SGD} for training
{\tt SVM} and {\tt Logistic Regression}.
For that, we use the {\tt SGDClassifier} module from the {\tt sklearn} library with its default settings. 
In each iteration the Learner receives a set (mini-batch) of random
labeled examples and {\tt SGD}, via  {\tt partial\_fit} method, is used to update the classification model. This is repeated as long as we do not reach the time limit (possibly with several passes over the
training set). 

To choose the size of the mini batch we considered  all the possibilities 
in the set $ \{64,128,256,512\}$. The results  reported here consider those  that achieved the best results, namely 256 for hinge loss ({\tt SVM}) and 512 for log loss ({\tt Logistic Regression}).

\remove{

For training the Learners via  Stochastic Gradient Descent ({\tt SGD}) we use the SGDClassifier module from the {\tt sklearn} library with its default settings. At each iteration a mini batch of random examples is used to estimate the loss function gradient and the model is updated by the {\tt partial\_fit} method.
}

% which performs one epoch of stochastic gradient descent on given samples.

%Below we present detailed results of the comparison between \TCT and {\tt SGD} for training {\tt SVM} and {\tt Logistic Regression} learners. 
%In the case of the SGD this is done by setting the method's loss function: with hinge loss function the method is similar to the linear SVM used %in our experiments, and changing the loss function parameter to 'log´ it is similar to Regression Logistic. 

\remove{
Table~\ref{tab:svm-sgd-accuracies} (resp. Table~\ref{tab:logistic-regression-sgd-accuracies}) shows the average accuracy obtained by {\tt  SVM} (resp. {\tt  Logistic Regression}) on each dataset for \TCT ($\alpha=0.2$) and {\tt SGD}.
}

$\TCT$ had a much better performance:
for {\tt Logistic Regression}, with $95\%$ statistical confidence,
 it outperformed ${\tt SGD}$ in $11$ out of $13$ datasets and was worse  in only 1; for ${\tt SVM}$, also out of 13 datasets, 
 it was   better in 10 and
	 worse in 2.   Figure \ref{fig:SGD-TCT-Figures} shows
the average accuracy over time on the testing set.  
Additional tables regarding this experiment are given in the appendix A.5.

\remove{We compare
\TCT and {\tt SGD} for training
{\tt SVM} and {\tt Logistic Regression}.
In each iteration the Learner receives a set (mini-batch) of random
labeled examples and {\tt SGD} is used to update the classification model. This is repeated
as long as we do not reach the time limit (possibly with several passes over the
training set). 
$\TCT$ had a much better performance
as shown in Table \ref{tab:SGD} and Figure \ref{fig:SGD-TCT-Figures}. 
Additional details regarding this experiment are given in the appendix A.5.
}
 
We note that the advantage of \TCT\xspace may have to do with the fact that
the {\tt Scikit-Learn} classes that it uses to train {\tt Logistic Regression} and {\tt SVM}
employ optimization techniques and have hyper-parameters that are different from those of {\tt SGD}.  
That said, the key information revealed by our experiments is that $\TCT$ obtains much better results 
than using a very  natural alternative for a practitioner, that is, training via the {\tt SkLearn}'s implementation of  {\tt SGD}  with
its default parameters.  This confirms the practical appeal of a wrapper for Time Constrained Learning  that relies on our proposed method.

%-- this has practical value. \mnote{Nao entendi 100\% essa ultima frase: quem 'e ``well-established'', a biblioteca do SGD ou do Logistic %Regression e SVM?}

\remove{

\begin{table}[t]
\caption{Comparison between \TCT  and SGD}
\label{tab:SGD}
\vspace{-0.1cm}
\begin{center}
\begin{tabular}{lc||cc}
%  \multicolumn{2}{c||}{}      & \multicolumn{2}{c}{}    \\ \hline \hline
      Learner       & $\#$ Datasets       &  Win & Loss \\  \hline
{\tt SVM}     & 13 & 10 &  2  \\  \hline 
{\tt Log. Regression} & 13 & 11 & 1   \\  \hline 
     
\end{tabular}
\end{center}

\end{table}

}

%Figures~\ref{fig:tct-vs-sgd_hinge-appendix} and~\ref{fig:tct-vs-sgd_log-appendix} compare the accuracy over time of
%the classifiers obtained by 
% \TCT and 
%{\tt SGD}.  As we can notice, 
%\TCT outperforms {\tt SGD} in both cases for every (normalized) time $t \in (0,1]$. 

\newcommand{\addLogRegTCTvsSGD}{\includegraphics[width=21em]{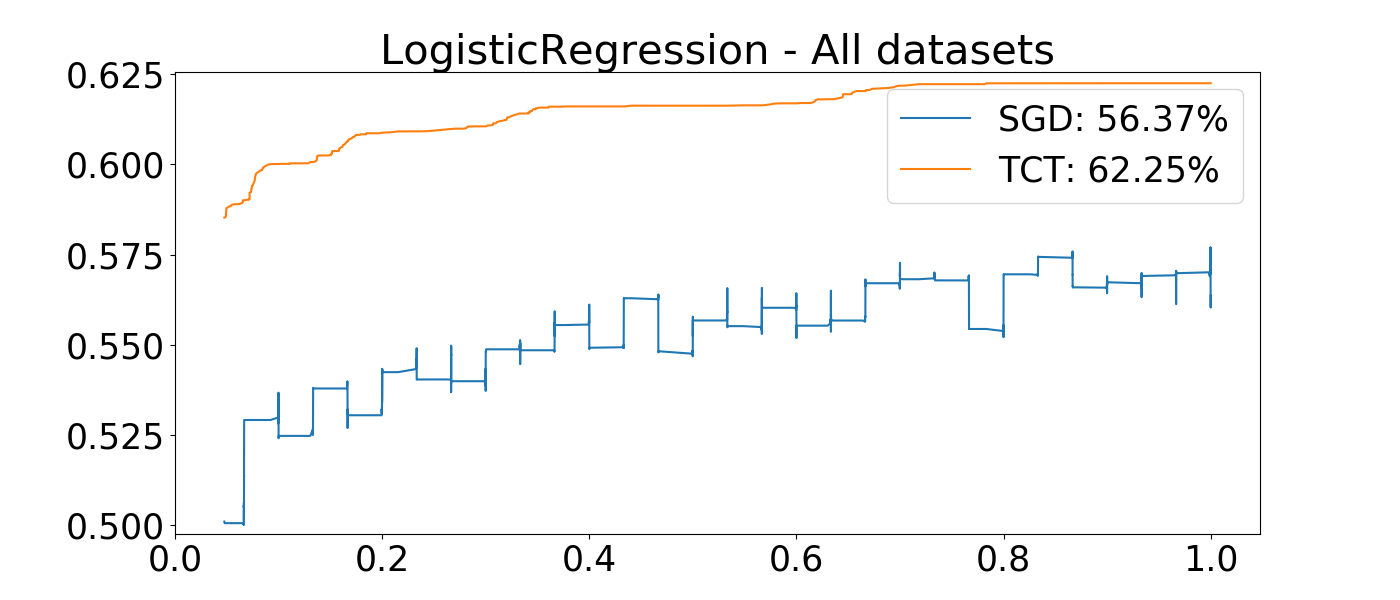}}
\newcommand{\addSvmTCTvsSGD}{\includegraphics[width=21	em]{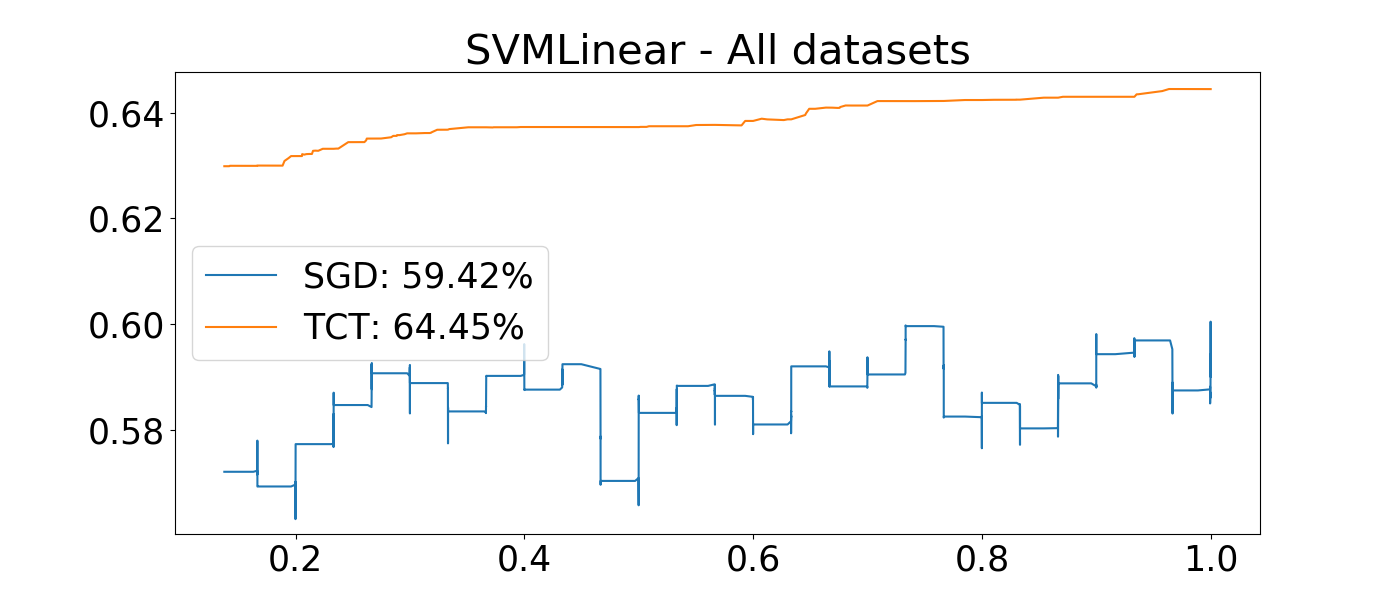}}

\begin{figure*}

\begin{center}
\begin{tabular}{lcc}
& \addLogRegTCTvsSGD & \addSvmTCTvsSGD    \\ 
%\bottomrule 
\end{tabular}
\end{center}
\caption{Average accuracies on testing set along normalized time for SGD and TCT. The numbers next to the labels are their average accuracies at the last normalized time limit $t=1$. }
\label{fig:SGD-TCT-Figures}
\end{figure*}

\remove{

\begin{figure}
	\begin{center}
		\begin{tabular}{c}
			%\toprule
			 \addSvmTCTvsSGD\\ 
			%\bottomrule 
		\end{tabular}
	\end{center}
	\caption{Average accuracies on testing set along normalized time for SGD and TCT. The numbers next to the labels are their average accuracies at the last normalized time limit $t=1$.}
	\label{fig:tct-vs-sgd_hinge-appendix}
\end{figure}

\begin{figure}
	\begin{center}
		\begin{tabular}{c}
			%\toprule
			 \addLogRegTCTvsSGD\\ 
			%\bottomrule 
		\end{tabular}
	\end{center}
	\caption{Average accuracies on testing set along normalized time for SGD and TCT. The numbers next to the labels are their average accuracies at the last normalized time limit $t=1$. }
	\label{fig:tct-vs-sgd_log-appendix}
\end{figure}

}

\subsection{Sensibility to hyper-parameter $\alpha$}
\label{sec:sensibility}
Finally, we  performed some experiments to understand the impact of parameter $\alpha$ that controls
the number of wrong examples allocated at  each round.
Table \ref{tab:parameter-alpha} shows the number of 
wins and losses of \TCT over {\tt Double} for different values of $\alpha$.
In general, the larger the value
of $\alpha$ the larger the number of pairs (${\cal L}$, ${\cal D}$), given by column {\tt Total}, for which there is a difference with 95\% confidence
between the accuracy of {\tt TCT} and that of ${\tt Double}$.
This is  not surprising since the  smaller the $\alpha$ the larger the intersection between
the training sets employed by {\tt TCT} and {\tt Double}.
The most interesting result  is the deterioration of the average accuracy with the growth of 
$\alpha$, which is in line with Principle 3.
These experiments suggest that \TCT
 is robust with respect to  the choice of $\alpha$: one can set it safely on
the interval $[0.05,0.3]$  and expect consistent  gains.

\begin{table}[]
\caption{Sensibilty to $\alpha$: Win and Losses of \TCT over {\tt Double} and  average
accuracy of \TCT}
\label{tab:parameter-alpha}

\begin{center}
\begin{tabular}{l|ccc|c}
$\alpha$                    &  Win & Loss  & Total & Avg. Accuracy \\ \hline  \hline
{\tt $0.05$}       & 24 & 3  & 27 &  69.9 \\ \hline 
{\tt $0.1$}       & 31 & 4  & 35 & 70.2 \\ \hline 
{\tt $0.2$}      & 33 & 9  & 42   & 70.3 \\ \hline 
{\tt $0.3$}      & 37 & 12 & 49 & 70.4\\ \hline 
{\tt $0.45$}      & 32 & 21 & 53 & 70.0\\ \hline 
{\tt$0.6$}   & 25 & 35 & 60   & 69.5  \\ \hline
    {\tt$0.9$}   & 19 & 38 & 57  & 68.6  
     
\end{tabular}
\end{center}

\end{table}

\subsection{Selecting Examples via Active Learning}
\label{sec:active-learning}
We also evaluate the possibility of replacing the wrong examples in our strategy with examples selected via uncertainty sampling,  a
classic active learning strategy.

More precisely, we consider the following variation of $\TCT$, dubbed
$\TCT_{AL}$.
At the beginning of each round, $\TCT_{AL}$ uses the current
training set to build a new model. Next, this model is employed to classify the examples
of a set $S$, containing $2i$ random examples, where $i$ is the number of examples employed by the last trained model. Then, $\TCT_{AL}$  selects a set $S' \subset S$ containing $i$ examples and adds it to the current training set, which triggers the beginning of a new round.
The set $S'$ is built by picking the  $\alpha \cdot i$ most uncertain examples from $S$ and  $(1-\alpha)\cdot  i$ additional random examples from $S$, where the uncertainty 
of an example is given by the difference between
the probabilities (assigned by the Learner) to the two most probable classes. In our experiments we
used $\alpha=0.2$ for both $\TCT$ and $\TCT_{AL}$.

\remove{
We also test a variation of our approach, namely
$\TCT_{AL}$,  a hybrid between  $\TCT$ and uncertainty sampling, a
classic active learning strategy. 
More specifically, we follow the workflow of
$\TCT$ but at each iteration 
instead of selecting a mix of wrong and random examples, we classify $2x$ new examples, where $i$ is the number
of examples employed by the last trained model,
and then we sort them
by increasing order of the difference between
the probabilities of the two most probable classes.
Then, we pick the first $x$ one and add them to the training set.
}

The results are presented in Table \ref{tab:ActiveLearning}.
The ``standard" $\TCT$  is clearly better  for
{\tt Random Forest}, it has some advantage for
{\tt LGBM}
and it is worse  for both {\tt Log. Regression}
and {\tt Decision Tree.}  We do not have results
for {\tt SVM} because the probabilities
required for the strategy are
not directly available. 

Figure \ref{fig:tct-vs-alsavebest-appendix}
shows the normalized  accuracy of \TCT and $\TCT_{AL}$
over time.
Additional tables  can be found in Appendix 
\ref{sec:appendix-active-learning}.

%********************* Atualizar Tabela***********
\begin{table}
	\caption{Comparison Between \TCT and variation
	of \TCT that selects examples via active learning} 
\begin{center}
\begin{tabular}{lc||cc}
 % \multicolumn{2}{c||}{}      & \multicolumn{2}{c||}{\TCT vs {\tt AL}} &      \\ \hline \hline
      & $\#$ Datasets       &  Win & Loss    \\  \hline
{\tt LGBM}   & 15             & 7 & 4   \\  \hline 
{\tt Random Forest} & 20      & 13 & 1  \\  \hline 
{\tt Log. Regression} & 15  & 0 & 4  \\  \hline 
{\tt Decision Tree}  & 8 & 1 & 5 % \\ \hline \hline 
%{\tt Overall} & 68  & 33 & 9  
\end{tabular}
\end{center}
\label{tab:ActiveLearning}
\end{table}

%AL_savebest
\newcommand{\addDecTreeTCTvsALsavebest}{\includegraphics[width=19em]{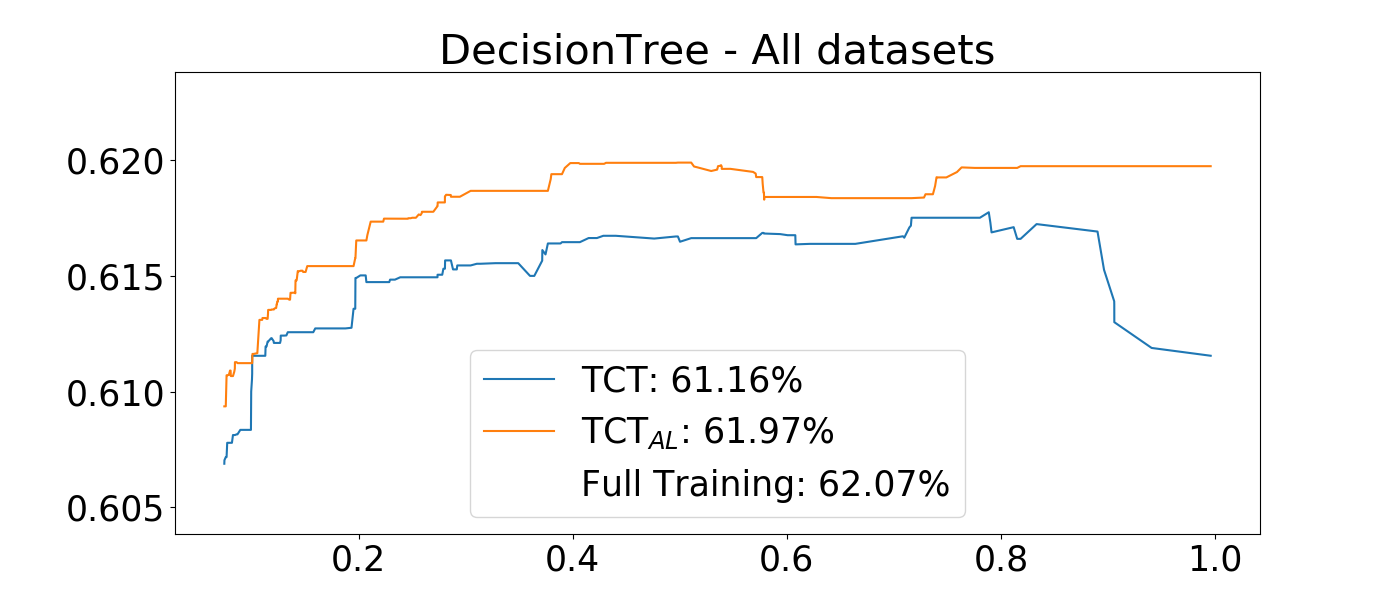}}

\newcommand{\addLGBMTCTvsALsavebest}{\includegraphics[width=19em]{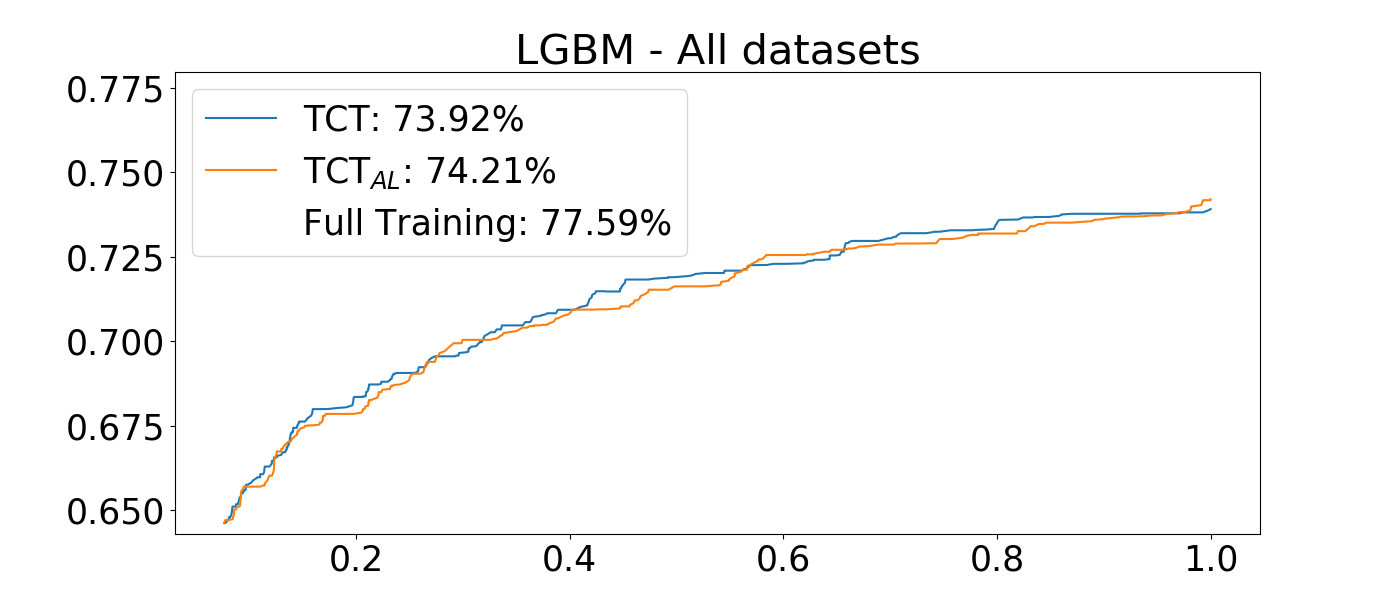}}

\newcommand{\addLogRegTCTvsALsavebest}{\includegraphics[width=19em]{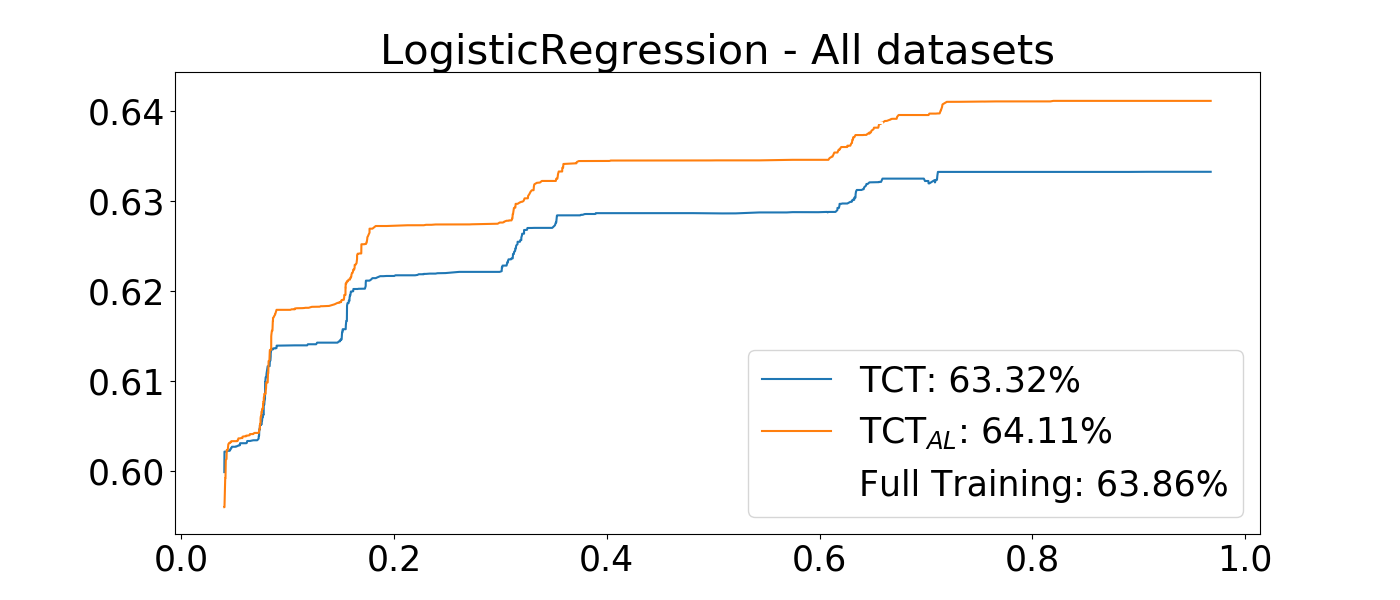}}

\newcommand{\addRandForestTCTvsALsavebest}{\includegraphics[width=19em]{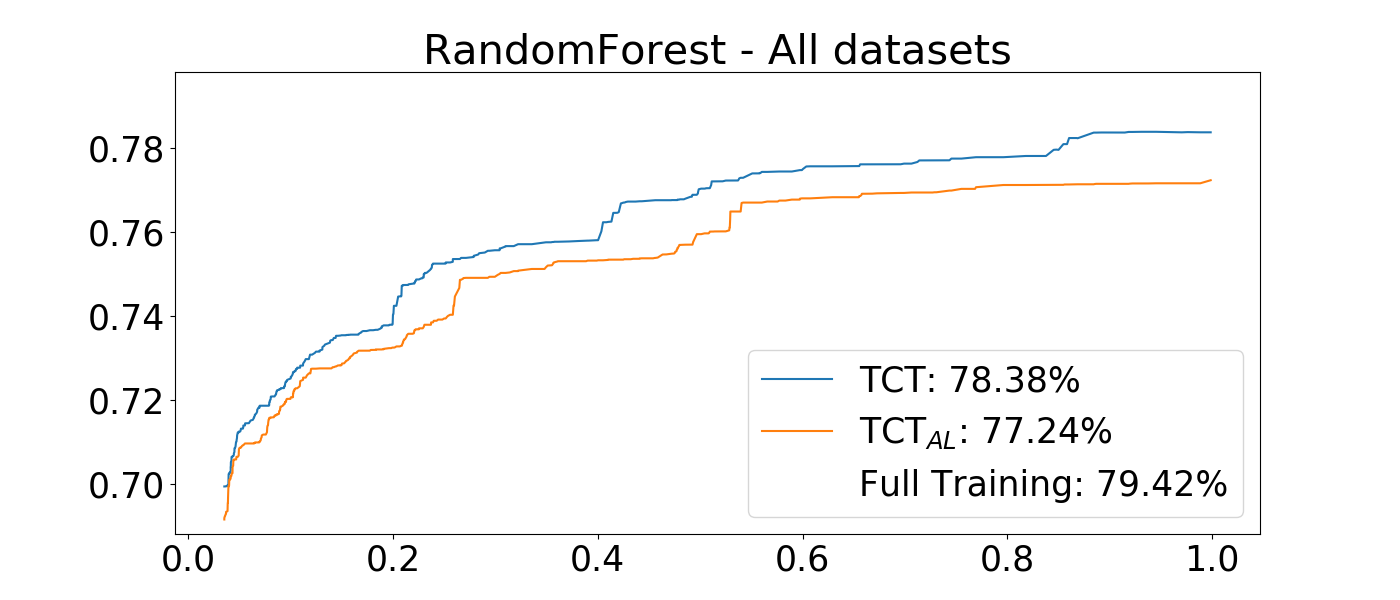}}

\begin{figure}
\begin{center}
%\begin{tabular}{l*2{C}@{}}
\begin{tabular}{lcc}
%\toprule
& \addDecTreeTCTvsALsavebest  & \addLogRegTCTvsALsavebest    \\ 
&  \addRandForestTCTvsALsavebest & \addLGBMTCTvsALsavebest  \\

%\bottomrule 
\end{tabular}
\end{center}
\caption{Average accuracies on testing set along normalized time for TCT and TCT$_{AL}$. The numbers next to the labels are their average accuracies at the last normalized time limit $t=1$.}
\label{fig:tct-vs-alsavebest-appendix}
\end{figure}

%We also tested a dynamic strategy to set $\alpha$: at each round it is set as $(1-acc_1)$ in line \ref{Class12}
%of \TCT. The advantage is that we limit the maximum number of examples classified per round.
%The results were very competitive against the best parameter of Table \ref{tab:parameter-alpha} (See Appendix C).
%\lnote{Tem que incluir}

% with respect to  methods that
%randomly sample examples.

\remove{

Each histogram in Table \ref{tab:histograms} corresponds to a different value of $\alpha$ in the
set $\{5,10,20,30\}$. Each bar is associated with an interval  of length 0.5 and its
height gives the number of pairs (Learner,Dataset) for which the difference of accuracy between  
${\tt TCT}$ and ${\tt Double}$ lies in the interval. The central interval is $[-0.25,0.25]$.
As expected,  the concentration around 0 gets smaller as $\alpha$ increases.
In addition, the number of pairs for which {\tt TCT} has a significant advantage over  {\tt Double} 
also increases. 

}

%DecisionTree-alpha=10

%\section{Final Remarks}
%We developed XXX

%\pagebreak

%\section*{References}

\remove{

For some specific Learners, a natural way to approach Time Constrained Learning is via 
Stochastic Gradient Descent (SGG).
We make some experiments to verify how \TCT compares with SGD.
Among our Learners,  {\tt SVM} and {\tt Logistic Regression}
admit a direct online training via SGD.
We trained them using the library XXX from {\tt scikit-Learn},
using partial fit and default parameters. The size of the mini-batch was set to 256.
\red{CONFERIR DADOS DA TABELA}
Table \ref{tab:SGD} show our results.
}

\section{Theoretical Analysis} 
\label{sec:ProvableGuarantees}
To complement our work,  we present  theoretical results that provide
some insight on how the parameter $\alpha$
affects  the performance of $\TCT$.
The main conclusions of this analysis are:
 a large $\alpha$ can be harmful;
 with a small $\alpha$ $\TCT$ is never much worse
than random sampling  and, in some situations,
it is significantly better. 
To carry out the analysis we consider the following setting:

\begin{itemize}
\item[(i)] The training algorithm has access to as many labeled examples from the unknown distribution $\mu$ as it wants;

\item[(ii)] The  training time of the learner ${\cal L}$ 
can be approximated by a non-sublinear function that
does not grow very fast, that is,  training ${\cal L}$ with $m$ examples takes time $m^k f(m)$, where $k$ is a small positive integer and $f$ is a sublinear non-decreasing function;

\item[(iii)] The learner is an empirical risk minimizer (ERM), that is, it returns a hypothesis $h$
in its hypothesis class $\cH$ that makes the smallest number of mistakes in the set of examples $S$ it receives; 

\item[(iv)] It takes no time to pick an example and have it classified by the learner ${\cal L}$ using its
current classification model
\end{itemize}

Assumption (i) can be approximated by having a huge dataset $\D$ sampled from $\mu$ and is exactly the one that motivates our research,
since for smaller datasets Time-Constrained Learning is not particularly relevant.
The second assumption is also reasonable in the sense that most of the known learning methods
neither  take sublinear time nor
have a high time complexity.  
Assumption (iii) is a standard assumption employed to perform theoretical analyses.

  With regards to the last item, it is motivated by the  quite common  situation in which the classification time is very small compared to the training time (e.g. decision trees and SVM's).
In fact, we could have replaced  assumption (iv) by a weaker 
one but that  would compromise the clarity of the presentation without
changing our main conclusions regarding the impact of the parameter $\alpha$.

%\mnote{Nao sei se eh otima ideia dizer ``tempo constante'', pois precise pelo menos olhar pra features; se o revisor tiver papers sobre reduzir %a complexidade/dependencia no numero de features, pode achar muito estranho. Como ainda eh muito irreal assumir constante, nao sei se ganhamos %com esse comentario. Podiamos so dizer que a analise pode ser estendida com hipotese mais fraca, mas isso suja os resultados?}

%was made for the sake of a cleaner analysis.
%The insight that our analysis provides if we replace $0$
%by some constant does not change.

	To understand the accuracy of the models obtained by \TCT, we analyze a stripped-down version of the algorithm denoted by	\TCTbase; we compare it against the batch teacher \PACt that receives a time limit $T$ and simply sends to the learner in one round the largest number of random examples from $\mu$ that the learner can be trained over within time $T$. As \TCT, in each round \TCTbase sends to the learner a  $(1-\alpha)$ fraction of examples from the original distribution and an $\alpha$ fraction of examples where the learner is currently wrong. The pseudo-code of \TCTbase is presented below.

%\vspace{8pt}
\begin{algorithm}[H]
%\vspace{5pt}

Start with a random set $S_0$ with a single labeled example from $\mu$.

%\vspace{4pt}
For each round $i$ (starting with $i=1$):

\begin{enumerate}
	\item Run the \learner on examples $S_i$.
	%Send examples $S_i$ to an ERM learner, 
	Get back hypothesis $h_i$
	
	\item Get $(1-\alpha)\, 2^i$ unbiased labeled examples from $\mu$.
	
	Then repeatedly sample from $\mu$ until getting $\alpha\, 2^i$ labeled examples $(x,y)$ where $h_i$ is wrong, namely $h_i(x) \neq y$.
	
	\item Set $S_{i+1}$ as $S_i$ plus these new $2^i$ samples
\end{enumerate}

Return the last hypothesis $h_i$ found within the time limit $T$
	
  \caption{\TCTbase ($\alpha $: real parameter; $T$: time limit) }
  \label{alg:simpler}
\end{algorithm}

We note that due to assumption (iv),
the second step of the algorithm incurs negligible running time.	 
If $h_i$ has a small error, however, this  assumption  becomes
 unrealistic since we would need to sample a huge number of examples  
from $\mu$ to obtain $\alpha 2^i$ wrong ones. 
This issue can be fixed by stopping
the algorithm as soon as it obtains a hypothesis with error at most some $\varepsilon$ (e.g. $< 1\%$). This only incurs an additional $+\varepsilon$ in the bounds of Theorems \ref{thm:fallbackR} and \ref{thm:expImprov} presented further in this section.

%$ The goal is, given only a limited number of samples $S = ((X_1,Y_1),\ldots,(X_m,Y_m))$ from $\mu$, to learn a %classifier $h$ with smallest error $\err(h)$ possible.

	Before presenting our results,	we briefly recall some definitions  from statistical learning.
	For an unknown distribution $\mu$ over labeled examples $\cX \times \cY$, the true error of a classifier $h$ is $$\err(h) := \Pr_{(X,Y) \sim \mu}(h(X) \neq Y).$$ Given a set of samples $S = ((X_1,Y_1),\ldots,(X_m,Y_m))$ from $\mu$, the sample error of $h$ is $$\err_S(h):= \frac{1}{m} \sum_{i = 1}^m \ones(h(X_i) \neq Y_i).$$ 
Let $\cH$ be the Learner's set of hypotheses. In the \emph{realizable} setting all the labels of the samples are given by a $h^* \in \cH$, namely $y = h^*(x)$ for every $(x,y)$ in the support of $\mu$. 
If the setting is not realizable then it is called \emph{agnostic}.

%############################################################
%############################################################
%############################################################
%############################################################

%############################################################
%############################################################
%############################################################
%############################################################

%\subsection{Fallback analysis}
\medskip
\noindent{\bf Fallback analysis.}
As discussed in Principle 3 above, there is a concern that by including ``wrong examples'' we bias the distribution of the examples sent to the learner and compromise the real accuracy of the hypothesis learned. Indeed, we construct a small instance where \TCTbase set with a large value of $\alpha$ (thus, sending a large fraction of ``wrong examples'') is significantly worse than \PACt; see Appendix 
\ifSUPP \ref{app:badExample}  \else B.1 \fi for details.

%In Appendix 
%\ifSUPP \ref{app:badExample}  \else A \fi 
%we construct a small instance showing that this can indeed happen: simulations show that \TCTbase sending $\alpha=90\%$ of ``wrong examples'' only returns the best classifier $8\%$ of the time even after sending $2^{21}-1 \approx 2,000,000$ examples to the \learner, while \PACt sending $1,000$ examples to the \learner returns the best classifier $100\%$ of the time.  In particular, in this situation $\PACt$ is clearly much more time-efficient, since the \learner is trainned with much fewer examples. \mnote{Uma coisa chata eh que aqui foltamos a falar de numero de smaples ao inves de tempo. Sera que nao eh melhor so dizer ``damos exemplo onde  usar $\alpha$ grande eh ordens de magnitude pior (sem especificar a metrica), veja apendice XX''?}

\remove{	Despite this difficulty we prove that as long as the percentage $\alpha$ of wrong examples is not so big \TCTbase never requires many more samples than \PACt to obtain comparable accuracy. To make this precise, first consider the realizable case and let $\ERM^R_{\cH}(\e, \delta)$ (``realizable ERM bound'') be the smallest number of samples $m$ so that for every distribution $\mu$ we have $$\Pr_{S \sim \mu^m}\big(\textrm{There is $h \in \cH$ such that $\err(h) > \e$ and $\err_S(h) = 0$}\big) \,\le\, \delta.$$ Recall the standard bound $\ERM^R_{\cH}(\e, \delta) = O(\frac{1}{\e} (d \log \frac{1}{\e} + \log \frac{1}{\delta}))$,
	where $d$ is the VC-dimension of $\cH$  (see for example Chapter 4.5 of \citep{anthonyBartlett}). For realizable instances, $\ERM^R_{\cH}(\e,\delta)$ essentially characterizes the sample complexity of ERM learning: $\ERM^R_{\cH}(\e,\delta)$ samples are enough for ERM to learn a classifier with error at most $\e$ with probability $\ge 1- \delta$, and $\Omega(\ERM^R_{\cH}(\e,\delta))$ are necessary in some cases~\citep{LBrealizableERM}. Thus,  within constants, $\ERM^R_{\cH}(\e, \delta)$ is the best possible bound on the the number of samples that $\PACt$ needs to send to the learner to obtain error $\e$ with probability at least $1-\delta$. We show that \TCTbase never needs many more examples than this (proof in Appendix \ifSUPP \ref{app:fallbackR}). \else .1). \fi  
%	The next observation states that roughly $\frac{1}{1-p}$ times these many samples also suffices for \textsc{SimplerDoubling}, the intuition being that a $(1-p)$-fraction of the samples it draws are unbiased from $\mu$ and they already suffice for learning. 
}

\remove{
Despite this difficulty, we prove that as long as the percentage $\alpha$ of wrong examples is not so big, even in the worst case \TCTbase returns a hypothesis with the same accuracy as \PACt as long as it is given a slightly bigger time limit (again, crucially this includes the total time consumed by the \learner during the executions of these Teachers). 
%the time for running of \TCTbase is never never sends many more samples to the Learner than \PACt to obtain comparable accuracy. Under our assumptions this also implies that \TCTbase does not require much more time than \PACt. 
}

Despite this difficulty, we prove that  even in the worst case \TCTbase returns a hypothesis with the same accuracy as \PACt as long as:
the percentage $\alpha$ of wrong examples is not so big and  it is given a slightly bigger time limit (again, crucially this includes the total time consumed by the \learner during the executions of these algorithms).

To make this concrete, we discuss here the realizable case. 
Notice that in this case any ERM learner trained by \PACt  returns some hypothesis $h$ with zero sample error.  Let
   $m_T$ be the number of examples that \PACt sends to the learner under time limit $T$ and let
 $\e_T = \e_T(\cH,\mu,\delta)$ be the smallest value such that

\remove{
To make this concrete, we first consider the realizable case.  Notice that in this case any ERM learner trained by \PACt using the set of examples $S$ returns some hypothesis $h$ with zero sample error $\err_S(h)$. Let $\e_T = \e_T(\cH,\mu,\delta)$ be such that \PACt with time limit $T$ returns a hypothesis with true error at most $\e_T$ with probability at least $1- \delta$ regardless of the ERM learner; more precisely letting $m_T$ be the number of examples that \PACt sends to the learner under time limit $T$ and letting $S$ be $m_T$ random samples from $\mu$, $\e_T$ is the smallest value such that 
}

\scalebox{0.9}{\parbox{\linewidth}{%
	\begin{align*}
		\Pr\bigg( \textrm{ $ \exists h \in \cH$ such that $\err_S(h) = 0$ but $\err(h) > \e_T$}\bigg) < \delta,   
	\end{align*}
}}

where the probability is taken over  sets $S$ of $m_T$ examples
sampled according to $\mu$.
In words, $\e_T$ is  the best provable guarantee in terms of error for \PACt,
with $1- \delta$ probability, when the time limit is $T$.

\begin{thm} \label{thm:fallbackR}
	Given $\delta \in (0,1)$ and time limit $T$,
let  $\epsilon_T$ be  defined as above.
Under assumptions (i)-(iv),	in the realizable setting, with probability at least $1- \delta$, \TCTbase returns in time at most $ T \cdot 2 (\frac{2}{1-\alpha})^{k+1} $ a classifier with error at most $\e_T$. 
\end{thm}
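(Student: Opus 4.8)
The plan is to show that \TCTbase, being a ``doubling'' scheme, reaches within the claimed budget a round $i^*$ whose training set already contains at least $m_T$ examples drawn i.i.d.\ from $\mu$, and that from that round on every hypothesis it outputs has true error at most $\e_T$. First I would isolate, inside $S_i$, the sub‑sample $U_i$ consisting of the single initial example together with the ``unbiased'' batches of size $(1-\alpha)2^j$ collected in step~2 of rounds $j<i$: these are $N_i := 1+(1-\alpha)(2^i-2) = \Theta\big((1-\alpha)2^i\big)$ points that are genuinely i.i.d.\ from $\mu$. Since the setting is realizable and the \learner is an ERM, $h_i$ has zero sample error on all of $S_i$, hence in particular $\err_{U_i}(h_i)=0$.

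Next, let $i^*$ be the first round with $N_{i^*}\ge m_T$ and let $U^* \subseteq U_{i^*}$ be the first $m_T$ unbiased examples. These are $m_T$ i.i.d.\ draws from $\mu$, so the very definition of $\e_T$ gives that, with probability at least $1-\delta$, \emph{no} $h\in\cH$ with $\err_{U^*}(h)=0$ has $\err(h)>\e_T$; call this event $\cE$. For every $i\ge i^*$ we have $U^*\subseteq S_i$ and $\err_{S_i}(h_i)=0$, so on $\cE$ every such $h_i$ satisfies $\err(h_i)\le\e_T$. The point to be careful about is precisely here: we do \emph{not} need $h_i$ to be independent of $U^*$, and the presence of the adversarially biased ``wrong'' examples in $S_i$ is harmless, because uniform convergence over $\cH$ on the fixed i.i.d.\ sample $U^*$ is insensitive to how the surviving hypothesis was selected. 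Hence, conditioned on the single event $\cE$, whichever hypothesis \TCTbase returns has error at most $\e_T$, provided it has completed round $i^*$.

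It remains to bound the time to complete round $i^*$. By assumption (iv) the rejection sampling is free, so the only cost is training, and round $i$ costs $|S_i|^k f(|S_i|)\le \tau(2^i)$ where $\tau(m):=m^k f(m)$. Since $k\ge 1$ and $f$ is non‑decreasing, $\sum_{i\le i^*}\tau(2^i)\le \tfrac{2^k}{2^k-1}\,\tau(2^{i^*})\le 2\,\tau(2^{i^*})$. Minimality of $i^*$ yields $2^{i^*}\le \tfrac{2m_T}{1-\alpha}$ (dropping additive lower‑order terms, which are absorbed once $m_T$ exceeds a small absolute constant; below that threshold $T$ is tiny and the bound is trivial). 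Reading sublinearity of $f$ as $f(\lambda m)\le \lambda f(m)$ for $\lambda\ge 1$ (true e.g.\ whenever $f(m)/m$ is non‑increasing), we get $\tau(\lambda m)\le \lambda^{k+1}\tau(m)$, so with $\lambda=\tfrac{2}{1-\alpha}$, $\tau(2^{i^*})\le \big(\tfrac{2}{1-\alpha}\big)^{k+1}\tau(m_T)$. Finally $\tau(m_T)\le T$ because \PACt trains the \learner on $m_T$ examples within time $T$. Chaining the three inequalities, \TCTbase finishes round $i^*$ in time at most $2\big(\tfrac{2}{1-\alpha}\big)^{k+1}T$, which by the previous paragraph suffices, with probability at least $1-\delta$, to have a hypothesis of error $\le\e_T$ in hand.

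The main obstacle is the probabilistic step: the stream fed to the \learner mixes i.i.d.\ draws with a biased ``currently‑wrong'' stream, so one must check that the ERM/uniform‑convergence guarantee applies at all and that a single high‑probability event simultaneously controls every round $i\ge i^*$; the resolution is that the guarantee is a statement about the fixed i.i.d.\ block $U^*$ and is agnostic to how $h_i$ is chosen. Everything else — the geometric‑series time bound and matching the constant $2\big(\tfrac{2}{1-\alpha}\big)^{k+1}$ — is routine bookkeeping once the scaling property of $f$ is used. One should also note, as the text already does, that in the realizable case the rejection sampling may fail to terminate once some $h_i$ becomes perfect; this is handled by stopping at error $\e$, which adds only $+\e$ to the guarantee and, under assumption (iv), nothing to the running time.
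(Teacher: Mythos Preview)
Your proof is correct and follows essentially the same route as the paper: identify the first round whose accumulated unbiased block has size $\ge m_T$, invoke the definition of $\e_T$ on that i.i.d.\ block (using that realizable ERM zeroes the error on any subset of the training set), and bound the cumulative training time by a geometric series plus the sublinearity of $f$. You are in fact more careful than the paper on one point: you make explicit that the single uniform event on the fixed block $U^*$ simultaneously controls \emph{every} $h_i$ with $i\ge i^*$, so the guarantee applies to whatever hypothesis \TCTbase actually returns, whereas the paper's proof treats only the hypothesis at round $\hat{i}$.
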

\begin{proof}
Again let $m_T$ be the number of samples sent by $\PACt$ when the time limit is $T$.
Moreover, let $\hat{i}$ be  the first round in which \TCTbase sends at least  $\frac{1}{1-\alpha} m_T$ samples,
that is, $ \frac{1}{1-\alpha} m_T  \le 2^{\hat{i}} \le \frac{2}{1-\alpha} m_T   $  .
Due to the assumptions (ii) and (iv), the time \TCTbase takes to finish round $\hat{i}$ is at most
\begin{align}
 &\sum_{i=0}^{\hat{i}}   \Big((2^i)^k \cdot f(2^i)\Big)   \le f(2^{\hat{i}}) \frac{ (2^{\hat{i}+1})^{k}}{2^k-1} \le
2 f(2^{\hat{i}}) 2^{k \hat{i}}  \label{eq:thm1} \\
 &\le 2 f \left ( \frac{2}{1-\alpha} m_T \right) \left ( \frac{2}{1-\alpha} m_T \right )^{k} \le
2 \left ( \frac{2}{1-\alpha} \right )^{k+1} T, \notag
\end{align}
the last inequality holding because of the sublinearity of $f$ and because $(m_T)^k \cdot f(m_T) \le T$, by definition of $m_T$. 

% or more compactly,
%$O \left (T \ctime  ( \frac{1}{1-\alpha}) ^k \right )$

Let $S$ be the set of samples sent by \TCTbase to the Learner at round
$\hat{i}$, and let $h$ be the returned hypothesis.
 The choice of $\hat{i}$ guarantees the existence of 
a subset $U$ of $S$
containing  $(1-\alpha)|S| \ge m_T$ samples that
were drawn unbiasedly from $\mu$.
Since we are in the realizable case, 
we have that $err_U(h)=0$ and by definition of $\epsilon_T$ the 
probability of $h$ having true error at most $\epsilon_T$ 
is at least $1-\delta$. 
\end{proof}

%Let $\epsilon_T$ be the smallest real $\epsilon$ for which  $\PACt$ returns a hypothesis with error at most $\epsilon$ with probability at least $1-\delta$ when the time limit is $T$. \red{[M: Sera que a gente deve citar novamente que isso 'e sob as  hipoteses (i)-(iv) acima? Senao esse $k$ aparece de surpresa]} 

\remove{
\begin{proof}
Again let $m_T$ be the number of samples sent by $\PACt$ when the time limit is $T$.
Moreover, let $\hat{i}$ be  the first round in which \TCTbase sends at least  $\frac{1}{1-\alpha} m_T$ samples,
that is, $ \frac{1}{1-\alpha} m_T  \le 2^{\hat{i}} \le \frac{2}{1-\alpha} m_T   $  .
Due to the assumptions (ii) and (iv), the time \TCTbase takes to finish round $\hat{i}$ is at most
\begin{align}
 &\sum_{i=0}^{\hat{i}}   \Big((2^i)^k \cdot f(2^i)\Big)   \le f(2^{\hat{i}}) \frac{ (2^{\hat{i}+1})^{k}}{2^k-1} \le
2 f(2^{\hat{i}}) 2^{k \hat{i}}  \label{eq:thm1} \\
 &\le 2 f \left ( \frac{2}{1-\alpha} m_T \right) \left ( \frac{2}{1-\alpha} m_T \right )^{k} \le
2 \left ( \frac{2}{1-\alpha} \right )^{k+1} T, \notag
\end{align}
the last inequality holding because of the sublinearity of $f$ and because $(m_T)^k \cdot f(m_T) \le T$, by definition of $m_T$. 

% or more compactly,
%$O \left (T \ctime  ( \frac{1}{1-\alpha}) ^k \right )$

Let $S$ be the set of samples sent by \TCTbase to the Learner at round
$\hat{i}$, and let $h$ be the returned hypothesis.
 The choice of $\hat{i}$ guarantees the existence of 
a subset $U$ of $S$
containing  $(1-\alpha)|S| \ge m_T$ samples that
were drawn unbiasedly from $\mu$.
Since we are in the realizable case, 
we have that $err_U(h)=0$ and by definition of $\epsilon_T$ the 
probability of $h$ having true error at most $\epsilon_T$ 
is at least $1-\delta$. 
 \end{proof}
}

\remove{

\begin{thm} \label{thm:fallbackR}
		If the classification instance is realizable, then with probability at least $1- \delta$, 
the \TCTbase returns a classifier with error $O( \e)$, where $\e$ is the best possible guarantee
that is obtainable, with probability $(1-\delta)$,
when the time limit is $T$.
	\end{thm}
\begin{proof}
Let $m_T$ be the number of examples sent by $\PACt$ when the time limit is
$T$. 
There existsWe have that its true error $\epsilon$ is such that  
$$c \left ( \frac{1}{\e} (d \log \frac{1}{\e} + \log \frac{1}{\delta}) \right ) \ge m_T  $$

It follows from Lemma \ref{} that 
$\TCTbase$ sends at least $m_T/ k$ samples for some
constant $k>0$.
Since $(1-\alpha)$ of them are drawn unbiasedly from
$\mu$ we have that the true error is upper bounded by
$ \epsilon'$ where
$$c' \left ( \frac{1}{\e'} (d \log \frac{1}{\e'} + \log \frac{1}{\delta}) \right ) = \frac{(1- \alpha)  m_T}{k} , $$
for some constant $c' > c$ (CORRETO).

If we set $\epsilon'= \epsilon \frac{ c' k }{ c(1- \alpha)}$

\end{proof}

	\begin{thm} \label{thm:fallbackR}
		If the classification instance is realizable, then with probability at least $1- \delta$, after using at most $\frac{2}{1-\alpha}\, \ERM^R_\cH(\e,\delta) + 1$ samples \TCTbase returns a classifier with error $\le \e$. 
	\end{thm}
}

We shall note that when $\alpha $ is small, the time overhead  
is approximately $2^{k+2}$, which is not big  due to the assumption  that $k$ is small.
A similar result for the agnostic setting is presented  in the appendix 
 \ref{app:fallbackAA}.

%Basically, it consists of showing that within the time limit
% $T$
%\TCTbase receives a set of examples that includes at least
%$m_T$ examples sampled from $\mu$. 

%##########################################################
%##########################################################

\medskip
%	\subsection{An almost exponential speedup}
\noindent {\bf An almost exponential speedup.}
	Importantly, not only \TCTbase always takes time  similar to
that of	 \PACt as long as $\alpha$ is not big, but we also show that in some cases \TCTbase 
is almost exponentially faster.

	We consider the classic problem of learning a threshold  function on the real line $\R$ (so the classifiers are of the type $h(x) = 1$ if $x \ge v$ and $h(x) = -1$ if $x < v$, for $v \in \R$), in the realizable case. This is the canonical example where Active Learning gives an exponential improvement in sample complexity compared to standard PAC learning~\citep{dasguptaTwoFaces}.
	
\remove{	 \red{[M: Talvez remover o resto do paragrafo?]} the former uses $\log (1/\e)$ samples to obtain an $\e$-error hypothesis, whereas the latter requires $1/\e$ samples. This improvement is achieved by simply performing binary search for the right threshold $v^*$. 
}

The next theorem (proof in 
\ifSUPP	 Appendix \ref{app:expImprov} \else  
Appendix B.4\fi)	
 shows that even though the teaching algorithm \TCTbase is \underline{not tailored} to this problem, it also achieves an almost exponential speedup. Recall that $2^{O(\sqrt{\log x})}$ is asymptotically smaller than $x^c$ for any constant $c > 0$. 
	
	\begin{thm}[Improvement over \PACt] \label{thm:expImprov}
		Consider $\e,\delta \in (0,1)$. Let $T_{\PACt}$ be the smallest time limit that guarantees that $\PACt$ returns a hypothesis with error at most $\e$ with probability at least $1-\delta$ for all realizable instances of the problem of learning a threshold function on the real line, and define $T_{\TCTbase}$ analogously.

%		\footnote{To avoid measurability issues we further assume that $\mu$ is a continuous distribution and that the ERM learner with $m$ %samples is a measurable function $\R^m \rightarrow \cH$ (with $\cH$ identified with $\R$, by identifying $h \in \cH$ with its corresponding %threshold $v$).}
		
%		
%		Consider realizable instances of the problem of learning a threshold function on the real line.\footnote{To avoid measurability issues we further assume that $\mu$ is a continuous distribution and that the ERM learner with $m$ samples is a measurable function $\R^m \rightarrow \cH$ (with $\cH$ identified with $\R$, by identifying $h \in \cH$ with its corresponding threshold $v$).} Then:
%

Then, $T_{\TCTbase} \le 2^{\,c \cdot \sqrt{\log T_{\PACt}}},$ where $c$ is a constant that depends on $k,\alpha,\delta$.  		

%		Then $$\textstyle T_{\PACt} \ge \Big(\Omega(\frac{1}{\e} \log \frac{1}{\delta})\Big)^k$$ and $$\textstyle T_{\TCTbase} \le %\Big(2^{O(\sqrt{\log 1/\e})} \cdot \log \frac{1}{\delta} \cdot \big(\frac{1}{\alpha}\big)^{O(1)}\Big)^{k+1}.$$ In particular $$T_{\TCTbase} \le %2^{\,c \cdot \sqrt{\log T_{\PACt}}},$$ where $c$ is a constant that depends on $k,\alpha,\delta$.  		
		
%		\vspace{-3pt}
%		\begin{enumerate}[leftmargin=18pt]
%			\item \PACt needs $\Omega(\frac{1}{\e} \log \frac{1}{\delta})$ samples to learn  with probability $1-\delta$ a classifier with error at most $\e$. 
				
%			\item With probability at least $1-\delta$, \TCTbase learns  a classifier with error at most $\e$ in at most $O\big(\log \frac{1}{\alpha} + \sqrt{\log \frac{1}{\e}}\big) + \log \log \frac{1}{\delta}$ rounds, and \red{sending} at most $$\textstyle 2^{O(\sqrt{\log 1/\e})} \cdot \log \frac{1}{\delta} \cdot \big(\frac{1}{\alpha}\big)^{O(1)}$$ samples, for $\e \le \delta$.
%		\end{enumerate}
	\end{thm}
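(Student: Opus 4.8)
The plan is to show that, because in round $i$ the set of $\alpha\,2^i$ ``wrong'' examples is drawn from exactly the region where the current hypothesis errs, \TCTbase ends up doing something close to (an aggressive) binary search for the true threshold $v^\star$, so the error of the hypothesis it holds after round $i$ shrinks \emph{doubly} exponentially in $i$. Then only $i^\star = O_{\alpha,\delta}(\sqrt{\log(1/\e)})$ rounds are needed to reach error $\e$; since round $i$ costs time $(2^i)^k f(2^i)$ by assumptions (ii) and (iv), the whole run costs $2^{O(\sqrt{\log(1/\e)})}$; and since learning a threshold has sample complexity $\Omega(1/\e)$, \PACt needs time $\Omega((1/\e)^k)$, which converts the previous bound into the stated one.

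Concretely, I would fix the true threshold $v^\star$ and write $v_i$ for the threshold of $h_i$; in the realizable case $\err(h_i)=\mu(J_i)$, where $J_i$ is the interval with endpoints $v_i$ and $v^\star$. The one probabilistic tool I need is an elementary gap bound: if $Z_1,\dots,Z_n$ are i.i.d.\ from a distribution $\nu$ on an interval, then with probability $\ge 1-\delta'$ the part of that interval lying between $\max_j Z_j$ (resp.\ $\min_j Z_j$) and a fixed endpoint has $\nu$-mass at most $\tfrac1n\ln\tfrac1{\delta'}$ --- a union bound over the $\le n$ possible gaps. Applying it in round $i$ to the $\alpha\,2^i$ samples drawn from $\mu$ conditioned on $J_i$ (a measure of total mass $\err(h_i)$ having $v^\star$ as an endpoint of its support), the sample closest to $v^\star$ pins every threshold consistent with $S_{i+1}$ --- in particular the ERM's output $v_{i+1}$ --- to within $\mu$-mass $\tfrac1{\alpha 2^i}\ln\tfrac1{\delta_i}$ of $v^\star$ on the side of $v^\star$ occupied by $v_i$. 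The key inequality I would aim to establish is therefore, on a good event,
\[
\err(h_{i+1}) \;\le\; \frac{\ln(1/\delta_i)}{\alpha\,2^i}\;\err(h_i).
\]
Choosing $\delta_i=\delta\cdot\tfrac{6}{\pi^2(i+1)^2}$ so a union bound over all rounds costs $\le\delta$, writing $C=O(\log(i^\star/\delta))$ for an upper bound on $\ln(1/\delta_i)$, and iterating, one gets $\err(h_i)\le (C/\alpha)^i/2^{i(i-1)/2}=2^{-\Omega(i^2)}$ for $i$ past a constant that depends only on $\alpha,\delta$; hence $\err(h_{i^\star})\le\e$ for some $i^\star=O_{\alpha,\delta}\!\big(\sqrt{\log(1/\e)}\big)$.

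For the time bound I would use assumptions (ii),(iv): the time \TCTbase spends through round $i^\star$ is $\sum_{i\le i^\star}(2^i)^k f(2^i)\le 2\,(2^{i^\star})^k f(2^{i^\star})$, and sublinearity of $f$ gives $f(2^{i^\star})\le 2^{i^\star}$, so this is $2^{(k+1)i^\star+O(1)}=2^{\,c_1\sqrt{\log(1/\e)}}$ with $c_1=c_1(k,\alpha,\delta)$; thus $T_{\TCTbase}\le 2^{c_1\sqrt{\log(1/\e)}}$. For \PACt, thresholds on $\R$ have VC dimension $1$ and the classical lower bound (e.g.\ the uniform distribution on $[0,1]$ with $v^\star=1/2$) shows any batch learner that is $\e$-accurate with probability $\ge 1-\delta$ on all realizable instances must use $m=\Omega(1/\e)$ examples, so by (ii) $T_{\PACt}\ge m^k f(m)\ge (\Omega(1/\e))^k\ge c_2/\e^{\,k}$; hence $\log(1/\e)\le\tfrac1k(\log T_{\PACt}+O(1))$ and $\sqrt{\log(1/\e)}\le\tfrac1{\sqrt k}\sqrt{\log T_{\PACt}}+O(1)$. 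Substituting into the bound for $T_{\TCTbase}$ yields $T_{\TCTbase}\le 2^{\,c\sqrt{\log T_{\PACt}}}$ with $c=c(k,\alpha,\delta)$.

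The part I expect to be the main obstacle is making that per-round recursion rigorous. Since an ERM only promises \emph{some} threshold consistent with $S_{i+1}$, it could in principle return one on the side of $v^\star$ opposite to $v_i$, where up to that point only the unbiased examples --- not the ``wrong'' ones --- have trimmed the consistent interval; ruling this out (or bounding its effect) requires using that the unbiased samples of all previous rounds accumulate and that the consistent interval only shrinks across rounds, so that opposite side has been squeezed too. This is also where the dependence of the constant on $\alpha$ (and on $k$, through the time accounting) enters. The remaining pieces --- the gap bound, the per-round failure budget $\delta_i$ with its harmless $\mathrm{polylog}$ factors, and the geometric-series time estimate --- should be routine, and a similar but more delicate argument would be needed for the agnostic analogue.
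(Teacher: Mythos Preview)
Your overall plan matches the paper's: lower-bound $T_{\PACt}$ by $\big(\Omega(\tfrac{1}{\e}\log\tfrac{1}{\delta})\big)^k$ via the standard sample-complexity lower bound for thresholds, upper-bound $T_{\TCTbase}$ by showing that $R=O_{\alpha,\delta}(\sqrt{\log(1/\e)})$ rounds suffice, then convert using the time estimate $\sum_{i\le R}(2^i)^k f(2^i)\le 2^{(k+1)R+O(1)}$. The ``aggressive bisection'' intuition is exactly right.

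The obstacle you flag is the real one, and your proposed recursion $\err(h_{i+1})\le \tfrac{\ln(1/\delta_i)}{\alpha 2^i}\,\err(h_i)$ does not hold as written: if $h_{i+1}$ lands on the opposite side of $v^\star$ from $h_i$, its error is controlled not by $\err(h_i)$ but by how far the sample set has pinned that other side. Your suggested fix---appealing to the accumulated unbiased samples---does not close the gap on its own: after $R=O(\sqrt{\log(1/\e)})$ rounds there are only $\approx(1-\alpha)2^R=2^{O(\sqrt{\log(1/\e)})}$ unbiased samples, which bound the off-side error by $\approx 2^{-O(\sqrt{\log(1/\e)})}\gg\e$. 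The paper's resolution is the one your ``consistent interval only shrinks'' remark is reaching for: it tracks the two-sided \emph{uncertainty interval} $I_i$ (the maximal interval around $v^\star$ containing no sample of $S_i$), split into left- and right-$\mu$-masses $\Lw(I_i)$ and $\Rw(I_i)$. Since $E_i\subseteq I_i$, in every round the side currently carrying the error shrinks by a factor $\approx\alpha 2^{i/4}$ while both sides are monotone non-increasing; the reductions are then accumulated \emph{per side} over the rounds, partitioned according to which side the error was on.

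A second, smaller difference: the paper splits the $\alpha 2^i$ wrong samples as a $\tfrac{1}{\alpha 2^{i/4}}$ shrinkage factor together with a doubly-exponentially small failure probability $e^{-2^{3i/4}}$, and then uses Freedman's martingale inequality to bound the number of ``bad'' rounds among the last third. Your alternative---take the full $\tfrac{\alpha 2^i}{\ln(1/\delta_i)}$ shrinkage with $\delta_i=\Theta(\delta/i^2)$ and a plain union bound---is simpler and yields the same $R=O(\sqrt{\log(1/\e)})$; the extra $\ln(1/\delta_i)$ factors are indeed harmless. Once you replace $\err(h_i)$ by the left/right masses of the uncertainty interval, your union-bound route goes through and is arguably more elementary than the paper's martingale argument.
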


\remove{	
	The high-level idea of the proof of the upper bound on $T_{\TCTbase}$ is that in round $i$, with good probability the examples selected by \TCTbase reduce the $\mu$-weight of the ``uncertainty interval'' for the location of $v^*$ by a factor of $\frac{1}{\alpha 2^{i/4}}$. Chaining this over all rounds gives that in roughly $\log \frac{1}{\alpha} + \sqrt{\log \frac{1}{\e}}$ rounds the uncertainty interval has weight at most $\e$, and so the learner will output a hypothesis with error at most $\e$. This yields a total of roughly $2^{\log \frac{1}{\alpha} + \sqrt{\log \frac{1}{\e}}} = \frac{1}{\alpha} 2^{\sqrt{\log \frac{1}{\e}}}$ samples used, and running time $\approx (\frac{1}{\alpha} 2^{\sqrt{\log \frac{1}{\e}}})^k$. But since such large reduction on the uncertainty interval is not guaranteed for all rounds and scenarios, and the examples selected in one round depend on the history up to that point, we need to employ martingale concentration arguments to guarantee that enough many of these rounds are actually ``good''. 
	}

%##########################################################
%##########################################################
%##########################################################
%##########################################################

\section{Concluding Remarks}
We introduced the time-constrained learning task
 and the algorithm \TCT for tackling it.
 Our algorithm
 relies on  methodologically sound ideas, is supported by theoretical results, and, most importantly, experiments including 20 datasets, 5 different Learners,  and two other baselines suggest that it is a good choice for the time-constrained
learning task. 
Due to its simplicity and generality, \TCT could be easily implemented as a wrapper in machine learning libraries to address Time Constrained
Learning 

As a  future work it would be interesting to  investigate  ways of mixing examples selected by \TCT and active learning strategies. This
seems to be a promising direction as indicated by the experiments
presented in Section \ref{sec:active-learning}.

\bibliographystyle{unsrtnat}
\bibliography{ICML2020}

%\bibliography{sn-bibliography}

%\ifSUPP

\appendix

 \section{Experimental Study: Additional Details} \label{app:additional}

%\subsection{Details about the learners}
 %\label{sec:learner-details}

\subsection{Dataset transformations}

We performed some transformations on the datasets.

\begin{itemize}
    \item Each dataset was randomly shuffled.
    \item Each dataset (with size $m$) was split into a \textit{training set} (with size $0.7 \cdot m$) and a \textit{test set} (with size $0.3 \cdot m$). The split ensures that both sets have (roughly) the same class distribution as the original set \footnote{The \textit{minist} dataset were already split, so the relatives sizes of the training set and the dataset in this case are not $0.7$ and $0.3$}.
%    \item In each dataset, the labels were assigned values from $0$ to $n_{classes}-1$
    \item Each non-numerical feature with $n_{categories}$ possible values were converted into $n_{categories}$ binary features, with one of them 1, and all others 0.
    \item Each numerical feature  was standardized.
    
%    underwent \textit{standardization}, i.e., for each feature, we removed its mean value and (if it is not constant) %scaled it by dividing by its standard deviation\footnote{This transformation is done based on the training set data and %is applied both in the training set and the test set}.
\end{itemize}

\subsection{Dataset sources}

Most of the datasets were obtained from the OpenML Repository, the UCI Machine Learning Repository and Kaggle. Most of the datasets have the ``Public Domain licence type". Below we make some additional citation requests:

\begin{itemize}
    \item Diabetes130US: Beata Strack, Jonathan P. DeShazo, Chris Gennings, Juan L. Olmo, Sebastian Ventura, Krzysztof J. Cios, and John N. Clore, “Impact of HbA1c Measurement on Hospital Readmission Rates: Analysis of 70,000 Clinical Database Patient Records,” BioMed Research International, vol. 2014, Article ID 781670, 11 pages, 2014.
    \item covtype: copyright for Jock A. Blackard and Colorado State University.
    %\item BNG\_letter\_5000\_1
    %\item SantanderCustomerSatisfaction
    %\item BNG\_spectf\_test
    %\item BNG\_wine
    \item vehicle\_sensIT: M. Duarte and Y. H. Hu.
    \item MiniBooNE: B. Roe et al., 'Boosted Decision Trees, an Alternative to Artificial Neural Networks' \url{https://arxiv.org/abs/physics/0408124}, Nucl. Instrum. Meth. A543, 577 (2005).
    %\item BNG\_eucalyptus
    %\item BNG\_spambase
    \item cifar\_10: Alex Krizhevsky (2009) Learning Multiple Layers of Features from Tiny Images, Tech Report.
    %\item jannis 
    %\item volkert
    %\item BNG\_satimage
    \item GTSRB-HueHist: \url{https://www.openml.org/d/41990}
    %\item BNG\_mfeat\_fourier
    \item aloi: \url{ http://www.csie.ntu.edu.tw/~cjlin/libsvmtools/datasets/multiclass.html}
\end{itemize}
\subsection{TCT $\times$ OSCT - additional comparisons}
\label{sec:TCT-OSCT-Additional}

\begin{figure*}[!h]
	\centering
	
	\fbox{
		\hspace{-10pt}
		\begin{minipage}{0.95\textwidth}
			\small
			\vspace{4pt}
			\hspace{5pt} {\bf Algorithm} 
			\OSCT			
			\vspace{4pt}
			
			\hspace{5pt} \textbf{Input:} Set of $m$ examples $\cX$, (guess of) the number of \learner's hypotheses $N$
			\vspace{-2pt}
			
			\begin{enumerate}
				\item Initialize weights $W^0_e = \frac{1}{2m}$ for all examples $e \in \cX$
				
				\item For each round $t = 1,2,\ldots$:
				\vspace{-3pt}
				\begin{itemize}
					\item Receive hypothesis $H_t \in \cH$ from the Learner
					\item If $H_t$ is correct in all examples, stop and return $H_t$
					
					%					\item Otherwise let $\wrong(H_t)$ be the set of examples where $H_t$ is wrong 
					
					\item \textbf{(Weight update)} Double the weights of all wrong examples until their weight adds up to at least 1. That is, define
					\begin{align*}
						W_e^t = \left\{\begin{array}{ll}
							2^\ell \cdot W_e^{t-1}&,~\textrm{if $e \in \wrong(H_t)$}\\
							W_e^{t-1}&,~\textrm{if $e \notin \wrong(H_t)$},
						\end{array}\right.
					\end{align*}
					where $\ell$ is the smallest non-negative integer such that $W^t(H_t):=\sum_{e \in wrong(H_t)} W^t_{e} \ge 1$
					
					\item \textbf{(Sending examples)} For every example $e$, let $D_e^t := W_e^t - W_e^{t-1}$ be the weight increase of example $e$ (note $D_e^t = 0$ if $H_t$ is not wrong on $e$) 		
					
					\vspace{1pt} Repeat 4log $N$ times: sample at most one example so that $e$ is sampled with probability $D^t_e$, and send it to \learner together with its correct label (note that $H_t$ is wrong on this example)
					
					\item If no examples were sent, return \OSCT ($\cX$, $N^2$, $\iweight$)			
				\end{itemize} 
			\end{enumerate}
			\vspace{-6pt}
		\end{minipage}
	}
	\caption{Teacher's algorithm based on an algorithm for the Online Set Covering problem.}
	\label{fig:algRealizable}
\end{figure*}

%\marginpar{\scriptsize  {\arabic{mynotes}.\ {\sf \textcolor{red}{S: IS IT NECESSARY TO MENTION THE ORIGIN OF THE %IMAGE?}}}}

The \OSCT algorithm corresponds to the algorithm ${\cal A}_{base}$ that was discussed  and empirically evaluated in~\citep{DBLP:conf/icml/CicaleseFLM20}. ${\cal A}_{base}$ is a refinement of the algorithm proposed in \cite{conf/icml/Dasgupta0PZ19}.
 It maintains weights $W^t_e$ over the examples $e \in \cX$ for each round $t$. When a new hypothesis $h$ comes from the Learner, the Teacher verifies whether $h$ makes
no mistakes on the examples from ${\cal X}$. If so, it accepts $h$. Otherwise, it increases in exponential fashion the weights of the examples where $h$ fails until the sum of these weights becomes at least 1; then it randomly sends examples to Learner with probability proportional to the increase of the weights of the examples in this round. If no example is sent by the end of the round, the algorithm starts again with a new guess of $N$. 
Although not explicitly stated in the pseudo-code,  in our use for Time Constraint Learning, $\OSCT$ returns the last model trained within the given time limit.

%This version of the algorithm is for the case where $h^* \in \mathcal{H}$, but it can be extended to the more realistic %case where this condition cannot be guaranteed (see~\citep{DBLP:conf/icml/CicaleseFLM20}). We used $\iweight = m$ in our %experiments.

We   tested some variations for the \OSCT with the aim of improving its performance.
First, we increased the initial  guess $N$ on the size of the Learner's class. By doing so we 
prevent the  Teacher sending  few examples in the first rounds.  Figure~\ref{fig:tct-vs-osct-n-meioperc-appendix} shows
 the results for  $N=2^{0.005m}$, which ensures that the Teacher sends approximately  $0.005m$ wrong examples per
 round before the estimation of $N$ is updated.
 We observe that these new results are very similar to those presented in Figure \ref{fig:tct-vs-osct}, that is,
there was no significant impact.

%We also tested to start the algorithm with $N=2^{0.005m}$ (default value: $N=2^1$) in order not to waste too much time sending a few examples in the first iterations, however, as can be seen in the Figure~\ref{fig:tct-vs-osct-n-meioperc-appendix}, the results were very similar to those previously discussed. 

In another attempt,  we adopted the same approach employed by \TCT
to select the final classification  model:
among the several models built by \OSCT within the time limit, we return the one with 
the largest lower limit for the (95$\%$) accuracy's  confidence interval.
This incurs no additional cost because $\OSCT$, by design,
classifies all the examples from the training set to select the new ones that are sent to the Learner.
The results for this test are shown in Figure~\ref{fig:tct-vs-osct-savebest-appendix}, where a significant improvement of \OSCT  can be observed, in particular with regards to its stability.
Despite of this improvement,  \TCT still outperforms \OSCT for all Learners and for every (normalized) time $t \in [0,1]$.

\remove{
used the learner feedbacks as estimates of accuracy for each iteration and return the model with the best lower limit of accuracy. This test comes naturally since, like  \TCT, the \OSCT also classifies examples at each iteration and is able to obtain accuracy estimates without additional cost. The results for this test are shown in Figure~\ref{fig:tct-vs-osct-savebest-appendix}, where a significant improvement of the \OSCT method can be observed, however \TCT still outperforms \OSCT for all Learners for every (normalized) time $t \in [0,1]$.
}

\remove{
In another test aiming to improve the performance of the \OSCT we used the learner feedbacks as estimates of accuracy for each iteration and return the model with the best lower limit of accuracy. This test comes naturally since, like the \TCT, the \OSCT also classifies examples at each iteration and is able to obtain accuracy estimates without additional cost. The results for this test are shown in Figure~\ref{fig:tct-vs-osct-savebest-appendix}, where a significant improvement of the \OSCT method can be observed, however \TCT still outperforms \OSCT for all Learners for every (normalized) time $t \in [0,1]$.
}

%\marginpar{\scriptsize  {\arabic{mynotes}.\ {\sf \textcolor{red}{S: Adicionar teste que dobra o tamanho do conjunto? %Altera o algoritmo e acho que fica o questionamento se não poderia ser melhor também retornando o melhor avaliado}}}}

%para adicionar figuras, basta usar o comando: \newcommand{\addp}{\includegraphics[width=19em]{RandomForest - All datasets.png}}

% Figure: Dasgupta's algorithm starting with n=0.005*m

\newcommand{\addDecTreeTCTvsOSCTnMeioPerc}{\includegraphics[width=19em]{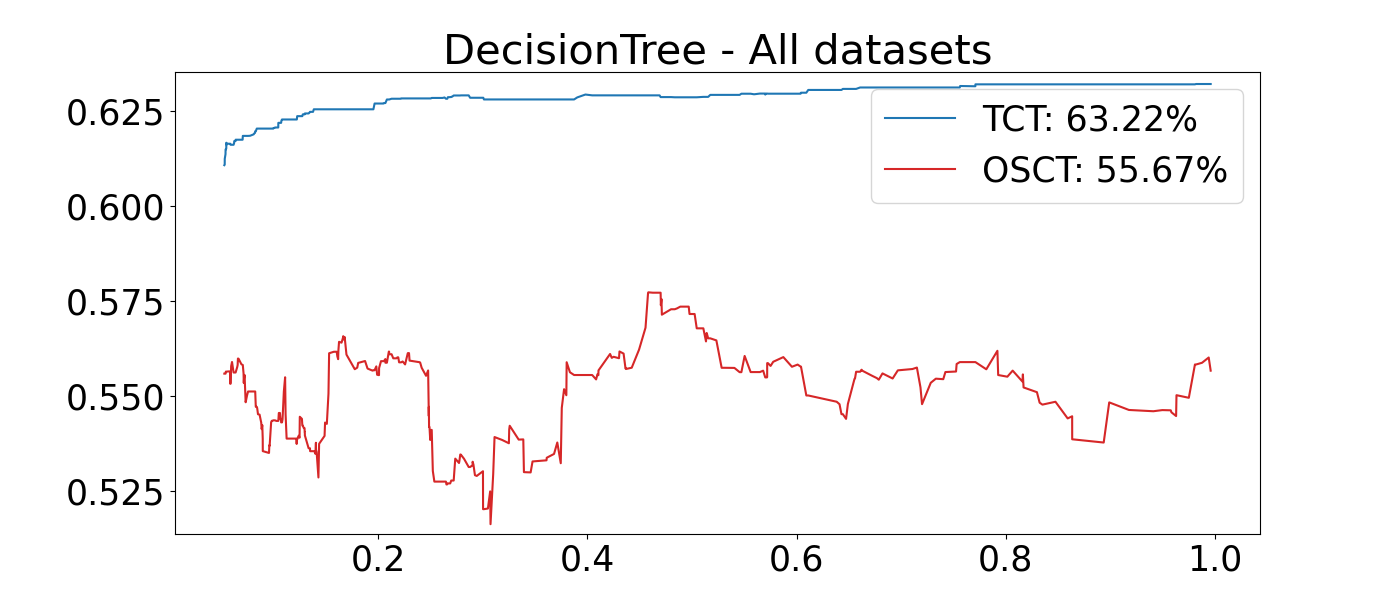}}

\newcommand{\addLGBMTCTvsOSCTnMeioPerc}{\includegraphics[width=19em]{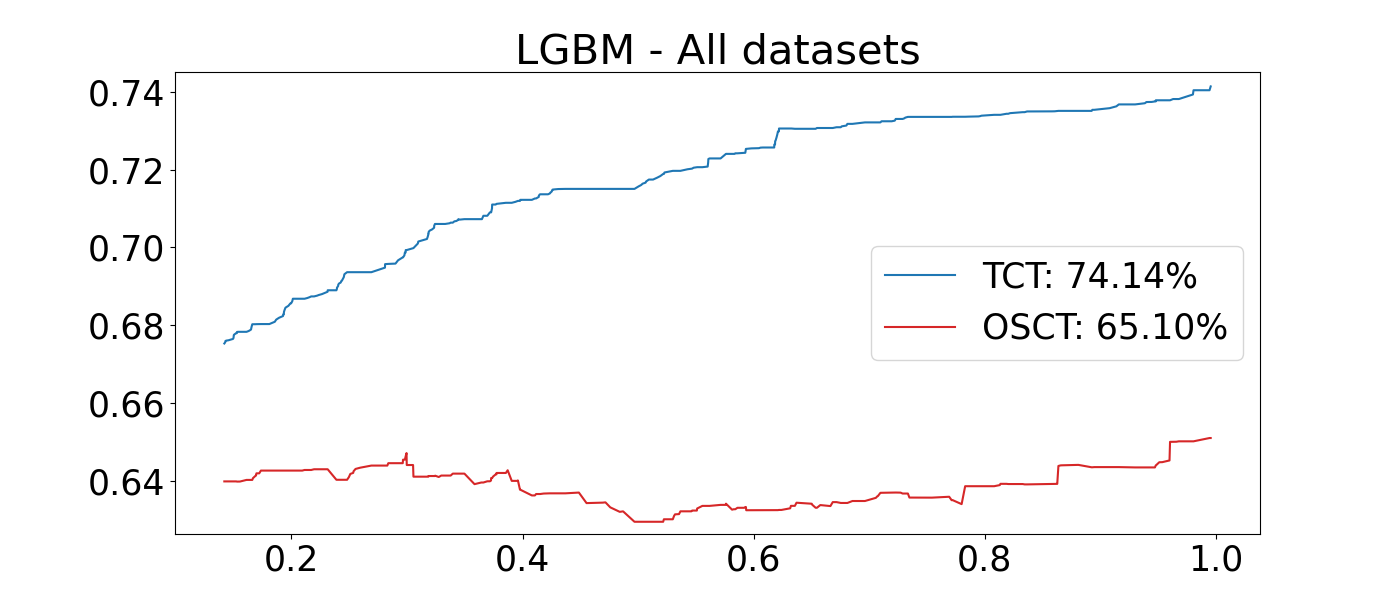}}

\newcommand{\addLogRegTCTvsOSCTnMeioPerc}{\includegraphics[width=19em]{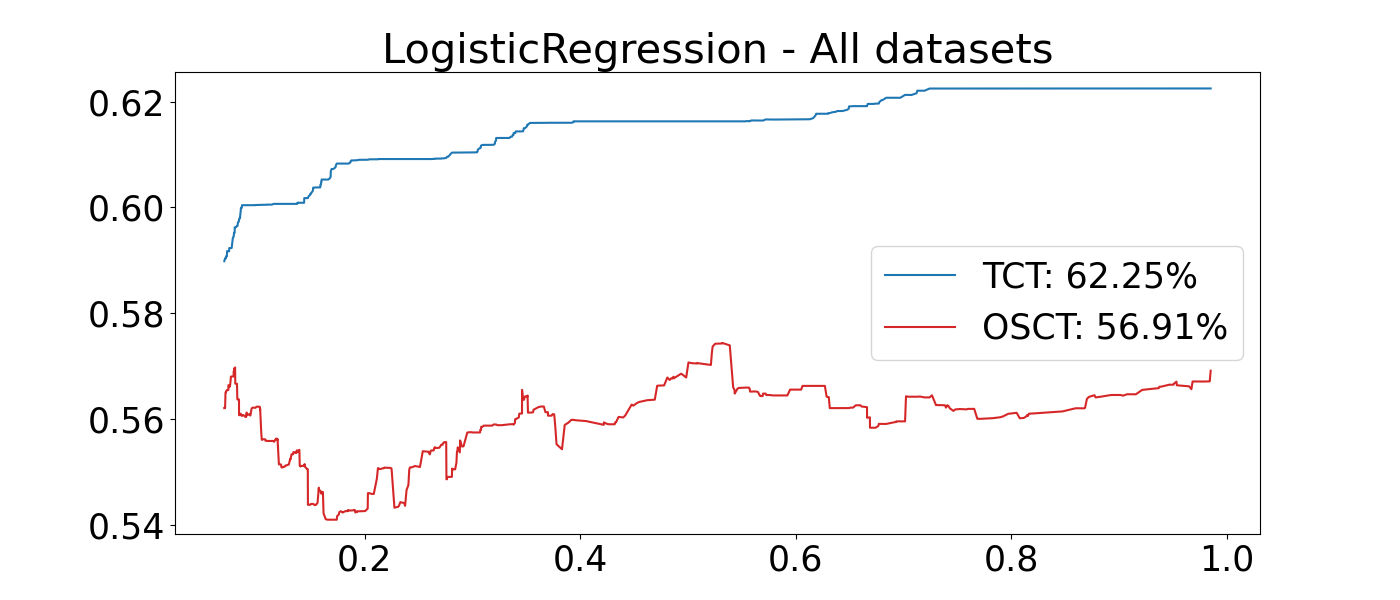}}

\newcommand{\addRandForestTCTvsOSCTnMeioPerc}{\includegraphics[width=19em]{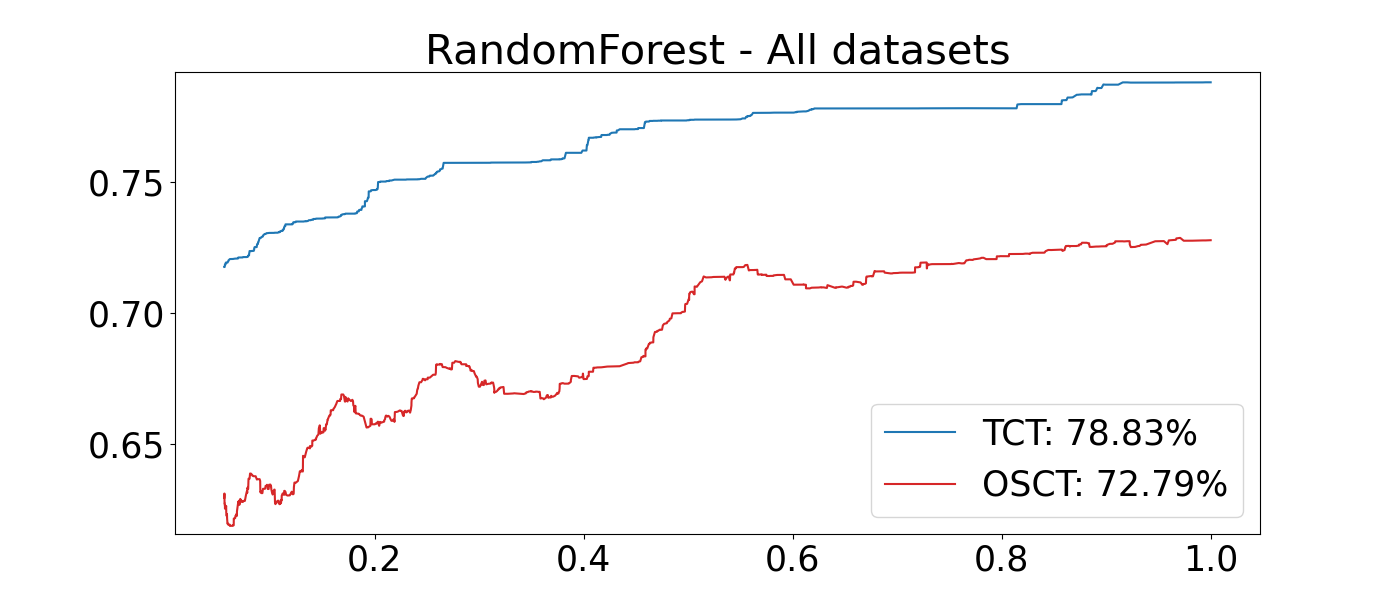}}

\newcommand{\addSVMLinTCTvsOSCTnMeioPerc}{\includegraphics[width=19em]{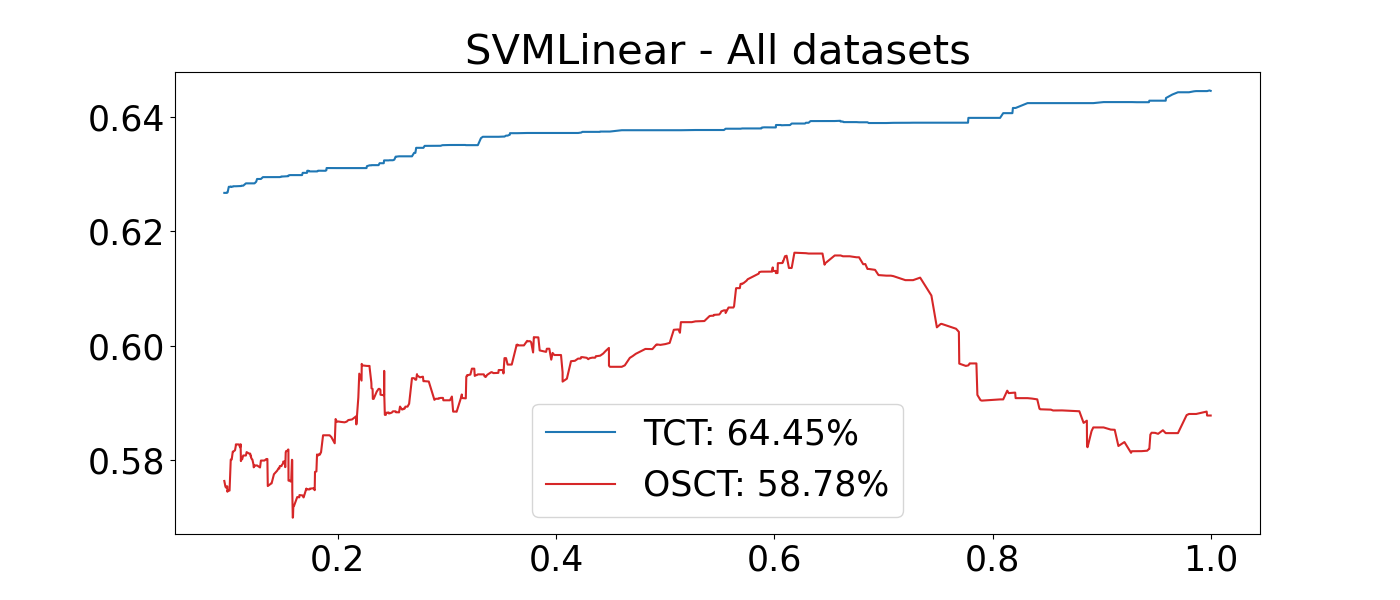}}

\begin{figure}
\begin{center}
%\begin{tabular}{l*2{C}@{}}
\begin{tabular}{lcc}
%\toprule
& \addDecTreeTCTvsOSCTnMeioPerc  & \addLogRegTCTvsOSCTnMeioPerc    \\ 
&  \addSVMLinTCTvsOSCTnMeioPerc & \addLGBMTCTvsOSCTnMeioPerc  \\
&  \addRandForestTCTvsOSCTnMeioPerc &  \\ 
%\bottomrule 
\end{tabular}
\end{center}
%\caption{Results for \OSCT starting with $N=2^{0.005m}$. }
\caption{Average accuracies on testing set along normalized time for \TCT and \OSCT starting with $N=2^{0.005m}$. The numbers next to the labels are their average accuracies at the last normalized time limit $t=1$. }
\label{fig:tct-vs-osct-n-meioperc-appendix}
\end{figure}

% Figure: Dasgupta's algorithm that returns the best evaluated model

\newcommand{\addDecTreeTCTvsOSCTbest}{\includegraphics[width=19em]{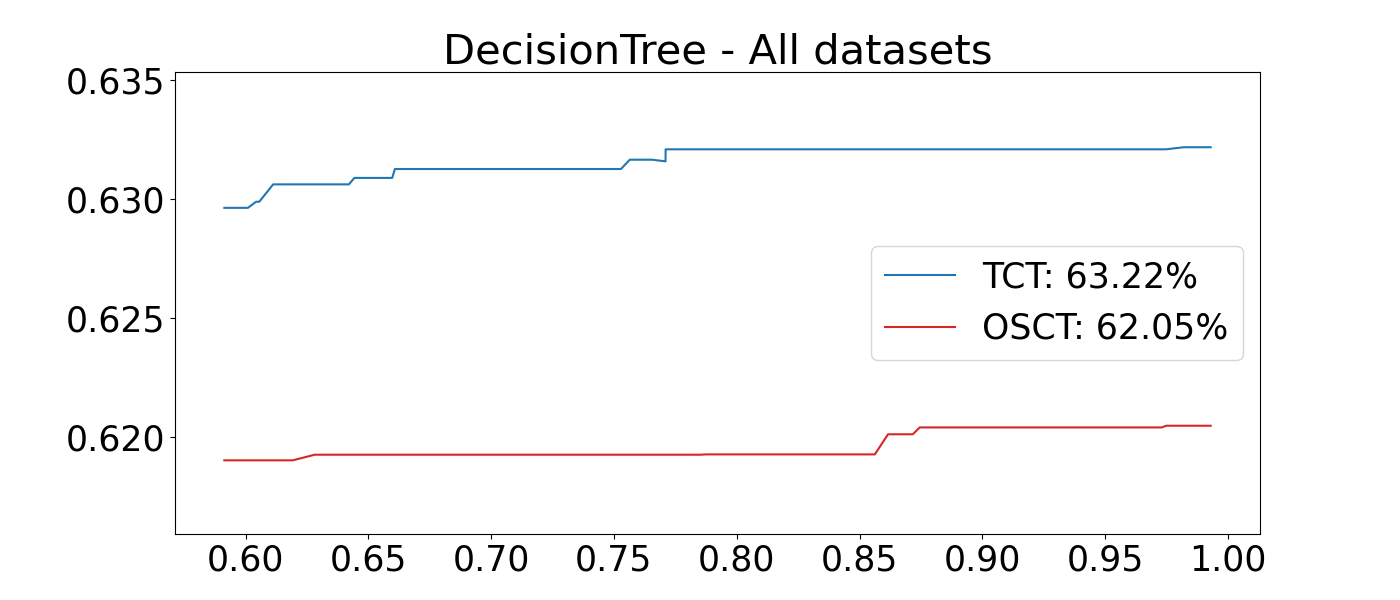}}

\newcommand{\addLGBMTCTvsOSCTbest}{\includegraphics[width=19em]{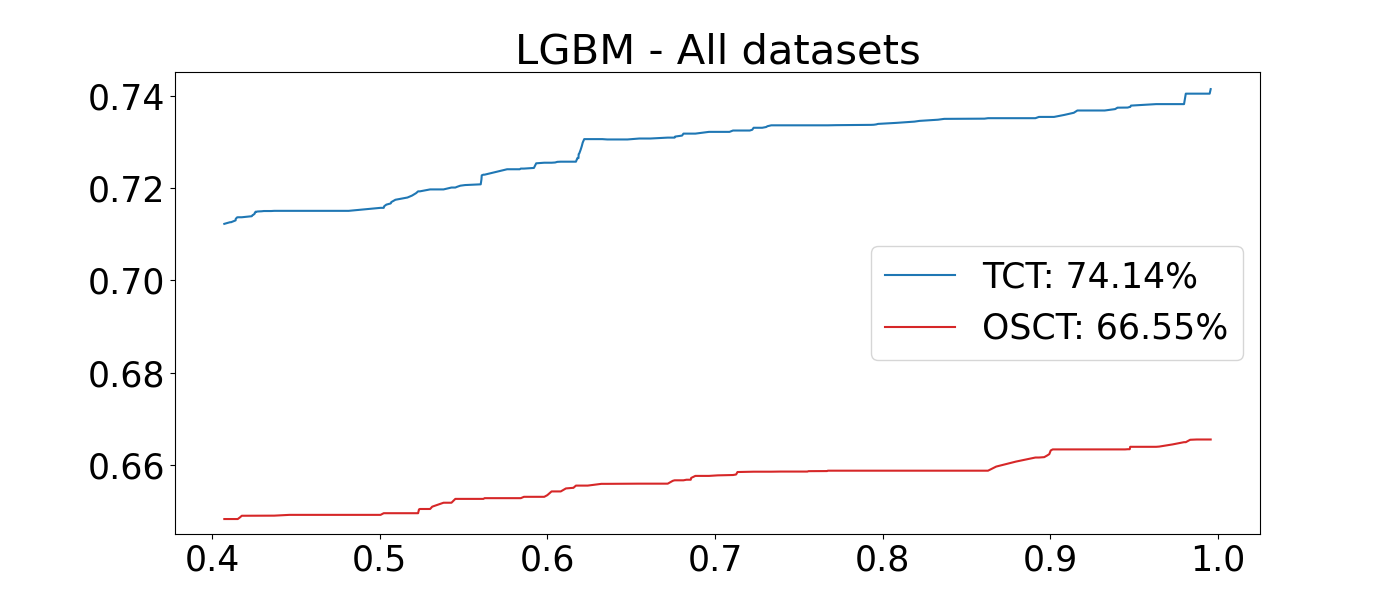}}

\newcommand{\addLogRegTCTvsOSCTbest}{\includegraphics[width=19em]{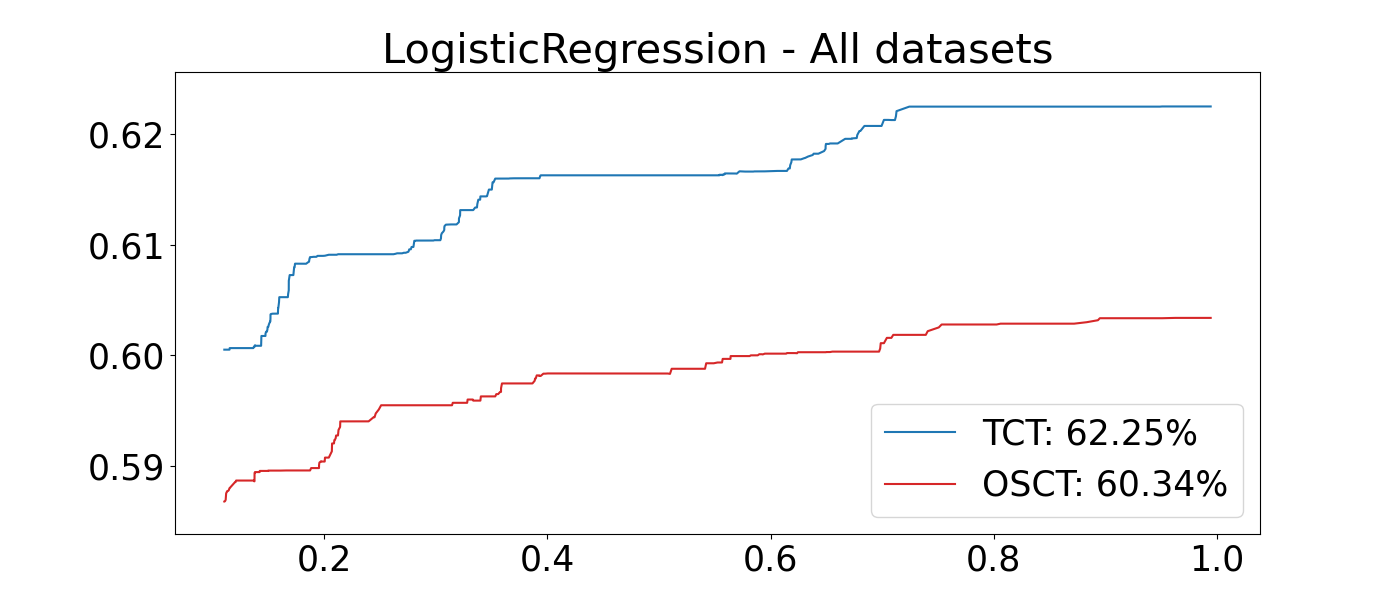}}

\newcommand{\addRandForestTCTvsOSCTbest}{\includegraphics[width=19em]{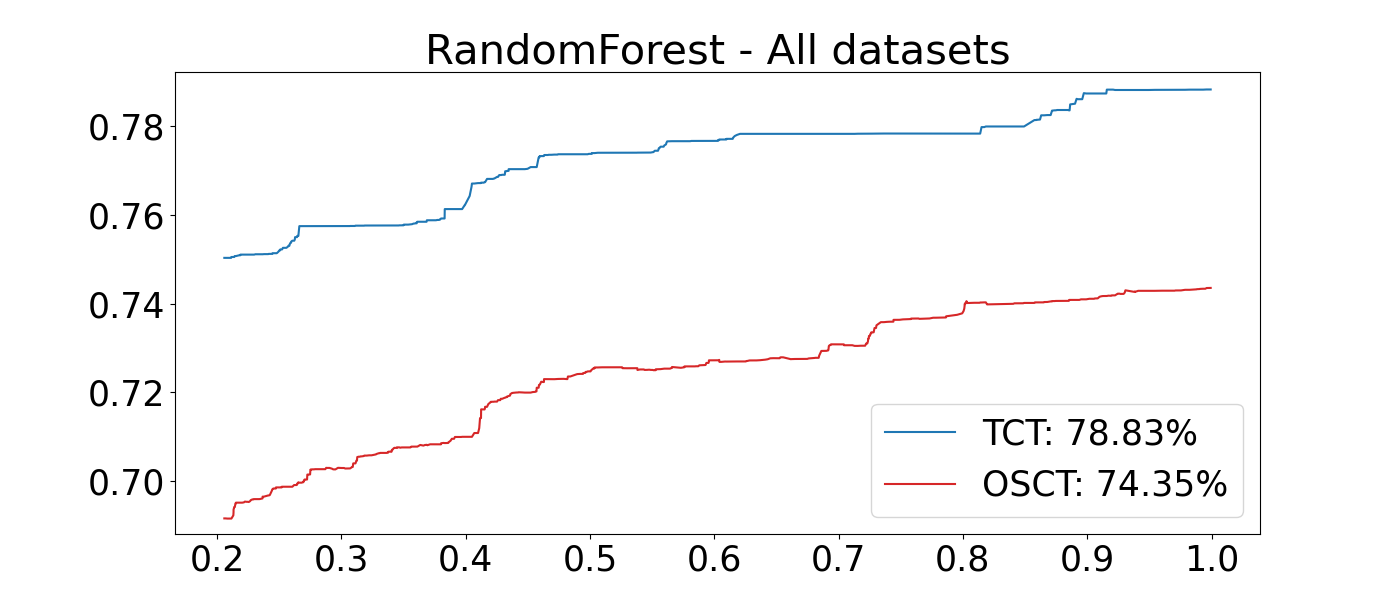}}

\newcommand{\addSVMLinTCTvsOSCTbest}{\includegraphics[width=19em]{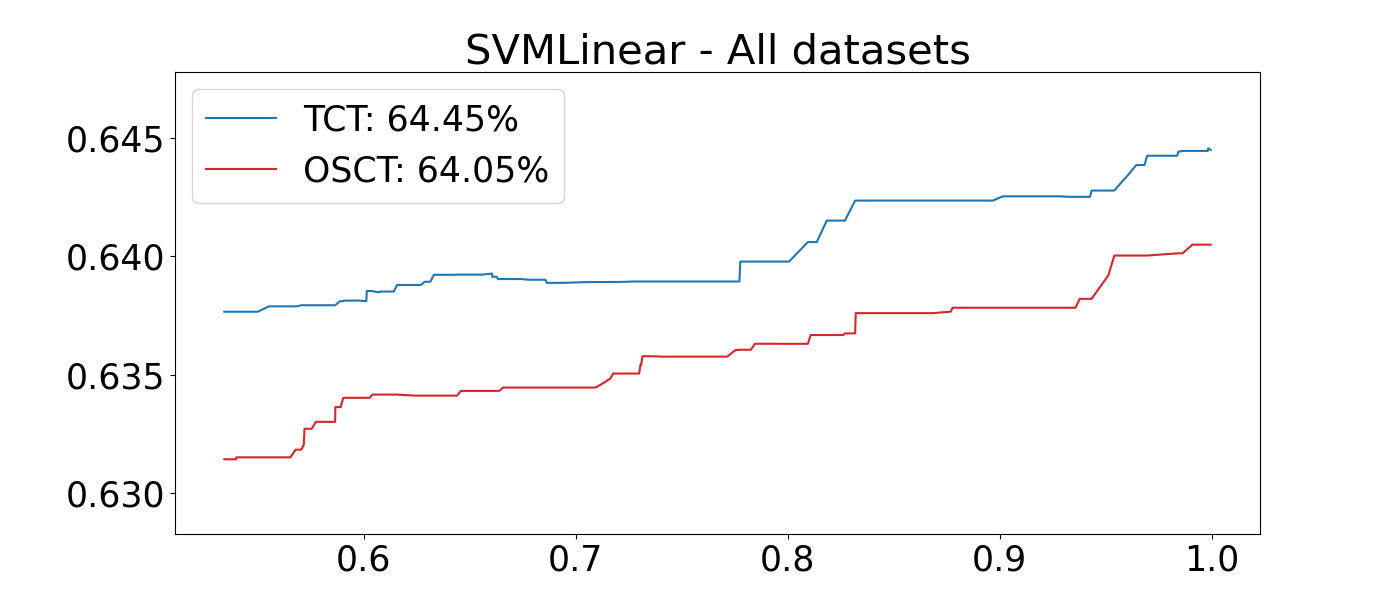}}

\begin{figure}
\begin{center}
%\begin{tabular}{l*2{C}@{}}
\begin{tabular}{lcc}

%\toprule
& \addDecTreeTCTvsOSCTbest  & \addLogRegTCTvsOSCTbest    \\ 
&  \addSVMLinTCTvsOSCTbest & \addLGBMTCTvsOSCTbest  \\
&  \addRandForestTCTvsOSCTbest &  \\ 
%\bottomrule 
\end{tabular}
\end{center}
%\caption{Dasgupta's algorithm that returns the best evaluated model (v4). }
\caption{Average accuracies on testing set along normalized time for \TCT and \OSCT which returns the model with the best accuracy estimate. The numbers next to the labels are their average accuracies at the last normalized time limit $t=1$. }
\label{fig:tct-vs-osct-savebest-appendix}
\end{figure}

\remove{

% Figure: Dasgupta's algorithm that doubles the teaching set in each round

\newcommand{\addDecTreeTCTvsOSCTdouble}{\includegraphics[width=19em]{appendix/TCT_vs_dasguptaV3_n0_meioperc_doubleN/DecisionTree - All datasets.png}}

\newcommand{\addLGBMTCTvsOSCTdouble}{\includegraphics[width=19em]{appendix/TCT_vs_dasguptaV3_n0_meioperc_doubleN/LGBM - All datasets.png}}

\newcommand{\addLogRegTCTvsOSCTdouble}{\includegraphics[width=19em]{appendix/TCT_vs_dasguptaV3_n0_meioperc_doubleN/LogisticRegression - All datasets.png}}

\newcommand{\addRandForestTCTvsOSCTdouble}{\includegraphics[width=19em]{appendix/TCT_vs_dasguptaV3_n0_meioperc_doubleN/RandomForest - All datasets.png}}

\newcommand{\addSVMLinTCTvsOSCTdouble}{\includegraphics[width=19em]{appendix/TCT_vs_dasguptaV3_n0_meioperc_doubleN/SVMLinear - All datasets.png}}

\begin{figure}
\begin{center}
%\begin{tabular}{l*2{C}@{}}
\begin{tabular}{lcc}
%\toprule
& \addDecTreeTCTvsOSCTdouble  & \addLogRegTCTvsOSCTdouble    \\ 
&  \addSVMLinTCTvsOSCTdouble & \addLGBMTCTvsOSCTdouble  \\
&  \addRandForestTCTvsOSCTdouble &  \\ 
%\bottomrule 
\end{tabular}
\end{center}
\caption{Dasgupta's algorithm that doubles the teaching set in each round (v3). }
\label{fig:tct-vs-osct-double-appendix}
\end{figure}

}

\subsection{Additional Tables and Plots}

%In this section we provide more details for the results presented in the \textit{Experiments} section. 

 Table~ \ref{tab:lgbm-accuracies} (resp. Tables~  \ref{tab:random-forest-accuracies}, \ref{tab:svm-accuracies}, \ref{tab:decision-tree-accuracies} and \ref{tab:logistic-regression-accuracies})  shows the average accuracy obtained by {\tt  LGBM} (resp.   {\tt  Random Forest}, {\tt  SVM}, {\tt Decision Tree} and {\tt  Logistic Regression}) on each dataset for \TCT ($\alpha=0.2$), {\tt  Double}, \OSCT and the average accuracy obtained using the entire training set (column Full). 
The results for \OSCT refer to the version of the algorithm that returns the model with the best accuracy estimate
(Figure \ref{fig:tct-vs-osct-savebest-appendix}). 
  The Time Limit column denotes the average time in seconds taken to train with the entire dataset. 
This is also the time given as a limit for the Teachers.
 It is noteworthy that some tables have more rows than others because combinations $({\cal D} , {\cal L})$ in which the time limit is less than 10 seconds are discarded.

We boldfaced the datasets for which there is a statistical difference ($95\%$ of confidence) between
the accuracies of {\tt Double} and \TCT.  More specifically,
we boldface  {\tt Double} (\TCT) for dataset ${ \cal D } $ if  the accuracy of {\tt Double} ( \TCT )  is
larger than that of \TCT ({\tt Double}) and 
$$ | acc_{Dbl}-acc_{\TCT}|-1.645 \sqrt{ \frac{acc_{Dbl}(1-acc_{Dbl})}{m_{test}} +  \frac{acc_{\TCT}(1-acc_{\TCT})}{m_{test}} } >0  ,$$
where $m_{test}$ is the size of the testing set for dataset ${\cal D}$. 
To calculate  the confidence  interval  we assume that  the examples of the testing set are drawn independently from an unknown distribution
$\mu$ (Chapter 5 of \cite{mitchell}).
We have not considered \OSCT in this comparison because it is not 
competitive with the two other Teachers. 

%poker sensorless BNG_Satimage BNG_satimage
%In our experiments, all averages are obtained over 4 runs.

Figure \ref{fig:tct-vs-double-perDataset-appendix} shows how the accuracy of
both \TCT and {\tt Double} evolve over the normalized time
for some datasets.  
The images for the other datasets can be found in 
\url{https://github.com/sfilhofreitas/TimeConstrainedLearning/tree/main/experiments/results/graphics_by_dataset}.

\begin{table}[]
	\caption{LGBM accuracies in the testing sets for TCT, Double, OSCT and Full Training for each dataset.}
	\label{tab:lgbm-accuracies}
	\begin{center}
		\begin{tabular}{c|c|ccc|c}
			\textbf{Dataset} & \textbf{Time Limit} & \textbf{TCT} & \textbf{Double} & \textbf{OSCT} & \textbf{Full} \\ \hline \hline
			BNG\_letter\_5000\_1 & 198.5 & {\bf 75.9\% } & 74.8\% & 59.5\% & 76.7\% \\ \hline
            poker\_hand & 48.7 & 73.0\% & 72.8\% & 52.5\% & 83.3\% \\ \hline
            SantanderCustomerSatisfaction & 18.6 & {\bf 91.3\% }& 90.6\% & 90.0\% & 90.7\% \\ \hline
            BNG\_spectf\_test & 14.5 & 82.5\% & 82.4\% & 77.6\% & 82.9\% \\ \hline
            BNG\_wine & 19.7 & {\bf 96.1\% }& 95.8\% & 94.2\% & 96.0\% \\ \hline
            BNG\_eucalyptus & 49.8 & {\bf 74.9\%} & 73.9\% & 63.0\% & 74.3\% \\ \hline
            mnist & 134.5 &{\bf  97.4\% }& 96.0\% & 92.5\% & 97.7\% \\ \hline
            covtype & 15.5 & {\bf 86.6\%} & 85.3\% & 73.8\% & 85.3\% \\ \hline
            cifar\_10 & 793.4 & 38.6\% & 38.5\% & 37.9\% & 52.8\% \\ \hline
            volkert & 34.0 & 60.3\% & 60.1\% & 52.3\% & 69.1\% \\ \hline
            BNG\_satimage & 77.2 & {\bf 92.2\% }& 91.6\% & 87.2\% & 92.1\% \\ \hline
            Sensorless\_drive\_diagnosis & 11.7 & {\bf 99.3\%} & 98.8\% & 93.8\% & 99.9\% \\ \hline
            GTSRB-HueHist & 258.0 & 38.9\% & {\bf 39.9\%} & 33.0\% & 57.6\% \\ \hline
            BNG\_mfeat\_fourier & 214.6 & {\bf 93.7\% }& 92.9\% & 86.4\% & 93.4\% \\ \hline
            aloi & 638.1 & 11.6\% & 11.4\% & 4.6\% & 11.3\% \\ \hline 
            %\hline
            %\textbf{Average} & 168.5 & 74.1\% & 73.7\% & 66.6\% & 77.5\% \\ \hline
		\end{tabular}%
	\end{center}
\end{table}

\begin{table}[]
	\caption{Random Forest accuracies in the testing sets for TCT, Double, OSCT and Full Training for each dataset.}
	\label{tab:random-forest-accuracies}
	\begin{center}
		\begin{tabular}{c|c|ccc|c}
			\textbf{Dataset} & \textbf{Time Limit} & \textbf{TCT} & \textbf{Double} & \textbf{OSCT} & \textbf{Full} \\ \hline \hline					
            Diabetes130US & 76.4 & 58.5\% & 58.4\% & 55.1\% & 59.0\% \\ \hline
            BNG\_letter\_5000\_1 & 231.9 & {\bf 71.9\% }& 70.1\% & 68.5\% & 72.9\% \\ \hline
            poker\_hand & 277.5 & {\bf 92.2\%} & 91.8\% & 92.3\% & 92.2\% \\ \hline
            SantanderCustomerSatisfaction & 485.8 & 90.0\% & 90.0\% & 88.4\% & 90.0\% \\ \hline
            BNG\_spectf\_test & 781.3 & {\bf 80.0\% } & 79.0\% & 78.0\% & 79.2\% \\ \hline
            BNG\_wine & 417.6 & 95.6\% & 95.5\% & 95.6\% & 95.6\% \\ \hline
            vehicle\_sensIT & 112.2 & 87.0\% & 86.7\% & 83.5\% & 86.9\% \\ \hline
            MiniBooNE & 59.7 & {\bf 93.9\%} & 92.5\% & 92.6\% & 93.2\% \\ \hline
            BNG\_eucalyptus & 166.2 & {\bf 69.2\%} & 67.5\% & 64.9\% & 69.1\% \\ \hline
            mnist & 32.2 &  96.3 & 95.9\% & 92.7\% & 96.3\% \\ \hline
            BNG\_spambase & 409.6 & 66.6\% & 66.7\% & 66.0\% & 66.7\% \\ \hline
            covtype & 92.7 & {\bf 93.0\%} & 88.1\% & 88.7\% & 92.4\% \\ \hline
            cifar\_10 & 123.6 & {\bf 42.9\%} & 42.1\% & 40.8\% & 45.4\% \\ \hline
            jannis & 36.2 & {\bf 69.6\%} & 68.5\% & 65.5\% & 70.0\% \\ \hline
            volkert & 18.9 & {\bf 62.4\%} & 60.6\% & 61.1\% & 64.4\% \\ \hline
            BNG\_satimage & 680.2 &{\bf  89.7\%} & 88.2\% & 88.7\% & 89.0\% \\ \hline
            Sensorless\_drive\_diagnosis & 13.7 & {\bf 99.8\%} & 99.4\% & 99.7\% & 99.8\% \\ \hline
            GTSRB-HueHist & 42.2 & {\bf 47.8\%} & 43.8\% & 44.2\% & 47.3\% \\ \hline
            BNG\_mfeat\_fourier & 1021.6 &{\bf 88.9\%} & 88.4\% & 87.8\% & 88.9\% \\ \hline
            aloi & 26.0 & 81.1\% & {\bf 84.4\%} & 32.8\% & 90.6\% \\ \hline
            %\hline
            %\textbf{Average} & 255.3 & 78.8\% & 77.9\% & 74.4\% & 79.4\% \\ \hline
		\end{tabular}%
	\end{center}
\end{table}

\begin{table}[]
	\caption{SVM accuracies in the testing sets for TCT, Double, OSCT and Full Training for each dataset.}
	\label{tab:svm-accuracies}
	\begin{center}
		\begin{tabular}{c|c|ccc|c}
			\textbf{Dataset} & \textbf{Time Limit} & \textbf{TCT} & \textbf{Double} & \textbf{OSCT} & \textbf{Full} \\ \hline \hline		
			Diabetes130US & 81.6 & 57.3\% & 57.6\% & 51.5\% & 58.1\% \\ \hline
            BNG\_letter\_5000\_1 & 105.6 & {\bf 42.8\%} & 41.3\% & 42.6\% & 41.3\% \\ \hline
            poker\_hand & 17.7 & 48.1\% & {\bf 50.0\%} & 47.8\% & 50.0\% \\ \hline
            MiniBooNE & 11.6 & {\bf 90.1\%} & 89.7\% & 89.8\% & 90.1\% \\ \hline
            BNG\_eucalyptus & 80.3 & {\bf 57.0\%} & 56.4\% & 54.5\% & 56.4\% \\ \hline
            mnist & 1320.7 & {\bf 90.4\%} & 89.0\% & 89.6\% & 91.7\% \\ \hline
            BNG\_spambase & 16.5 & 66.5\% & 66.6\% & 66.1\% & 66.6\% \\ \hline
            covtype & 123.7 & {\bf 70.8\% }& 70.4\% & 70.0\% & 70.5\% \\ \hline
            volkert & 117.9 & 57.5\% & 57.3\% & 55.4\% & 57.8\% \\ \hline
            BNG\_satimage & 33.8 & {\bf 81.6\%} & 80.7\% & 81.8\% & 80.7\% \\ \hline
            Sensorless\_drive\_diagnosis & 156.7 & {\bf 78.9\%} & 73.7\% & 89.1\% & 74.3\% \\ \hline
            GTSRB-HueHist & 154.2 & 14.2\% & 14.3\% & 12.3\% & 27.2\% \\ \hline
            BNG\_mfeat\_fourier & 71.4 & 82.7\% & {\bf 83.1\%} & 82.1\% & 83.2\% \\ \hline 
            %\hline
            %\textbf{Average} & 176.3 & 64.4\% & 63.8\% & 64.0\% & 65.2\% \\ \hline
		\end{tabular}%
	\end{center}
\end{table}

%%%%%%%%%%%%%%%%%%%%%%     Versao Dasgupta: salva o melhor avaliado   %%%%%%%%%%%%%%%%%%%%%%%%%%%%%%%%%%%%%%%%%
\begin{table}[]
	\caption{Decision Tree accuracies in the testing sets for TCT, Double, OSCT and Full Training for each dataset.}
	\label{tab:decision-tree-accuracies}
	\begin{center}
		\begin{tabular}{c|c|ccc|c}
			\textbf{Dataset} & \textbf{Time Limit} & \textbf{TCT} & \textbf{Double} & \textbf{OSCT} & \textbf{Full} \\ \hline \hline
				Diabetes130US & 12.9 & 57.1\% & 57.0\% & 54.4\% & 57.5\% \\ \hline
                SantanderCustomerSatisfaction & 17.7 & 89.0\% & {\bf 89.7}\% & 86.9\% & 89.9\% \\ \hline
                BNG\_spectf\_test & 30.0 & 77.6\% & {\bf 78.5\%} & 77.4\% & 78.5\% \\ \hline
                BNG\_eucalyptus & 12.5 & 53.8\% & {\bf 54.2}\% & 52.7\% & 54.2\% \\ \hline
                cifar\_10 & 33.6 & 24.9\% & 25.2\% & 24.8\% & 25.9\% \\ \hline
                BNG\_satimage & 25.7 & `{\bf 73.8 \%} & 73.0\% & 74.0\% & 73.0\% \\ \hline
                BNG\_mfeat\_fourier & 47.5 & {\bf 66.3\%} & 64.1\% & 64.0\% & 63.9\% \\ \hline 
                
          %      \hline
         %      \textbf{Average} & 25.7 & 63.2\% & 63.1\% & 62.0\% & 63.3\% \\ \hline
		\end{tabular}%
	\end{center}
\end{table}

\begin{table}[]
	\caption{Logistic Regression accuracies in the testing sets for TCT, Double, OSCT and Full Training for each dataset.}
	\label{tab:logistic-regression-accuracies}
	\begin{center}
		\begin{tabular}{c|c|ccc|c}
			\textbf{Dataset} & \textbf{Time Limit} & \textbf{TCT} & \textbf{Double} & \textbf{OSCT} & \textbf{Full} \\ \hline \hline			
			Diabetes130US & 289.6 & 58.6\% & 58.7\% & 56.0\% & 59.0\% \\ \hline
            BNG\_letter\_5000\_1 & 31.5 & 45.6\% & 45.8\% & 44.1\% & 45.8\% \\ \hline
            poker\_hand & 391.5 & 48.1\% & {\bf 50.0\%} & 47.7\% & 50.0\% \\ \hline
            BNG\_eucalyptus & 35.5 & 57.8\% & 57.8\% & 55.9\% & 57.9\% \\ \hline
            mnist & 184.7 & 91.6\% & 91.3\% & 91.5\% & 92.6\% \\ \hline
            BNG\_spambase & 34.1 & 66.4\% & 66.6\% & 66.2\% & 66.6\% \\ \hline
            covtype & 84.0 & {\bf 68.2\% } & 67.6\% & 62.5\% & 68.8\% \\ \hline
            cifar\_10 & 465.0 & 37.1\% & 37.6\% & 33.7\% & 39.6\% \\ \hline
            volkert & 29.6 & 57.4\% & 57.5\% & 54.4\% & 58.6\% \\ \hline
            BNG\_satimage & 18.6 & 83.8\% & 83.7\% & 83.4\% & 83.7\% \\ \hline
            Sensorless\_drive\_diagnosis & 11.3 & {\bf 83.9\%} & 70.0\% & 81.7\% & 78.8\% \\ \hline
            GTSRB-HueHist & 134.9 & 26.3\% & 26.6\% & 24.6\% & 29.1\% \\ \hline
            BNG\_mfeat\_fourier & 46.1 & 84.4\% & {\bf 84.6\%} & 82.8\% & 84.8\% \\ \hline
            %\hline
           % \textbf{Average} & 135.1 & 62.2\% & 61.4\% & 60.3\% & 62.7\% \\ \hline
		\end{tabular}%
	\end{center}
\end{table}

\newcommand{\addDecTreeMfeat}{\includegraphics[width=19em]{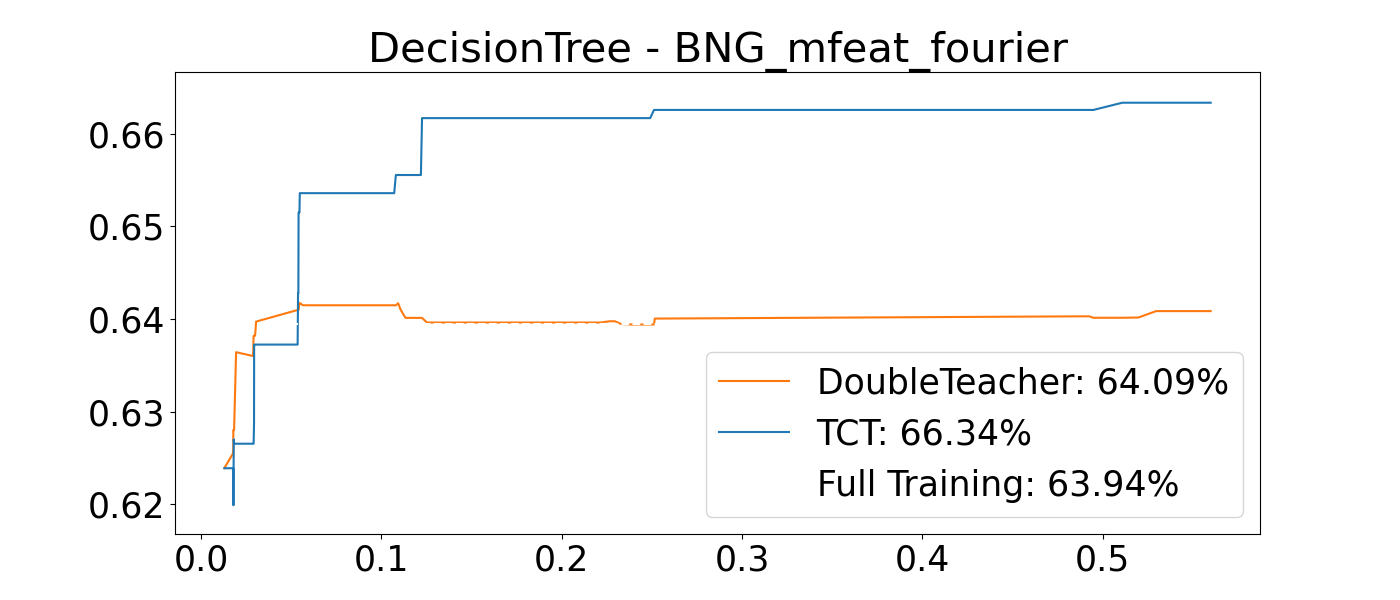}}

\newcommand{\addLGBMeucalyptus}{\includegraphics[width=19em]{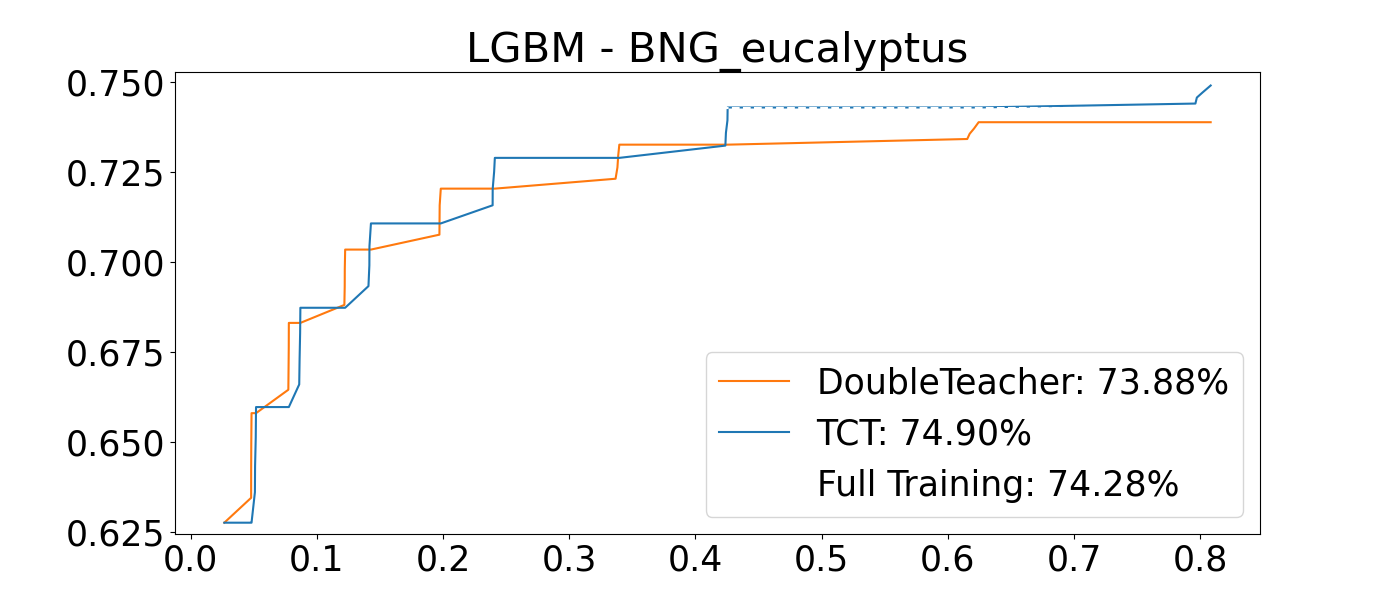}}

\newcommand{\addLogRegSensorless}{\includegraphics[width=19em]{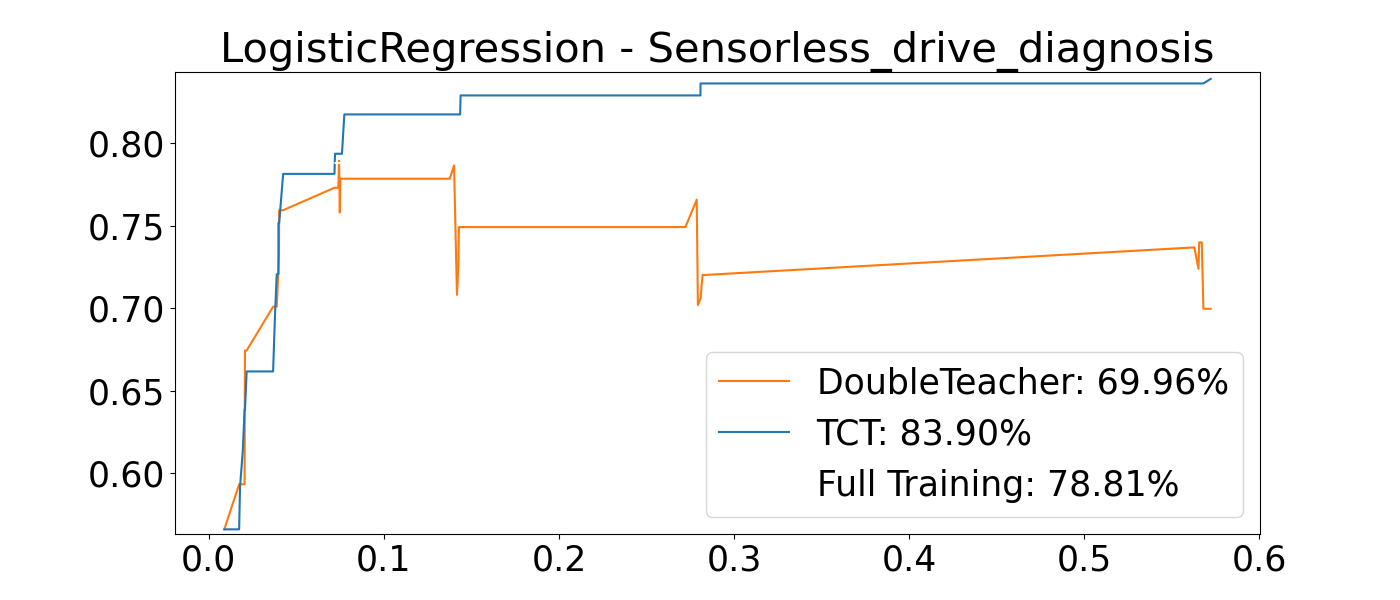}}

\newcommand{\addRandForestSatimage}{\includegraphics[width=19em]{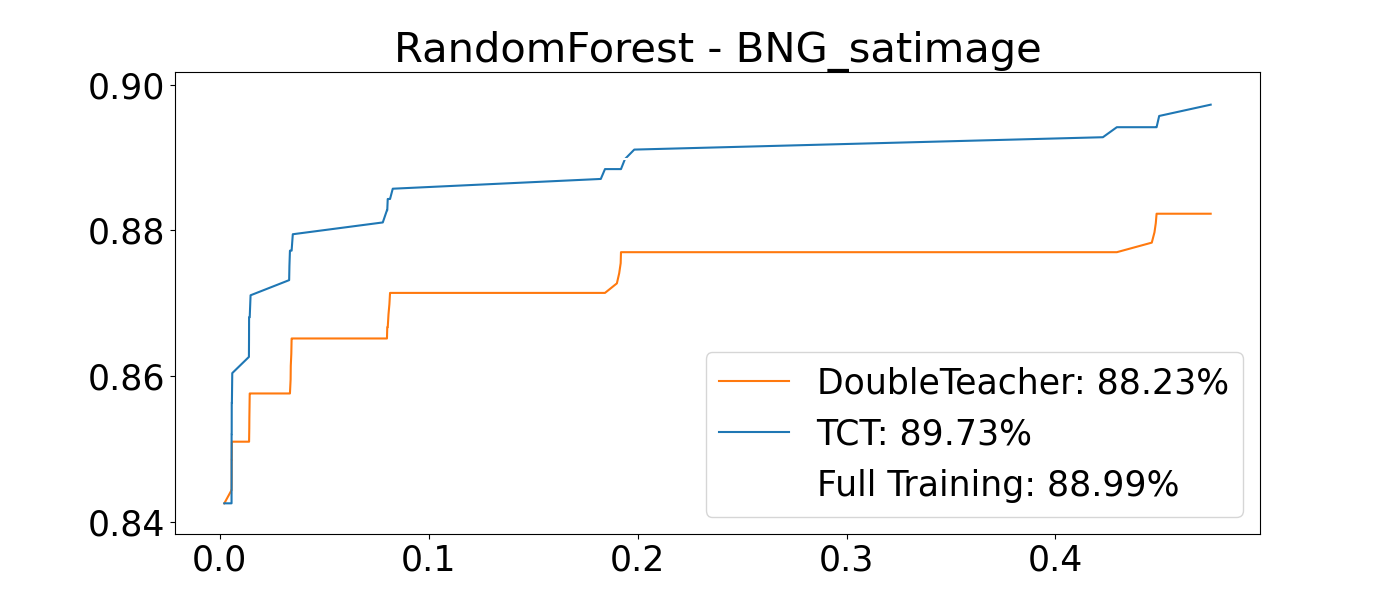}}

\newcommand{\addSVMLinMiniboone}{\includegraphics[width=19em]{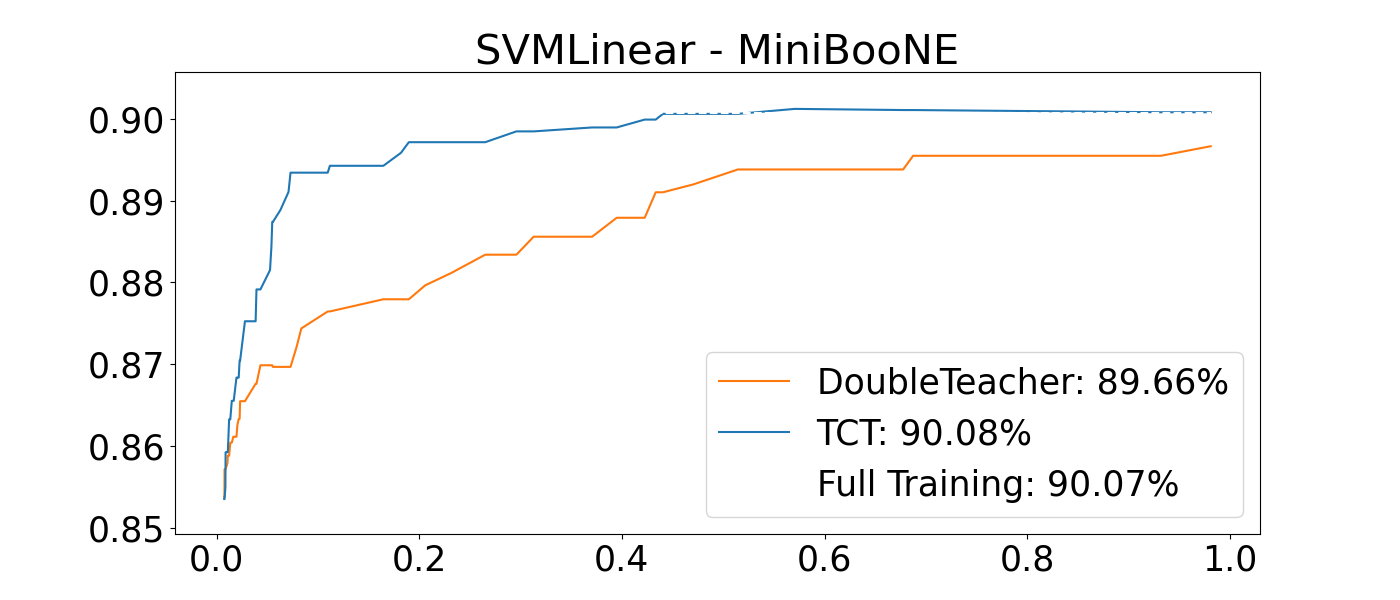}}

\begin{figure}
\begin{center}
\begin{tabular}{lcc}
%\toprule
& \addDecTreeMfeat  & \addLGBMeucalyptus    \\
&  \addLogRegSensorless & \addRandForestSatimage  \\
&  \addSVMLinMiniboone &  \\
%\bottomrule
\end{tabular}
\end{center}
\caption{Average accuracies on testing set along normalized time for \TCT and  {\tt Double} per dataset. The numbers next to the labels are their average accuracies at the last normalized time limit $t=1$.}
\label{fig:tct-vs-double-perDataset-appendix}
\end{figure}

\clearpage
\subsection{TCT $\times$ SGD - additional tables}

\remove{
For training the Learners via  Stochastic Gradient Descent ({\tt SGD}) we use the SGDClassifier module from the {\tt sklearn} library with its default settings. At each iteration a mini batch of random examples is used to estimate the loss function gradient and the model is updated by the {\tt partial\_fit} method.
}
% which performs one epoch of stochastic gradient descent on given samples.

%Below we present detailed results of the comparison between \TCT and {\tt SGD} for training {\tt SVM} and {\tt Logistic Regression} learners. 
%In the case of the SGD this is done by setting the method's loss function: with hinge loss function the method is similar to the linear SVM used %in our experiments, and changing the loss function parameter to 'log´ it is similar to Regression Logistic. 

Table~\ref{tab:svm-sgd-accuracies} (resp. Table~\ref{tab:logistic-regression-sgd-accuracies}) shows the average accuracy obtained by {\tt  SVM} (resp. {\tt  Logistic Regression}) on each dataset for \TCT ($\alpha=0.2$) and {\tt SGD}.

\begin{table}[!h]
	\caption{SVM accuracies in the testing sets for \TCT and {\tt SGD}.}
	\label{tab:svm-sgd-accuracies}
	\begin{center}
		\begin{tabular}{c|c|cc}
			\textbf{Dataset} & \textbf{Time Limit} & \textbf{TCT} & \textbf{SGD}  \\ \hline \hline
			Diabetes130US & 81.6 & {\bf 57.3} \% & 53.1\%  \\ \hline
			BNG\_letter\_5000\_1 & 105.6 &  {\bf 42.8}\% & 29.9\%  \\ \hline
			poker\_hand & 17.7 & 48.1\% & {\bf  49.4}\%  \\ \hline
			MiniBooNE & 11.6 &  {\bf  90.1}\% & 86.8\%  \\ \hline
			BNG\_eucalyptus & 80.3 &  {\bf  57.0}\% & 53.2\%  \\ \hline
			mnist & 1320.7 &  {\bf  90.4}\% & 85.3\%  \\ \hline
			BNG\_spambase & 16.5 & 66.5\% & 66.5\%  \\ \hline
			covtype & 123.7 & {\bf  70.8}\%  & 55.5\% \\ \hline
			volkert & 117.9 & {\bf  57.5}\% & 53.4\%  \\ \hline
			BNG\_satimage & 33.8 &  {\bf  81.6}\% & 80.3\%  \\ \hline
			Sensorless\_drive\_diagnosis & 156.7 &  {\bf  78.9\% }& 61.3\%  \\ \hline
			GTSRB-HueHist & 154.2 & 14.2\% & {\bf  16.2}\%  \\ \hline
			BNG\_mfeat\_fourier & 71.4 & {\bf  82.7}\% & 81.7\%  \\ \hline
		\end{tabular}%
	\end{center}
\end{table}

\remove{
\begin{table}[]
	\caption{SVM accuracies in the testing sets for \TCT and {\tt SGD}.}
	\label{tab:svm-sgd-accuracies}
	\begin{center}
		\begin{tabular}{c|c|cc}
			\textbf{Dataset} & \textbf{Time Limit} & \textbf{TCT} & \textbf{SGD}  \\ \hline \hline
			Diabetes130US & 81.6 & 57.3\% & 53.1\%  \\ \hline
			BNG\_letter\_5000\_1 & 105.6 &  42.8\% & 29.9\%  \\ \hline
			poker\_hand & 17.7 & 48.1\% & 49.4\%  \\ \hline
			MiniBooNE & 11.6 &  90.1\% & 86.8\%  \\ \hline
			BNG\_eucalyptus & 80.3 &  57.0\% & 53.2\%  \\ \hline
			mnist & 1320.7 &  90.4\% & 85.3\%  \\ \hline
			BNG\_spambase & 16.5 & 66.5\% & 66.5\%  \\ \hline
			covtype & 123.7 &  70.8\%  & 55.5\% \\ \hline
			volkert & 117.9 & 57.5\% & 53.4\%  \\ \hline
			BNG\_satimage & 33.8 &  81.6\% & 80.3\%  \\ \hline
			Sensorless\_drive\_diagnosis & 156.7 &  78.9\% & 61.3\%  \\ \hline
			GTSRB-HueHist & 154.2 & 14.2\% & 16.2\%  \\ \hline
			BNG\_mfeat\_fourier & 71.4 & 82.7\% & 81.7\%  \\ \hline
		\end{tabular}%
	\end{center}
\end{table}
}

\begin{table}[]
	\caption{Logistic Regression accuracies in the testing sets  for \TCT and {\tt SGD}..}
	\label{tab:logistic-regression-sgd-accuracies}
	\begin{center}
		\begin{tabular}{c|c|cc}
			\textbf{Dataset} & \textbf{Time Limit} & \textbf{TCT} & \textbf{SGD}  \\ \hline \hline			
			Diabetes130US & 289.6 &  {\bf 58.6\%} & 48.2\%  \\ \hline
			BNG\_letter\_5000\_1 & 31.5 &  {\bf 45.6\%}  & 43.3\% \\ \hline
			poker\_hand & 391.5 & 48.1\% &  {\bf 50.0\%}  \\ \hline
			BNG\_eucalyptus & 35.5 &  {\bf 57.8\%} & 55.9\%  \\ \hline
			mnist & 184.7 &  {\bf 91.6\%} & 87.1\%  \\ \hline
			BNG\_spambase & 34.1 & 66.4\% & 66.5\%  \\ \hline
			covtype & 84.0 & {\bf 68.2\%}  & 49.2\%  \\ \hline
			cifar\_10 & 465.0 & {\bf 37.1\%} & 25.6\%  \\ \hline
			volkert & 29.6 &  {\bf 57.4\%} & 54.7\% \\ \hline
			BNG\_satimage & 18.6 &  {\bf 83.8\%} & 81.3\% \\ \hline
			Sensorless\_drive\_diagnosis & 11.3 & {\bf 83.9\%} & 69.3\%  \\ \hline
			GTSRB-HueHist & 134.9 &  {\bf 26.3} \% & 19.3\%  \\ \hline
			BNG\_mfeat\_fourier & 46.1 &  {\bf 84.4}\% & 82.4\%  \\ \hline
		\end{tabular}%
	\end{center}
\end{table}

\remove{

\begin{table}[]
	\caption{Logistic Regression accuracies in the testing sets  for \TCT and {\tt SGD}..}
	\label{tab:logistic-regression-sgd-accuracies}
	\begin{center}
		\begin{tabular}{c|c|cc}
			\textbf{Dataset} & \textbf{Time Limit} & \textbf{TCT} & \textbf{SGD}  \\ \hline \hline			
			Diabetes130US & 289.6 & 58.6\% & 48.2\%  \\ \hline
			BNG\_letter\_5000\_1 & 31.5 & 45.6\% & 43.3\% \\ \hline
			poker\_hand & 391.5 & 48.1\% & 50.0\%  \\ \hline
			BNG\_eucalyptus & 35.5 & 57.8\% & 55.9\%  \\ \hline
			mnist & 184.7 & 91.6\% & 87.1\%  \\ \hline
			BNG\_spambase & 34.1 & 66.4\% & 66.5\%  \\ \hline
			covtype & 84.0 & 68.2\%  & 49.2\%  \\ \hline
			cifar\_10 & 465.0 & 37.1\% & 25.6\%  \\ \hline
			volkert & 29.6 & 57.4\% & 54.7\% \\ \hline
			BNG\_satimage & 18.6 & 83.8\% & 81.3\% \\ \hline
			Sensorless\_drive\_diagnosis & 11.3 & 83.9\% & 69.3\%  \\ \hline
			GTSRB-HueHist & 134.9 & 26.3\% & 19.3\%  \\ \hline
			BNG\_mfeat\_fourier & 46.1 & 84.4\% & 82.4\%  \\ \hline
		\end{tabular}%
	\end{center}
\end{table}

}

\clearpage

\subsection{Selecting examples via active learning - additional tables}
\label{sec:appendix-active-learning}
 Tables
\ref{tab:logistic-regression-al-accuracies},
	\ref{tab:randomforest-al-accuracies},
	\ref{tab:LGBM-al-accuracies} and
	\ref{tab:DecisionTree-al-accuracies}
show the average accuracy of \TCT and $\TCT_{AL}$
for the different Learners and datasets.

\begin{table}[!h] %AL\_savebest
	\caption{Logistic Regression accuracies in the testing sets  for \TCT and {\tt TCT$\_{AL}$}.}
	\label{tab:logistic-regression-al-accuracies}
	\begin{center}
		\begin{tabular}{c|c|cc}
			\textbf{Dataset} & \textbf{Time Limit} & \textbf{TCT} & \textbf{TCT$\_{AL}$}  \\ \hline \hline			
			BayesianNetworkGenerator\_spambase & 45.6 & 66.4\% & 66.6\%\\  \hline
            BNG\_eucalyptus & 48.3 & 57.8\% & 57.9\%\\  \hline
            BNG\_letter\_5000\_1 & 45.1 & 45.6\% & \textbf{46.0\%}\\  \hline
            BNG\_mfeat\_fourier & 80.5 & 84.4\% & 84.5\%\\  \hline
            BNG\_satimage & 26.3 & 83.8\% & 83.8\%\\  \hline
            BNG\_spectf\_test & 10.6 & 76.9\% & \textbf{78.4\%}\\  \hline
            cifar\_10 & 938.8 & 37.1\% & 37.5\%\\  \hline
            covtype & 110.7 & 68.2\% & 67.9\%\\  \hline
            Diabetes130US & 350.7 & 58.6\% & 58.8\%\\  \hline
            GTSRB-HueHist & 237.7 & 26.3\% & 26.6\%\\  \hline
            jannis & 12.9 & 63.7\% & 64.2\%\\  \hline
            mnist & 266.1 & 91.6\% & 91.8\%\\  \hline
            poker\_hand & 526.5 & 48.1\% & \textbf{52.1\%}\\  \hline
            Sensorless\_drive\_diagnosis & 16.9 & 83.9\% & \textbf{88.1\%}\\  \hline
            volkert & 48.9 & 57.4\% & 57.4\%\\  \hline
		\end{tabular}%
	\end{center}
\end{table}

\begin{table}[] %AL\_savebest
	\caption{Random Forest accuracies in the testing sets  for \TCT and {\tt TCT$\_{AL}$}.}
	\label{tab:randomforest-al-accuracies}
	\begin{center}
		\begin{tabular}{c|c|cc}
			\textbf{Dataset} & \textbf{Time Limit} & \textbf{TCT} & \textbf{TCT$\_{AL}$}  \\ \hline \hline			
			aloi & 25.1 & \textbf{81.1\%} & 68.7\%\\  \hline
            BayesianNetworkGenerator\_spambase & 517.3 & 66.6\% & 66.7\%\\  \hline
            BNG\_eucalyptus & 194.2 & \textbf{69.2\%} & 68.0\%\\  \hline
            BNG\_letter\_5000\_1 & 264.6 & \textbf{71.9\%} & 70.8\%\\  \hline
            BNG\_mfeat\_fourier & 889.9 & \textbf{88.8\%} & 88.5\%\\  \hline
            BNG\_satimage & 864.6 & \textbf{89.3\%} & 89.1\%\\  \hline
            BNG\_spectf\_test & 829.8 & \textbf{80.0\%} & 79.3\%\\  \hline
            BNG\_wine & 289.5 & \textbf{95.7\%} & 95.5\%\\  \hline
            cifar\_10 & 133.8 & 42.9\% & 42.1\%\\  \hline
            covtype & 63.8 & \textbf{92.9\%} & 89.5\%\\  \hline
            Diabetes130US & 69.4 & 58.5\% & 58.7\%\\  \hline
            GTSRB-HueHist & 35.0 & 40.0\% & 39.8\%\\  \hline
            jannis & 35.5 & \textbf{69.6\%} & 68.8\%\\  \hline
            MiniBooNE & 62.8 & \textbf{93.9\%} & 93.2\%\\  \hline
            mnist & 40.3 & 96.3\% & 96.4\%\\  \hline
            poker\_hand & 189.8 & 91.4\% & \textbf{92.1\%}\\  \hline
            SantanderCustomerSatisfaction & 460.7 & \textbf{90.5\%} & 90.0\%\\  \hline
            Sensorless\_drive\_diagnosis & 14.9 & 99.8\% & 99.6\%\\  \hline
            vehicle\_sensIT & 77.9 & \textbf{86.9\%} & 86.7\%\\  \hline
            volkert & 20.4 & \textbf{62.4\%} & 61.3\%\\  \hline
		\end{tabular}%
	\end{center}
\end{table}

\begin{table}[]%AL\_savebest
	\caption{LGBM accuracies in the testing sets  for \TCT and {\tt TCT$\_{AL}$}.}
	\label{tab:LGBM-al-accuracies}
	\begin{center}
		\begin{tabular}{c|c|cc}
			\textbf{Dataset} & \textbf{Time Limit} & \textbf{TCT} & \textbf{TCT$\_{AL}$}  \\ \hline \hline			
	        aloi & 982.4 & 12.6\% & \textbf{13.4\%}\\  \hline
            BNG\_eucalyptus & 57.6 & \textbf{74.9\%} & 74.2\%\\  \hline
            BNG\_letter\_5000\_1 & 162.4 & \textbf{74.1\%} & 73.5\%\\  \hline
            BNG\_mfeat\_fourier & 279.5 & \textbf{93.7\%} & 93.1\%\\  \hline
            BNG\_satimage & 83.7 & \textbf{92.2\%} & 91.5\%\\  \hline
            BNG\_spectf\_test & 18.5 & 82.6\% & 82.6\%\\  \hline
            BNG\_wine & 20.7 & 95.9\% & 95.9\%\\  \hline
            cifar\_10 & 977.3 & 42.6\% & 42.7\%\\  \hline
            covtype & 18.1 & \textbf{86.6\%} & 85.8\%\\  \hline
            GTSRB-HueHist & 270.7 & 34.0\% & \textbf{36.8\%}\\  \hline
            mnist & 109.0 & 96.4\% & \textbf{96.8\%}\\  \hline
            poker\_hand & 54.7 & 73.0\% & \textbf{77.9\%}\\  \hline
            SantanderCustomerSatisfaction & 23.0 & \textbf{91.3\%} & 90.9\%\\  \hline
            Sensorless\_drive\_diagnosis & 14.1 & \textbf{99.3\%} & 99.0\%\\  \hline
            volkert & 36.1 & 59.7\% & 59.1\%\\  \hline
		\end{tabular}%
	\end{center}
\end{table}

\begin{table}[] %AL\_savebest
	\caption{Decision Tree accuracies in the testing sets  for \TCT and {\tt TCT$\_{AL}$}.}
	\label{tab:DecisionTree-al-accuracies}
	\begin{center}
		\begin{tabular}{c|c|cc}
			\textbf{Dataset} & \textbf{Time Limit} & \textbf{TCT} & \textbf{TCT$\_{AL}$}  \\ \hline \hline			
	        BNG\_eucalyptus & 13.3 & 53.1\% & \textbf{54.5\%}\\  \hline
            BNG\_mfeat\_fourier & 52.1 & 62.4\% & \textbf{64.0\%}\\  \hline
            BNG\_satimage & 22.1 & \textbf{73.8\%} & 73.0\%\\  \hline
            BNG\_spectf\_test & 27.1 & 77.6\% & \textbf{78.5\%}\\  \hline
            cifar\_10 & 34.0 & 24.9\% & 25.4\%\\  \hline
            Diabetes130US & 10.8 & 57.1\% & 57.1\%\\  \hline
            poker\_hand & 10.6 & 51.3\% & \textbf{53.5\%}\\  \hline
            SantanderCustomerSatisfaction & 18.4 & 89.0\% & \textbf{89.8\%}\\  \hline
		\end{tabular}%
	\end{center}
\end{table}

%###################################################################
%   Selecao de exemplos via active learning com alfa 100%
%   exemplos sao ordenados via criterio de escolha e 100% selecionados pela ordenacao
%###################################################################
\remove{ %return the best model
\begin{table}[] %AL_savebest
	\caption{Logistic Regression accuracies in the testing sets  for \TCT and {\tt TCT$_{AL}$}.}
	\label{tab:logistic-regression-al-accuracies}
	\begin{center}
		\begin{tabular}{c|c|cc}
			\textbf{Dataset} & \textbf{Time Limit} & \textbf{TCT} & \textbf{TCT$_{AL}$}  \\ \hline \hline			
			BayesianNetworkGenerator\_spambase  &  45.6  &  66.4\%  &  66.5\% \\ \hline
            BNG\_eucalyptus  &  48.3  &  57.8\%  &  58.2\% \\ \hline
            BNG\_letter\_5000\_1  &  45.1  &  45.6\%  &  46.0\% \\ \hline
            BNG\_mfeat\_fourier  &  80.5  &  84.4\%  &  84.6\% \\ \hline
            BNG\_satimage  &  26.3  &  83.8\%  &  83.8\% \\ \hline
            BNG\_spectf\_test  &  10.6  &  76.9\%  &  78.4\% \\ \hline
            cifar\_10  &  938.8  &  37.1\%  &  37.8\% \\ \hline
            covtype  &  110.7  &  68.2\%  &  68.4\% \\ \hline
            Diabetes130US  &  350.7  &  58.6\%  &  58.9\% \\ \hline
            GTSRB-HueHist  &  237.7  &  26.3\%  &  26.2\% \\ \hline
            jannis  &  12.9  &  63.7\%  &  65.0\% \\ \hline
            mnist  &  266.1  &  91.6\%  &  92.1\% \\ \hline
            poker\_hand  &  526.5  &  48.1\%  &  51.4\% \\ \hline
            Sensorless\_drive\_diagnosis  &  16.9  &  83.9\%  &  91.3\% \\ \hline
            volkert  &  48.9  &  57.4\%  &  58.0\% \\ \hline
		\end{tabular}%
	\end{center}
\end{table}

\begin{table}[] %AL_savebest
	\caption{Random Forest accuracies in the testing sets  for \TCT and {\tt TCT$_{AL}$}.}
	\label{tab:randomforest-al-accuracies}
	\begin{center}
		\begin{tabular}{c|c|cc}
			\textbf{Dataset} & \textbf{Time Limit} & \textbf{TCT} & \textbf{TCT$_{AL}$}  \\ \hline \hline			
			aloi  &  25.1  &  81.1\%  &  72.0\% \\ \hline
            BayesianNetworkGenerator\_spambase  &  517.3  &  66.6\%  &  66.6\% \\ \hline
            BNG\_eucalyptus  &  194.2  &  69.2\%  &  69.8\% \\ \hline
            BNG\_letter\_5000\_1  &  264.6  &  71.9\%  &  73.5\% \\ \hline
            BNG\_mfeat\_fourier  &  889.9  &  88.8\%  &  89.5\% \\ \hline
            BNG\_satimage  &  864.6  &  89.7\%  &  89.0\% \\ \hline
            BNG\_spectf\_test  &  829.8  &  79.7\%  &  79.6\% \\ \hline
            BNG\_wine  &  289.5  &  95.7\%  &  95.5\% \\ \hline
            cifar\_10  &  133.8  &  42.9\%  &  43.4\% \\ \hline
            covtype  &  63.8  &  92.9\%  &  91.6\% \\ \hline
            Diabetes130US  &  69.4  &  58.5\%  &  58.9\% \\ \hline
            GTSRB-HueHist  &  35.0  &  40.0\%  &  42.9\% \\ \hline
            jannis  &  35.5  &  69.6\%  &  70.0\% \\ \hline
            MiniBooNE  &  62.8  &  93.9\%  &  93.1\% \\ \hline
            mnist  &  40.3  &  96.3\%  &  96.3\% \\ \hline
            poker\_hand  &  189.8  &  91.4\%  &  71.4\% \\ \hline
            SantanderCustomerSatisfaction  &  460.7  &  90.6\%  &  90.0\% \\ \hline
            Sensorless\_drive\_diagnosis  &  14.9  &  99.8\%  &  99.7\% \\ \hline
            vehicle\_sensIT  &  77.9  &  86.9\%  &  86.8\% \\ \hline
            volkert  &  20.4  &  62.4\%  &  63.6\% \\ \hline
		\end{tabular}%
	\end{center}
\end{table}

\begin{table}[]%AL_savebest
	\caption{LGBM accuracies in the testing sets  for \TCT and {\tt TCT$_{AL}$}.}
	\label{tab:LGBM-al-accuracies}
	\begin{center}
		\begin{tabular}{c|c|cc}
			\textbf{Dataset} & \textbf{Time Limit} & \textbf{TCT} & \textbf{TCT$_{AL}$}  \\ \hline \hline			
		aloi  &  982.4  &  12.6\%  &  12.6\% \\ \hline
        BNG\_eucalyptus  &  57.6  &  74.9\%  &  75.3\% \\ \hline
        BNG\_letter\_5000\_1  &  162.4  &  74.1\%  &  75.4\% \\ \hline
        BNG\_mfeat\_fourier  &  279.5  &  93.7\%  &  93.2\% \\ \hline
        BNG\_satimage  &  83.7  &  92.2\%  &  91.8\% \\ \hline
        BNG\_spectf\_test  &  18.5  &  82.6\%  &  82.4\% \\ \hline
        BNG\_wine  &  20.7  &  95.9\%  &  95.8\% \\ \hline
        cifar\_10  &  977.3  &  42.6\%  &  40.6\% \\ \hline
        covtype  &  18.1  &  85.7\%  &  86.5\% \\ \hline
        GTSRB-HueHist  &  270.7  &  34.0\%  &  33.8\% \\ \hline
        mnist  &  109.0  &  96.4\%  &  96.1\% \\ \hline
        poker\_hand  &  54.7  &  73.0\%  &  77.4\% \\ \hline
        SantanderCustomerSatisfaction  &  23.0  &  91.1\%  &  90.6\% \\ \hline
        Sensorless\_drive\_diagnosis  &  14.1  &  99.3\%  &  98.9\% \\ \hline
        volkert  &  36.1  &  57.8\%  &  58.4\% \\ \hline
		\end{tabular}%
	\end{center}
\end{table}

\begin{table}[] %AL_savebest
	\caption{Decision Tree accuracies in the testing sets  for \TCT and {\tt TCT$_{AL}$}.}
	\label{tab:DecisionTree-al-accuracies}
	\begin{center}
		\begin{tabular}{c|c|cc}
			\textbf{Dataset} & \textbf{Time Limit} & \textbf{TCT} & \textbf{TCT$_{AL}$}  \\ \hline \hline			
	        BNG\_eucalyptus  &  13.3  &  53.8\%  &  53.6\% \\ \hline
            BNG\_mfeat\_fourier  &  52.1  &  66.3\%  &  62.7\% \\ \hline
            BNG\_satimage  &  22.1  &  73.8\%  &  75.2\% \\ \hline
            BNG\_spectf\_test  &  27.1  &  77.6\%  &  78.2\% \\ \hline
            cifar\_10  &  34.0  &  24.9\%  &  25.2\% \\ \hline
            Diabetes130US  &  10.8  &  57.1\%  &  57.4\% \\ \hline
            poker\_hand  &  10.6  &  51.3\%  &  53.2\% \\ \hline
            SantanderCustomerSatisfaction  &  18.4  &  89.0\%  &  89.8\% \\ \hline
		\end{tabular}%
	\end{center}
\end{table}
}

%###################################################################
%###################################################################
\remove{ %return the last model
\newcommand{\addDecTreeTCTvsAL}{\includegraphics[width=19em]{appendix/TCT_vs_AL/DecisionTree-Alldatasets.png}}

\newcommand{\addLGBMTCTvsAL}{\includegraphics[width=19em]{appendix/TCT_vs_AL/LGBM-All datasets.png}}

\newcommand{\addLogRegTCTvsAL}{\includegraphics[width=19em]{appendix/TCT_vs_AL/LogisticRegression-Alldatasets.png}}

\newcommand{\addRandForestTCTvsAL}{\includegraphics[width=19em]{appendix/TCT_vs_AL/RandomForest-Alldatasets.png}}

\begin{figure}
\begin{center}
%\begin{tabular}{l*2{C}@{}}
\begin{tabular}{lcc}
%\toprule
& \addDecTreeTCTvsAL  & \addLogRegTCTvsAL    \\ 
&  \addRandForestTCTvsAL & \addLGBMTCTvsAL  \\

%\bottomrule 
\end{tabular}
\end{center}
\caption{Average accuracies on testing set along normalized time for TCT and TCT$_{AL}$. The numbers next to the labels are their average accuracies at the last normalized time limit $t=1$.}
\label{fig:tct-vs-al-appendix}
\end{figure}

\begin{table}[]
	\caption{Logistic Regression accuracies in the testing sets  for \TCT and {\tt TCT$_{AL}$}.}
	\label{tab:logistic-regression-al-accuracies}
	\begin{center}
		\begin{tabular}{c|c|cc}
			\textbf{Dataset} & \textbf{Time Limit} & \textbf{TCT} & \textbf{TCT$_{AL}$}  \\ \hline \hline			
			BayesianNetworkGenerator\_spambase  &  45.6  &  66.4\%  &  66.5\% \\ \hline
            BNG\_eucalyptus  &  48.3  &  57.8\%  &  58.2\% \\ \hline
            BNG\_letter\_5000\_1  &  45.1  &  45.6\%  &  46.0\% \\ \hline
            BNG\_mfeat\_fourier  &  80.5  &  84.4\%  &  84.7\% \\ \hline
            BNG\_satimage  &  26.3  &  83.8\%  &  83.8\% \\ \hline
            BNG\_spectf\_test  &  10.6  &  76.9\%  &  78.4\% \\ \hline
            cifar\_10  &  938.8  &  37.1\%  &  37.8\% \\ \hline
            covtype  &  110.7  &  68.2\%  &  68.4\% \\ \hline
            Diabetes130US  &  350.7  &  58.6\%  &  58.9\% \\ \hline
            GTSRB-HueHist  &  237.7  &  26.3\%  &  26.2\% \\ \hline
            jannis  &  12.9  &  63.7\%  &  65.0\% \\ \hline
            mnist  &  266.1  &  91.6\%  &  92.1\% \\ \hline
            poker\_hand  &  526.5  &  48.1\%  &  51.3\% \\ \hline
            Sensorless\_drive\_diagnosis  &  16.9  &  83.9\%  &  82.9\% \\ \hline
            volkert  &  48.9  &  57.4\%  &  58.0\% \\ \hline
		\end{tabular}%
	\end{center}
\end{table}

\begin{table}[]
	\caption{Random Forest accuracies in the testing sets  for \TCT and {\tt TCT$_{AL}$}.}
	\label{tab:logistic-regression-al-accuracies}
	\begin{center}
		\begin{tabular}{c|c|cc}
			\textbf{Dataset} & \textbf{Time Limit} & \textbf{TCT} & \textbf{TCT$_{AL}$}  \\ \hline \hline			
			aloi  &  25.1  &  81.1\%  &  72.0\% \\ \hline
            BayesianNetworkGenerator\_spambase  &  517.3  &  66.6\%  &  66.6\% \\ \hline
            BNG\_eucalyptus  &  194.2  &  69.2\%  &  69.8\% \\ \hline
            BNG\_letter\_5000\_1  &  264.6  &  71.9\%  &  73.5\% \\ \hline
            BNG\_mfeat\_fourier  &  889.9  &  88.8\%  &  89.5\% \\ \hline
            BNG\_satimage  &  864.6  &  89.7\%  &  89.0\% \\ \hline
            BNG\_spectf\_test  &  829.8  &  79.7\%  &  79.6\% \\ \hline
            BNG\_wine  &  289.5  &  95.7\%  &  95.5\% \\ \hline
            cifar\_10  &  133.8  &  42.9\%  &  43.4\% \\ \hline
            covtype  &  63.8  &  92.9\%  &  91.6\% \\ \hline
            Diabetes130US  &  69.4  &  58.5\%  &  58.9\% \\ \hline
            GTSRB-HueHist  &  35.0  &  40.0\%  &  42.9\% \\ \hline
            jannis  &  35.5  &  69.6\%  &  70.0\% \\ \hline
            MiniBooNE  &  62.8  &  93.9\%  &  93.1\% \\ \hline
            mnist  &  40.3  &  96.3\%  &  96.3\% \\ \hline
            poker\_hand  &  189.8  &  91.4\%  &  70.7\% \\ \hline
            SantanderCustomerSatisfaction  &  460.7  &  90.6\%  &  90.0\% \\ \hline
            Sensorless\_drive\_diagnosis  &  14.9  &  99.8\%  &  99.7\% \\ \hline
            vehicle\_sensIT  &  77.9  &  86.9\%  &  86.8\% \\ \hline
            volkert  &  20.4  &  62.4\%  &  63.6\% \\ \hline
		\end{tabular}%
	\end{center}
\end{table}

\begin{table}[]
	\caption{LGBM accuracies in the testing sets  for \TCT and {\tt TCT$_{AL}$}.}
	\label{tab:logistic-regression-al-accuracies}
	\begin{center}
		\begin{tabular}{c|c|cc}
			\textbf{Dataset} & \textbf{Time Limit} & \textbf{TCT} & \textbf{TCT$_{AL}$}  \\ \hline \hline			
		aloi  &  982.4  &  12.6\%  &  12.7\% \\ \hline
        BNG\_eucalyptus  &  57.6  &  74.9\%  &  75.3\% \\ \hline
        BNG\_letter\_5000\_1  &  162.4  &  74.1\%  &  75.4\% \\ \hline
        BNG\_mfeat\_fourier  &  279.5  &  93.7\%  &  93.2\% \\ \hline
        BNG\_satimage  &  83.7  &  92.2\%  &  91.8\% \\ \hline
        BNG\_spectf\_test  &  18.5  &  82.6\%  &  82.4\% \\ \hline
        BNG\_wine  &  20.7  &  95.9\%  &  95.8\% \\ \hline
        cifar\_10  &  977.3  &  42.6\%  &  40.6\% \\ \hline
        covtype  &  18.1  &  85.7\%  &  86.5\% \\ \hline
        GTSRB-HueHist  &  270.7  &  34.0\%  &  33.8\% \\ \hline
        mnist  &  109.0  &  96.4\%  &  96.1\% \\ \hline
        poker\_hand  &  54.7  &  73.0\%  &  75.3\% \\ \hline
        SantanderCustomerSatisfaction  &  23.0  &  91.1\%  &  90.6\% \\ \hline
        Sensorless\_drive\_diagnosis  &  14.1  &  99.3\%  &  98.9\% \\ \hline
        volkert  &  36.1  &  57.8\%  &  58.4\% \\ \hline
		\end{tabular}%
	\end{center}
\end{table}

\begin{table}[]
	\caption{Decision Tree accuracies in the testing sets  for \TCT and {\tt TCT$_{AL}$}.}
	\label{tab:logistic-regression-al-accuracies}
	\begin{center}
		\begin{tabular}{c|c|cc}
			\textbf{Dataset} & \textbf{Time Limit} & \textbf{TCT} & \textbf{TCT$_{AL}$}  \\ \hline \hline			
	        BNG\_eucalyptus  &  13.3  &  53.8\%  &  53.6\% \\ \hline
            BNG\_mfeat\_fourier  &  52.1  &  66.3\%  &  62.7\% \\ \hline
            BNG\_satimage  &  22.1  &  73.8\%  &  75.2\% \\ \hline
            BNG\_spectf\_test  &  27.1  &  77.6\%  &  78.2\% \\ \hline
            cifar\_10  &  34.0  &  24.9\%  &  25.2\% \\ \hline
            Diabetes130US  &  10.8  &  57.1\%  &  57.4\% \\ \hline
            poker\_hand  &  10.6  &  51.3\%  &  53.2\% \\ \hline
            SantanderCustomerSatisfaction  &  18.4  &  89.0\%  &  89.8\% \\ \hline
		\end{tabular}%
	\end{center}
\end{table}
}

%################################################################
%################################################################
\clearpage

 	\vspace{0.5cm}
\section{Proofs of Section \ref{sec:ProvableGuarantees}} 

	\subsection{Sending too many ``wrong examples'' is bad} \label{app:badExample}

	We construct a simple instance where the algorithm \TCTbase set with the ``wrong samples'' percentage $\alpha$ too high has very poor accuracy. 
	
	More precisely, this non-realizable instance has points $\mathcal{X} = \{1,2,\ldots,6\}$ and a hypothesis class with 4 classifiers $\mathcal{H} = \{h_1,h_2,h_3,\bar{h}\}$ that classify points as $+1$ as follows (the remaining points are classified as $-1$):
	\begin{align*}
		h_1:&~~~~ \{1,2,3,4\}\\
		h_2:&~~~~ \{1,2,5,6\}\\
		h_3:&~~~~ \{3,4,5,6\}\\
		\bar{h}:&~~~~ \{1,2,3,4,5,6\}.
	\end{align*}
	The correct classification $h^*$ classifies as $+1$ the odd points $\{1,3,5\}$. Finally, the distribution $\mu$ puts $\frac{2}{9}$ probability on each odd number and probability $\frac{1}{9}$ on each even number of $\mathcal{X}$. 

	The best classifier in $\mathcal{H}$ is $\bar{h}$, which has error $\err(\bar{h}) = \frac{1}{3}$, while all other classifiers have error $\err(h_i) = \frac{4}{9}$. 
	
	We conducted experiments with \TCTbase where it sends $\alpha=90\%$ ``wrong'' samples in each round. We ran the algorithm for 20 rounds, so at the last round it has a total of $2^{21}-1 \approx 2,000,000$ samples (there can be/are multiple copies the same sample $(x,h^*(x))$). Over 100 attempts, this algorithm only found the best classifier $\bar{h}$ (in any of its rounds) $8\%$ of the time. In contrast, \PACt with $1,000$ samples found the best classifier $100\%$ of the time. 
	
	Note that at the last round \TCTbase has $(1-\alpha)\cdot (2^{21} - 1) \approx 200,000$ unbiased samples, which is orders of magnitude larger than those used by \PACt, but still the ``wrong samples'' caused it to have a very poor performance. 
	
	Also notice that when $\alpha$ is large as in this example, the bound from Theorem \ref{thm:fallbackA} below is vacuous, due to the last error term.

	\subsection{The Agnostic Case}
	\label{app:fallbackAA}

Consider the agnostic case. Let $\e^A_T = \e^A_T(\cH,\mu,\delta)$ be such that \PACt with time limit $T$ returns a hypothesis with true error at most that of the best classifier in $\cH$ plus $\e^A_T$ with probability at least $1- \delta$ regardless of the ERM learner; more precisely, let $S$ be a set of $m_T$ random samples from $\mu$ and let $\e_T^A$ be the smallest value such that 
	\begin{align*}
		\Pr\Big(\,\sup_{h \in \cH} |\err(h) - \err_S(h)| > \frac{\e_T^A}{2}\,\Big) < \delta.   
	\end{align*}
 We then have the following guarantee.

	\begin{thm}\label{thm:fallbackA}
	Given $\delta \in (0,1)$ and time limit $T$, let $\e_T^A$ be the $(1-\delta)$-probability additional error of \PACt as defined above.

	Under assumptions (i)-(iv), in the agnostic setting, with probability at least $1- \delta$, \TCTbase returns in time at most $ T  \cdot 2(\frac{2}{1-\alpha})^{k+1}$ a classifier $h$ with additional error at most $\e^A_T + \frac{\alpha}{1-\alpha}$, namely $$\err(h) \le \min_{h' \in \cH} \err(h') ~+~\e^A_T + \frac{\alpha}{1-\alpha}.$$  
%		With probability at least $1- \delta$, after using at most $\frac{2}{1-\alpha}\, \ERM^A_\cH(\e,\delta) + 1$ samples \TCTbase returns a classifier with error at most $\e^* + \e + \frac{\alpha}{1-\alpha}$, where $\e^* = \inf_{h \in \cH} \err(h)$.  
	\end{thm}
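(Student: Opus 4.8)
The plan is to mirror the proof of Theorem~\ref{thm:fallbackR}, replacing the realizable zero-sample-error argument by a uniform-convergence estimate and inserting one extra lemma that controls the effect of the biased ``wrong'' examples on the empirical risk. First I would reuse the running-time part essentially verbatim: let $m_T$ be the number of examples \PACt sends under time limit $T$, and let $\hat i$ be the first round in which \TCTbase has collected at least $\frac{1}{1-\alpha}m_T$ examples, so that $\frac{1}{1-\alpha}m_T \le 2^{\hat i} \le \frac{2}{1-\alpha}m_T$. By assumptions (ii) and (iv), the same chain of inequalities as in \eqref{eq:thm1} shows that \TCTbase finishes round $\hat i$ within time $2\big(\tfrac{2}{1-\alpha}\big)^{k+1}T$. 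It then remains to show that the hypothesis $h_{\hat i}$ produced in round $\hat i$ has additional error at most $\e^A_T + \frac{\alpha}{1-\alpha}$ with probability at least $1-\delta$.

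For that, write the set $S$ of examples available at round $\hat i$ as a disjoint union $S = U \sqcup W$, where $U$ is made of the examples drawn directly from $\mu$ in Step~2 of the earlier rounds (together with $S_0$), and $W$ is made of the ``wrong'' examples. Two deterministic facts are used: $|U| \ge m_T$ by the choice of $\hat i$, and $\frac{|W|}{|U|} \le \frac{\alpha}{1-\alpha}$, which follows from a one-line computation using that round $i$ contributes $(1-\alpha)2^i$ unbiased and $\alpha 2^i$ wrong examples. The point is that $U$ is a collection of $|U|$ i.i.d.\ samples from $\mu$ whose composition does not depend on any hypothesis (only $W$ does), so by the definition of $\e^A_T$ --- and since uniform-deviation guarantees do not deteriorate when the sample size grows from $m_T$ up to $|U|$ --- with probability at least $1-\delta$ we have $\sup_{h \in \cH} |\err(h) - \err_U(h)| \le \e^A_T/2$. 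This is the step I expect to need the most care: $|U|$ can strictly exceed $m_T$, so one should be a little careful that the guarantee defining $\e^A_T$ still applies; the clean fixes are either to invoke the standard VC uniform-convergence inequality directly, or to simply pass down to the first $m_T$ draws of $U$ only where needed.

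Next comes the key (purely deterministic) lemma on ERM over a mixed sample. Fix $h^\star \in \arg\min_{h\in\cH}\err(h)$. Since the learner is an ERM on $S$, $\err_S(h_{\hat i}) \le \err_S(h^\star)$; writing $\err_S(h) = \frac{|U|}{|S|}\err_U(h) + \frac{|W|}{|S|}\err_W(h)$ and using $0 \le \err_W(\cdot) \le 1$ gives
\[
\frac{|U|}{|S|}\,\err_U(h_{\hat i}) \;\le\; \frac{|U|}{|S|}\,\err_U(h^\star) + \frac{|W|}{|S|},
\qquad\text{hence}\qquad
\err_U(h_{\hat i}) \;\le\; \err_U(h^\star) + \frac{|W|}{|U|} \;\le\; \err_U(h^\star) + \frac{\alpha}{1-\alpha}.
\]
Finally I would chain the estimates: on the probability-$\ge 1-\delta$ event above (randomness only over the draw of $U$),
\[
\err(h_{\hat i}) \;\le\; \err_U(h_{\hat i}) + \tfrac{\e^A_T}{2} \;\le\; \err_U(h^\star) + \tfrac{\alpha}{1-\alpha} + \tfrac{\e^A_T}{2} \;\le\; \err(h^\star) + \e^A_T + \tfrac{\alpha}{1-\alpha},
\]
which is exactly the claimed bound; together with the running-time estimate this proves the theorem. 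The only genuine obstacle is the mixed-sample ERM lemma and pinning down the right constant --- note it is $\frac{|W|}{|U|}$, not $\frac{|W|}{|S|}$, that governs the slack, which is why the extra error is $\frac{\alpha}{1-\alpha}$ rather than $\alpha$; everything else is a transcription of the realizable proof with uniform convergence in place of the zero-error argument.
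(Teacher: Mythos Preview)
Your proposal is correct and follows essentially the same approach as the paper: identical running-time computation, the same decomposition $S=U\cup W$ with $|U|\ge m_T$, uniform convergence on $U$ via the definition of $\e^A_T$, and an ERM-on-$S$ argument yielding the extra $\frac{\alpha}{1-\alpha}$. The only cosmetic difference is that the paper keeps everything in terms of $\err_S$ (writing $\err_S=(1-\alpha)\err_U+\alpha\err_W$ and sandwiching) while you first isolate the deterministic ``mixed-sample ERM'' inequality $\err_U(h_{\hat i})\le \err_U(h^\star)+|W|/|U|$ and then apply uniform convergence; the two chains unwind to the same thing, and you correctly flag the $|U|\ge m_T$ monotonicity issue that the paper passes over silently.
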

	
	\begin{proof}	
	Again let $m_T$ be the number of samples sent by $\PACt$ when the time limit is $T$.
Moreover, let $\hat{i}$ be the first round in which \TCTbase sends at least  $\frac{1}{1-\alpha} m_T$ samples,
that is, $ \frac{1}{1-\alpha} m_T  \le 2^{\hat{i}} \le \frac{2}{1-\alpha} m_T$. Again due to the assumptions (ii) and (iv), inequality \eqref{eq:thm1} shows that \TCTbase finishes round $\hat{i}$ by time $2 ( \frac{2}{1-\alpha} )^{k+1} T$.
		
	Let $S$ be the set of samples sent by \TCTbase to the Learner by the end of round $\hat{i}$, and let $\hat{h}$ be the returned hypothesis. Also let $U$ be the subset of the samples $S$ that were sampled unbiasedly from $\mu$, and $W = S \setminus U$ the remaining ones. 

Since $U$ and $W$ make up a $(1-\alpha)$- and $\alpha$-fraction of $S$ respectively, we have 
		\begin{align}
		\err_{S}(\cdot) = (1-\alpha) \, \err_{U}(\cdot) + \alpha \, \err_{W}(\cdot). \label{eq:sampleError}
		\end{align}
In addition, by definition of $\hat{i}$ we have $|U| \ge (1-\alpha) |S| \ge m_T$, and so using the definition of $\e^A_T$ we have that with probability at least $1- \delta$,  for all $h \in \cH$ $$|\err(h) - \err_U(h)| \le \frac{\e^A_T}{2};$$ in particular, in light of \eqref{eq:sampleError}, for all $h \in \cH$
		\begin{align*}
		\err_{S}(h) \le (1-\alpha)\, \bigg[\err(h) + \frac{\e^A_T}{2}\bigg] + \alpha~~~~\textrm{ and }~~~~~ \err_{S}(h) \ge (1-\alpha)\, \bigg[\err(h) - \frac{\e^A_T}{2}\bigg].
		\end{align*}	
	Under this event, the classifier $\hat{h}$ returned by the ERM learner satisfies the following bound against every $h \in \cH$:
		\begin{align*}
			(1-\alpha)\, \err(\hat{h}) \le 		\err_{S}(\hat{h}) + (1-\alpha) \frac{\e^A_T}{2} \le \err_{S}(h) + (1-\alpha) \frac{\e^A_T}{2} \le (1-\alpha) \err(h) + (1-\alpha) \e^A_T + \alpha,
		\end{align*}
		and so taking an infimum over $h \in \cH$ we get $$err(\hat{h}) \le \min_{h' \in \cH} \err(h') + \e^A_T + \frac{\alpha}{1-\alpha}.$$ This concludes the proof. 	
	\end{proof}
	
%###########################################################
%###########################################################
%###########################################################

%	Notice that when $\alpha$ is large as in the example from Appendix  \ifSUPP \ref{app:badExample} \else A \fi this bound does not kick in, due to the last error term.  \red{[M: Maybe remove this paragraph?]}
	
	\subsection{Proof of Theorem \ref{thm:expImprov}} \label{app:expImprov}
	
	To prove this result we will need to use martingales. Recall that a sequence of random variables $X_1,\ldots,X_n$ is a \emph{martingale difference sequence} if $\E[X_i \mid X_1,\ldots,X_{i-1}] = 0$ for all $i$. We need the classic Freedman's Inequality for martingales. 
	
	\begin{thm}[Theorem 1.6 of \cite{freedman}]
		Consider a martingale difference sequence $X_1,\ldots,X_n$ such that $X_i \le 1$, and its predictable quadratic variation $V := \sum_i \E[X_i^2 \mid X_1, \ldots,X_{i-1}]$. Then for any $\lambda \ge 0$ and $v > 0$
		\begin{align*}
			\Pr\bigg(\sum_{i \le n} X_i \ge \lambda ~\textrm{ and }~ V \le v\bigg) \le \bigg(\frac{v}{\lambda + v} \bigg)^{\lambda + v} \,e^{\lambda}.
		\end{align*} 
	\end{thm}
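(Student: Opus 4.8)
The plan is to use the classical exponential-supermartingale (Bernstein--Chernoff) method, but adapted to the fact that the predictable quadratic variation $V$ is itself random. Fix a parameter $\theta \ge 0$ (to be optimized at the very end) and write $g(\theta) := e^\theta - 1 - \theta$. I would study the process
$$M_k := \exp\!\Big(\theta \textstyle\sum_{i \le k} X_i \;-\; g(\theta) \sum_{i \le k} \E[X_i^2 \mid X_1,\ldots,X_{i-1}]\Big), \qquad M_0 := 1,$$
and show it is a supermartingale. The reason for subtracting $g(\theta)$ times the running variance inside the exponent is precisely to absorb the randomness of $V$: on the target event the running variance is bounded by $v$, so $M_n$ then admits a \emph{deterministic} lower bound, and a single application of Markov's inequality to $M_n$ delivers the tail bound.

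First I would establish the elementary inequality $e^{\theta x} \le 1 + \theta x + g(\theta)\, x^2$, valid for every $x \le 1$ and $\theta \ge 0$. This follows from the standard fact that $t \mapsto (e^t - 1 - t)/t^2$ is nondecreasing on $\R$: applying it with $t = \theta x \le \theta$ gives $(e^{\theta x} - 1 - \theta x)/(\theta x)^2 \le (e^\theta - 1 - \theta)/\theta^2$, which rearranges to the claim. Note that only the one-sided constraint $x \le 1$ is used, matching the hypothesis $X_i \le 1$. Taking conditional expectations and invoking the martingale-difference property $\E[X_k \mid X_1,\ldots,X_{k-1}] = 0$ yields
$$\E\big[e^{\theta X_k} \mid X_1,\ldots,X_{k-1}\big] \le 1 + g(\theta)\,\E[X_k^2 \mid X_1,\ldots,X_{k-1}] \le \exp\!\big(g(\theta)\,\E[X_k^2 \mid X_1,\ldots,X_{k-1}]\big),$$
where the last step uses $1 + y \le e^y$. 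Since $M_k = M_{k-1}\, e^{\theta X_k}\, e^{-g(\theta)\E[X_k^2 \mid X_1,\ldots,X_{k-1}]}$ and the factor $M_{k-1}\,e^{-g(\theta)\E[X_k^2 \mid X_1,\ldots,X_{k-1}]}$ depends only on $X_1,\ldots,X_{k-1}$, this gives $\E[M_k \mid X_1,\ldots,X_{k-1}] \le M_{k-1}$. Hence $(M_k)_{k \ge 0}$ is a supermartingale and $\E[M_n] \le M_0 = 1$.

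Finally I would convert this into the stated bound. Because $\theta \ge 0$ and $g(\theta) \ge 0$, on the event $\{\sum_{i \le n} X_i \ge \lambda \text{ and } V \le v\}$ the two constraints force $M_n \ge \exp(\theta\lambda - g(\theta)v)$; Markov's inequality then gives
$$\Pr\Big(\textstyle\sum_{i \le n} X_i \ge \lambda \text{ and } V \le v\Big) \le e^{-\theta\lambda + g(\theta)v}\,\E[M_n] \le e^{-\theta\lambda + g(\theta)v}.$$
It remains to minimize $-\theta\lambda + (e^\theta - 1 - \theta)v$ over $\theta \ge 0$. Differentiating and setting to zero gives $e^{\theta} = 1 + \lambda/v$, i.e. $\theta^\star = \ln(1 + \lambda/v) \ge 0$; substituting back, the exponent becomes $\lambda - (\lambda+v)\ln(1 + \lambda/v)$, which exponentiates to exactly $\big(\tfrac{v}{\lambda+v}\big)^{\lambda+v} e^{\lambda}$.

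The main obstacle is conceptual rather than computational: a naive Chernoff argument fails here because $V$ is random, so one cannot simply bound the moment generating function of $\sum_i X_i$ by $e^{g(\theta)v}$ outright. The crux is to fold $V$ \emph{into} the supermartingale $M_k$, so that conditioning on the event reduces the tail to a deterministic threshold comparison; after that, the only genuinely technical ingredient is the one-step supermartingale inequality, which is driven entirely by the elementary bound on $e^{\theta x}$ for $x \le 1$, and the closing optimization over $\theta$ is routine calculus.
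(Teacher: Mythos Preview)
The paper does not prove this statement: it is quoted verbatim as Theorem~1.6 of \cite{freedman} and used as a black box in the proof of Theorem~\ref{thm:expImprov}. So there is no ``paper's own proof'' to compare against.

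That said, your argument is correct and is essentially the classical exponential--supermartingale proof that appears in Freedman's original paper. The three ingredients --- the pointwise bound $e^{\theta x} \le 1 + \theta x + (e^\theta - 1 - \theta)x^2$ for $x \le 1$ (via the monotonicity of $t \mapsto (e^t - 1 - t)/t^2$), the resulting supermartingale $M_k$, and the closing optimization $\theta^\star = \ln(1+\lambda/v)$ --- are exactly the standard ones, and your handling of the random $V$ by folding it into the exponent of $M_k$ is the right way to avoid the pitfall you flag. Nothing is missing.
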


	\begin{proof}[Proof of Theorem \ref{thm:expImprov}]
It suffices to prove that $$\textstyle T_{\PACt} \ge \Big(\Omega(\frac{1}{\e} \log \frac{1}{\delta})\Big)^k$$ and $$\textstyle T_{\TCTbase} \le \Big(2^{O(\sqrt{\log 1/\e})} \cdot \log \frac{1}{\delta} \cdot \big(\frac{1}{\alpha}\big)^{O(1)}\Big)^{k+1}.$$

		For the lower bound on $T_{\PACt}$, standard sample complexity lower bound for statistical learning threshold functions (for example Theorem 5.3 of \cite{anthonyBartlett}) says that there is a realizable instance where $\Omega(\frac{1}{\e} \log \frac{1}{\delta})$ samples are required by the \learner to obtain error at most $\e$ with probability at least $1-\delta$. Thus, given assumption (ii), the time limit needs to be at least 
		\begin{align*}
		 \textstyle	T_{\PACt} \ge \Big( \Omega(\frac{1}{\e} \log \frac{1}{\delta}) \Big)^k \cdot f\Big( \Omega(\frac{1}{\e} \log \frac{1}{\delta}) \Big) \ge \Big( \Omega(\frac{1}{\e} \log \frac{1}{\delta}) \Big)^k
		\end{align*} 
		to allow the \learner to train with these many samples. 
		
		We now upper bound $T_{\TCTbase}$ by first seeing how many examples and rounds are required to attain error $\e$ with probability at least $1-\delta$. Recall that $S_i$ is the set of examples that \TCTbase sends to the learner in round $i$, and $h_i$ is the hypothesis obtained in return. We again use the notation that $U_i \subseteq S_i$ is the set of samples up to the beginning of round $i$ that were drawn unbiasedly from $\mu$, and $W_i \subseteq S_i$ is the set ``wrong examples'' drawn thus far. We use $\mu_x$ to denote the marginal of $\mu$ on $\cX = \R$. 

		Let $v^* \in \R$ be the correct threshold for the given instance. Let $I_i \subseteq \R$ be the maximal interval containing $v^*$ that contains none of the examples $S_i$ (the ``uncertainty region'' at this time). Define $E_i \subseteq \R$ as the points where $h_i$'s classification is incorrect. Notice that this region is an interval that is either ``to the left'' or ``to the right'' of $v^*$ (depending whether the threshold used in $h_i$ is to the left or to the right of $v^*$). Also notice that $E_i$ is contained in $I_i$: since we are in a realizable instance, the rightmost sample in $S_i$ to the left of $v^*$ (which is the starting point of the open interval $I_i$) forces the ERM \textsc{Learner} to classify all points before it correctly as $-1$, and similarly to the points after the end of the open interval $I_i$.

	We define the ``left weight'' $\Lw(I)$ of the interval $I$ to be the amount of $\mu_x$-mass in the part of the interval that is to the left of $v^*$, i.e., $\Lw(I) = \mu_x(I \cap (-\infty,v^*])$. Similarly, define the ``right weight'' $\Rw(I) = \mu_x(I \cap [v^*,\infty))$. The following claim is the basis of the ``automatic binary search'' idea and says that with good probability in each round we significantly reduce the left/right-weight of the uncertainty region. 
	
	\begin{claim} \label{claim:weightLoss}
		For each realization of the samples drawn before round $i$ where the error region $E_i$ is to the left of $v^*$, with probability at least $1 - e^{-2^{3i/4}}$ (with respect to the samples drawn at round $i$) we have $$\Lw(I_{i+1}) \le \frac{\Lw(I_i)}{\alpha 2^{i/4}}.$$ Similarly, if the error region $E_i$ is to the right of $v^*$, with probability at least $1 - e^{-2^{3i/4}}$ we have $$\Rw(I_{i+1}) \le \frac{\Rw(I_i)}{\alpha 2^{i/4}}.$$
	\end{claim}
	
	\begin{proof}
	  We only prove the first statement, the second being analogous. Fix a realization of the samples up to the beginning of round $i$ where $E_i$ is to the left of $v^*$. By construction, at round $i$ the algorithm takes $\alpha 2^i$ samples from the distribution $\mu_x$ conditioned to being in the set $E_i$, call them $X_1,\ldots,X_{\alpha 2^i}$. Let $\bar{v}$ be the point such that the interval $[\bar{v},v^*]$ has $\mu_x$-measure $\frac{1}{\alpha 2^{i/4}}\cdot \mu_x(E_i)$. The probability that none of the samples $X_i$ lands in this interval is $\big(1-\frac{1}{\alpha 2^{i/4}}\big)^{\alpha 2^i} \le e^{-2^{3i/4}}$, so with probability at least $1 - e^{-2^{3i/4}}$ one of these samples lands in $[\bar{v},v^*]$. When this happens, the next uncertainty $I_{i+1}$ set starts at/after the point $\bar{v}$, and so its left weight satisfies $$\Lw(I_{i+1}) \le \mu_x([\bar{v}, v^*]) = \frac{1}{\alpha 2^{i/4}}\, \mu_x(E_i) \le \frac{1}{\alpha 2^{i/4}}\, \Lw(I_i),$$ where the last inequality is because $E_i \subseteq I_i$ and thus (since $E_i$ is to the left of $v^*$) $\mu_x(E_i) \le \mu_x(I_i \cap (-\infty,v^*]) = \Lw(I_i)$. This gives the desired result. 
	 \end{proof}
	
	Let $R := cst \cdot (\log \frac{1}{\alpha} + \sqrt{\log \frac{1}{\e}} + 1) + \log \log \frac{1}{\delta}$, for a sufficiently large constant $cst$. Using the above claim, we show that with probability at least $1-\delta$ the error set $E_{R+1}$ at round $R+1$ has $\mu_x$-measure at most $\e$, i.e. $h_{R+1}$ has error at most $\e$. Let $B_i$ be the indicator of the bad event that the weight reduction prescribed by the previous claim did not happened at round $i$. Then $\E[B_i \mid B_1, \ldots, B_{i-1}] \le e^{- 2^{3i/4}}$. Moreover, using Freedman's Inequality we have the following tail bound.
	
	\begin{claim} \label{claim:concentration}
	\begin{align*}
	\Pr\bigg(\sum_{i = 2R/3}^{R} B_i \ge \frac{R}{6}~\bigg) \le e^{-2^{R/2}} \le \delta,
	\end{align*}
	\end{claim}
	
	\begin{proof}
		Define $\widetilde{B}_i := B_i - \E[B_i \mid B_1,\ldots,B_{i-1}]$, so that the sequence $\widetilde{B}_{2R/3},\ldots,\widetilde{B}_R$ is a martingale difference sequence. Moreover, since $B_i$ only takes value 0 or 1, we have $|\widetilde{B}_i| \le 1$, and so $$(\widetilde{B}_i)^2 \,\le\, |\widetilde{B}_i| \,\le\, B_i + \E[B_i \mid B_1,\ldots,B_{i-1}].$$ From Claim \ref{claim:weightLoss}, for $i \ge \frac{2R}{3}$ we have 
	\begin{align*}
		&\E[B_i \mid B_1, \ldots, B_{i-1}] \le e^{-2^{3i/4}} \le e^{-2^{R/2}}\\
		\textrm{and so }~ &\E\big[\widetilde{B}_i^2 \mid \widetilde{B}_{2R/3},\ldots,\widetilde{B}_{i-1}] \le 2e^{-2^{R/2}}.
	\end{align*}  
	Then $v := 2 \cdot \frac{2R}{3} \cdot e^{-2^{R/2}}$ is an upper bound for both the shifts introduced in $\widetilde{B}_i$ and the predictable quadratic variation with probability 1:
	\begin{align*}
		&\textstyle \sum_{i = 2R/3}^R \E[B_i \mid B_1, \ldots, B_{i-1}] \le v\\
		\textrm{and }~&\textstyle \sum_{i = 2R/3}^R ~ \E\big[\widetilde{B}_i^2 \mid \widetilde{B}_{2R/3},\ldots,\widetilde{B}_{i-1}] \le v.
	\end{align*}  
	Then Freedman's Inequality gives
	\begin{align*}
		\Pr\bigg(\sum_{i = 2R/3}^R B_i \ge \frac{R}{6} \bigg) \le \Pr\bigg(\sum_{i = 2R/3}^R \widetilde{B}_i \ge \frac{R}{6} - v \bigg) \le \bigg(\frac{v}{R/6}\bigg)^{R/6} e^{R/6} &\le (8e \cdot e^{-2^{R/2}})^{R/6} \\
		&\le e^{-2^{R/2}},
	\end{align*} 
	where the last inequality uses the fact that $R \ge 10$ (by setting $cst$ large enough). Since $R \ge \log \log \frac{1}{\delta} + \sqrt{\log \frac{1}{\e}}$ and by assumption $\e \le \delta$, 
	%\mnote{Say at some point that all logs are base 2? Does not really matter}
	 we have $e^{-2^{R/2}} \le e^{-2^{\log \log 1/\delta}} \le \delta$. This proves the claim.	
	\end{proof}
		
	It then suffice to show that whenever $\sum_{i = 2R/3}^{R} B_i < \frac{R}{6}$, we have $\mu_x(E_R) \le \e$. So fix a scenario satisfying the former and assume by contradiction that $\mu_x(E_R) > \e$. Define the subsets $G,L,R$ of the rounds $\{1,\ldots,R\}$ as follows: $G$ is the set of ``good'' indices $i$ such that $B_i = 0$, $L$ the set of indices where $E_i$ is to the left of $v^*$, and $R$ the set of indices where $E_i$ is to the right of $v^*$. 
	
	Then $L \cap G$ are the rounds where the reduction prescribed by Claim \ref{claim:weightLoss} happened on the left weight of $I_i$ and $R \cap G$ where it happened on its right weight. Moreover, the sets $I_1, I_2,\ldots$ are monotonically decreasing, so even for the rounds outside of the good set $G$ the left and right weights of $I_i$ only decrease over time. Then
	\begin{align*}
	 	\Lw(I_{R+1}) ~\le~  \Lw(\R) \cdot \prod_{i \in L \cap G} \frac{1}{\alpha 2^{i/4}} ~\le~ \bigg(\frac{1}{\alpha}\bigg)^R \cdot \frac{1}{2^{\sum_{i \in L \cap G} i/4}}.
	\end{align*}
	Combining with the fact $\Lw(I_{R+1}) \ge \Lw(E_{R+1}) \ge \mu_x(E_{R+1}) > \e$ and laking logs, this implies 
	\begin{align*}
		\sum_{i \in L \cap G} i \,\le\, 4 R \log \frac{1}{\alpha} + 4 \log \frac{1}{\e}.
	\end{align*}
	The same inequality holds with $L$ replaced by $R$, and adding these inequalities gives
	\begin{align}
		\sum_{i \in (L \cup R) \cap G} i \,\le\, 8 R \log \frac{1}{\alpha} + 8\log \frac{1}{\e}. \label{eq:contra}
	\end{align}	
	
	By the assumption of the scenario, we know that at least $\frac{R}{3} - \frac{R}{6} = \frac{R}{6}$ of the rounds in the interval $\{\frac{2R}{3},\ldots,R\}$ are in the good set $G$, that is, $|G| \ge \frac{R}{6}$. Then 
%M: Since we assumed $\mu$ is continuous and $\mu(E_i) > 0$ for all $i \le T$, $L \cup R = [T]$
	%
	\begin{align*}
		\sum_{i \in (L \cup R) \cap G} i \,=\, \sum_{i \in G} i \,\ge\, \sum_{i = 1}^{|G|} i \ge \sum_{i = 1}^{R/6} i\,\ge\, \frac{R^2}{72}.
	\end{align*}	
	But since $R \ge cst \cdot (\log \frac{1}{\alpha} + \sqrt{\log \frac{1}{\e}})$ for a sufficiently large constant $cst$, this contradicts inequality \eqref{eq:contra} ($cst = 8 \cdot 72$ suffices, but the constants throughout have not been optimized). 
	
	So with probability at least $1-\delta$, by round $R$ the algorithm obtains a classifies with error at most $\e$. Due to the assumptions (ii) and (iv), the same development as in inequality \eqref{eq:thm1} shows that this round finishes by time 
	\begin{align*}
		\sum_{i=1}^R \Big((2^i)^k \cdot f(2^i)\Big) \le 2 f(2^R) 2^{Rk} \le 2^{R(k+1) + 1}.
	\end{align*}
	Using the value of $R$, this time is at most $$\textstyle \Big(2^{O(\sqrt{\log 1/\e})} \cdot \log \frac{1}{\delta} \cdot \big(\frac{1}{\alpha}\big)^{O(1)}\Big)^{k+1},$$ which then concludes the proof. 
	\end{proof}

\end{document}